\def\eqref#1{equation~\ref{#1}}
\def\ceil#1{\lceil #1 \rceil}
\def\1{\bm{1}}
\DeclareMathAlphabet{\mathsfit}{\encodingdefault}{\sfdefault}{m}{sl}
\SetMathAlphabet{\mathsfit}{bold}{\encodingdefault}{\sfdefault}{bx}{n}
\algnewcommand\INPUT{\item[\textbf{Input:}]}
\algnewcommand\OUTPUT{\item[\textbf{Output:}]}
\newcommand{\mcs}{\mathcal S}
\newcommand{\mca}{\mathcal A}
\newcommand{\mcf}{\mathcal F}
\newcommand{\mcpi}{{\rm \Pi}}
\newcommand{\ltwo}[1]{\left\|#1\right\|_2}
\newcommand{\lone}[1]{\left|#1\right|}
\newcommand{\lF}[1]{\left\|#1\right\|_F}
\newcommand{\lTV}[1]{\left\|#1\right\|_{TV}}
\newcommand{\mbE}{\mathbb{E}}
\newcommand{\mbP}{\mathbb{P}}
\newcommand{\mbR}{\mathbb{R}}
\newtheorem{theorem}{Theorem}
\newtheorem{corollary}{Corollary}
\newtheorem{lemma}{Lemma}
\newtheorem{assumption}{Assumption}
\newcommand*{\belowrulesepcolor}[1]{%
	\noalign{%
		\kern-\belowrulesep
		\begingroup
		\color{#1}%
		\hrule height\belowrulesep
		\endgroup
	}%
}
\newcommand*{\aboverulesepcolor}[1]{%
	\noalign{%
		\begingroup
		\color{#1}%
		\hrule height\aboverulesep
		\endgroup
		\kern-\aboverulesep
	}%
}
\title{Reanalysis of Variance Reduced Temporal Difference Learning}
\author{Tengyu Xu$^\dag$, Zhe Wang$^\dag$, Yi Zhou$^\S$, Yingbin Liang$^\dag$\\
	$^\dag$ Department of ECE, The Ohio State University ,Columbus, OH 43210, USA \\
	$^\S$ Department of ECE, The University of Utah, Salt Lake City, UT 84112, USA\\ 
	\texttt{xu.3260@osu.edu,wang.10982@osu.edu,yi.zhou@utah.edu,liang.889@osu.edu} \\
}
\begin{document}
\renewcommand{\arraystretch}{1} 
\definecolor{LightCyan}{rgb}{0.88,1,1}

\maketitle

\begin{abstract}
Temporal difference (TD) learning is a popular algorithm for policy evaluation in reinforcement learning, but the vanilla TD can substantially suffer from the inherent optimization variance. A variance reduced TD (VRTD) algorithm was proposed by \cite{korda2015td}, which applies the variance reduction technique directly to the online TD learning with Markovian samples. In this work, we first point out the technical errors in the analysis of VRTD in \cite{korda2015td}, and then provide a mathematically solid analysis of the non-asymptotic convergence of VRTD and its variance reduction performance. We show that VRTD is guaranteed to converge to a neighborhood of the fixed-point solution of TD at a linear convergence rate. Furthermore, the variance error (for both i.i.d.\ and Markovian sampling) and the bias error (for Markovian sampling) of VRTD are significantly reduced by the batch size of variance reduction in comparison to those of vanilla TD. As a result, the overall computational complexity of VRTD to attain a given accurate solution outperforms that of TD under Markov sampling and outperforms that of TD under i.i.d.\ sampling for a sufficiently small conditional number.
\end{abstract}

\vspace{-0.3cm}
\section{Introduction}
\vspace{-0.05cm}
In reinforcement learning (RL), policy evaluation aims to obtain the expected long-term reward of a given policy and plays an important role in identifying the optimal policy that achieves the maximal cumulative reward over time \cite{bertsekas1995neuro, dayan1992q, rummery1994line}. The temporal difference (TD) learning algorithm, originally proposed by \cite{sutton1988learning}, is one of the most widely used policy evaluation methods, which uses the Bellman equation to iteratively bootstrap the estimation process and continually update the value function in an incremental way. In practice, if the state space is large or infinite, function approximation is often used to find an approximate value function efficiently. Theoretically, TD with linear function approximation has been shown to converge to the fixed point solution with i.i.d.\ samples and Markovian samples in \cite{sutton1988learning, tsitsiklis1996analysis}. The finite sample analysis of TD has also been studied in \cite{bhandari2018finite, dalal2018finite, cai2019neural,srikant2019finite}. 

%In the off-policy setting, TD($\lambda$) is not stable and may diverge, thus stochastic gradient based approaches, such as GTD, GTD2 \cite{sutton2008convergent} and TDC \cite{sutton2009fast, maei2011gradient} are preferred and the non-asymptotic convergence behavior of these algorithms has also been characterized in \cite{wang2017finite, srikant2019finite, liu2015finite, dalal2017finite}.

% problem, an agent observes the current state, takes an action according to a certain policy $\pi$, receives a reward from the environment, and transits to a next state. The goal of the RL agent is to find policies that attain optimal cumulative rewards over time, and a key requirement in this process is to evaluate the expected long-term reward of the current policy for all states, which is defined as the value function $V^\pi(s)$. The value function provide important information for the agent to optimize its policy, such that more valuable states are visited more often \cite{bertsekas1995neuro, dayan1992q, rummery1994line}. Policy evaluation aims to accurately estimate the value function under a given policy, and

Since each iteration of TD uses one or a mini-batch of samples to estimate the mean of the pseudo-gradient \footnote{We call the increment in each iteration of TD as "pseudo-gradient" because such a quantity is not a gradient of any objective function, but the role that it serves in the TD algorithm is analogous to the role of the gradient in the gradient descent algorithm.}, TD learning usually suffers from the inherent {\em variance}, which substantially degrades the convergence accuracy. Although a diminishing stepsize or very small constant stepsize can reduce the variance \cite{bhandari2018finite,srikant2019finite}, they also slow down the convergence significantly. 

Two approaches have been proposed to reduce the variance. The first approach is the so-called batch TD, which takes a fixed sample set and transforms the empirical mean square projected Bellman error (MSPBE) into an equivalent convex-concave saddle-point problem \cite{du2017stochastic}. Due to the finite-sample nature of such a problem, stochastic variance reduction techniques for conventional optimization can be directly applied here to reduce the variance. In particular, \cite{du2017stochastic} showed that SVRG \cite{johnson2013accelerating} and SAGA \cite{defazio2014saga} can be applied to improve the performance of batch TD algorithms, and \cite{peng2019svrg} proposed two variants of SVRG to further save the computation cost. However, the analysis of batch TD does not take into account the statistical nature of the training samples, which are generated by a MDP. Hence, there is no guarantee of such obtained solutions to be close to the fixed point of TD learning.

The second approach is the so-called TD with centering (CTD) algorithm proposed in \cite{korda2015td}, which introduces the variance reduction idea to the original TD learning algorithm. For the sake of better reflecting its major feature, we refer to CTD as Variance Reduced TD (VRTD) throughout this paper. Similarly to the SVRG in \cite{johnson2013accelerating}, VRTD has outer and inner loops. The beginning of each inner-loop (i.e. each epoch) computes a batch of sample pseudo-gradients so that each subsequent inner loop iteration modifies only one sample pseudo-gradient in the batch pseudo-gradients to reduce the variance. The main difference between VRTD and batch TD is that VRTD applies the variance reduction directly to TD learning rather than to a transformed optimization problem in batch TD. Though \cite{korda2015td} empirically verified that VRTD has better convergence accuracy than vanilla TD learning, some technical errors in the analysis in \cite{korda2015td} have been pointed out in follow up studies \cite{dalal2018finite, narayanan2017finite}. Furthermore, as we discuss in Section \ref{sec: algorithm}, the technical proof in \cite{korda2015td} regarding the convergence of VRTD also has technical errors so that their results do not correctly characterize the impact of variance reduction on TD learning. Given the recent surge of interest in the finite time analysis of the vanilla TD \cite{dalal2018finite,bhandari2018finite,dalal2017finite,srikant2019finite}, it becomes imperative to reanalyze the VRTD and accurately understand whether and how variance reduction can help to improve the convergence accuracy over vanilla TD. Towards this end, this paper specifically addresses the following central questions.
\vspace{-0.1cm}
%The aim of this paper is to provide the first analysis of the performance of VRTD addressing the following still open questions.
\begin{list}{$\bullet$}{\topsep=0.ex \leftmargin=0.15in \rightmargin=0.in \itemsep =0.01in}
\item For i.i.d.\ sampling, it has been shown in \cite{dalal2018finite,bhandari2018finite} that vanilla TD converges only to a neighborhood of the fixed point for a constant stepsize and suffers from a constant error term caused by the variance of the stochastic pseudo-gradient at each iteration. For VRTD, does the variance reduction help to reduce such an error and improve the accuracy of convergence? How does the error depend on the variance reduction parameter, i.e., the batch size for variance reduction?
\vspace{-0.1cm}
\item For Markovian sampling, it has been shown in \cite{bhandari2018finite} that the convergence of vanilla TD further suffers from a bias error due to the correlation among samples in addition to the variance error as in i.i.d.\ sampling. Does VRTD, which was designed to have reduced variance, also enjoy reduced bias error? If so, how does the bias error depend on the batch size for variance reduction?
\vspace{-0.1cm}
\item For both i.i.d.\ and Markovian sampling, to attain an $\epsilon$-accurate solution, what is the overall computational complexity (the total number of computations of pseudo-gradients) of VRTD, and does VRTD have a reduced overall computational complexity compared to TD?
%with the order $\mathcal{O}(\alpha)$ (where $\alpha$ depends on \cite{bhandari2018finite, srikant2019finite}. Thus, a small stepsize $\alpha$ is required to have a small error, which will slow down the convergence rate significantly.
	
%	\item {\em It is important to theoretically understand the non-asymptotic behavior of CTD and explain its superior performance than vanilla TD learning. Existing analysis in \cite{korda2015td} is not correct and does not characterize how the variance is reduced by the update estimator of CTD, and how the bias introduced by the Markovian samples (non-i.i.d.) affects the predication accuracy. }
%	\item {\em CTD can only be applied in the Markovian sample setting. When we have access to i.i.d. samples, can we design a similar variance reduced algorithm for TD learning in this setting? How can we characterize the non-asymptotic performance of such an algorithm?}
\end{list}
%In this paper, we provide affirmative answers to the above questions.
%Since TD(0) learning tries to find a fix point of an infinity sum problem, it usually suffers from larger variance than traditional finite sum optimization problems, which may result in slow convergence speed and low accuracy. In practise, constant stepsize $\alpha_t=\alpha$ is usually preferred due to its superior convergence speed comparing with the diminishing stepsize, but the algorithm is only guaranteed to converge to a neighborhood of $\theta^*$ in expectation: $\mbE\ltwo{\theta_\infty-\theta^*}^2=\mathcal{O}(\alpha)$ \cite{bhandari2018finite, srikant2019finite}. SVRG algorithm proposed in \cite{johnson2013accelerating} reduces the variance of the stochastic gradient by constructing a unbiased gradient estimator that enable the usage of constant stepsize and is still guaranteed to converge to the stationary point. Inspired by this idea, we propose an online VRTD algorithm with i.i.d. samples and review the VRTD algorithm proposed in \cite{korda2015td} with Markovian samples.
\vspace{-0.1cm}
\subsection{Our Contributions}
\vspace{-0.05cm}
Our main contributions are summarized in Table \ref{table} and are described as follows.

For i.i.d.\ sampling, we show that a slightly modified version of VRTD (for avoiding bias error) converges linearly to a neighborhood of the fixed point solution for a constant stepsize $\alpha$, with the variance error at the order of $\mathcal{O}(\alpha/M)$, where $M$ is the batch size for variance reduction. To attain an $\epsilon$-accurate solution, the overall computational complexity (i.e., the total number of pseudo-gradient computations) of VRTD outperforms the vanilla TD algorithm \cite{bhandari2018finite} if the condition number is small.

%This clearly reduces the corresponding variance error $\mathcal{O}(\alpha)$ of vanilla TD in \cite{bhandari2018finite}. 

For Markovian sampling, we show that VRTD has the same linear convergence and the same variance error reduction over the vanilla TD \cite{dalal2018finite,bhandari2018finite} as i.i.d.\ sampling. More importantly, the variance reduction in VRTD also attains a substantially reduced bias error at the order of $\mathcal{O}(1/M)$ over the vanilla TD \cite{bhandari2018finite}, where the bias error is at the order of $\mathcal{O}(\alpha)$. 
%Therefore, vanilla TD typically needs to decrease the stepsize $\alpha$ in order to reduce the variance and bias errors, which however slows down the convergence. In contrast, VRTD can increase the batch size to reduce both errors while still keeping the stepsize at a desired constant level to maintain fast convergence, as can be observed in our experiments. 
As a result, VRTD outperforms vanilla TD in terms of the total computational complexity by a factor of $\log\frac{1}{\epsilon}$.

At the technical level, our analysis of bias error for Markovian sampling takes a different path from how the existing analysis of TD handles Markovian samples in \cite{bhandari2018finite, wang2017finite, srikant2019finite}. Due to the batch average of stochastic pseudo-gradients adopted by VRTD to reduce the variance, the correlation among samples in different epochs is eliminated. Such an analysis explicitly explains why the variance reduction step helps to further reduce the bias error.

\begin{table}[htbp]
	\vspace{-0.2cm}
	\small
	\centering 
	\caption{Comparison of performance of TD and VRTD algorithms.} 
	\vspace{-1mm}
	\begin{threeparttable}  \label{table}
		\begin{tabular}{c|c|c|c|c} 
			\toprule[0.5mm]
			{} & {Algorithm} & {Variance Error} & {Bias Error} & {Overall Complexity} \\ 
			
			\midrule[0.3mm]
			
			\multirow{2}{*}{i.i.d. sample} & TD  &  $\mathcal{O}(\alpha)$  &  NA  &  $\mathcal{O}\left(  \frac{1}{\epsilon\lambda^2_A}\log\big(\frac{1}{\epsilon}\big)  \right)$ \\  \cline{2-5}
			& VRTD & {\color{red} $\mathcal{O}(\alpha/M)$ } & NA  & ${\color{red} \mathcal{O}\left( \max\Big\{\frac{1}{\epsilon}, \frac{1}{\lambda_A^2} \Big\} \log\Big(\frac{1}{\epsilon}\Big)  \right)}$ \\ 
			
			\midrule[0.4mm]
			
			\multirow{2}{*}{Markovian sample} & TD  & $\mathcal{O}(\alpha)$ & $\mathcal{O}(\alpha)$ & $\mathcal{O}\left( \frac{1}{\epsilon\lambda^2_A}\log^2\big(\frac{1}{\epsilon}\big)  \right)$ \\  \cline{2-5}
			
			& VRTD &  {\color{red} $\mathcal{O}(\alpha/M)$ }& {\color{red} $\mathcal{O}(1/M)$} &${\color{red}\mathcal{O}\left( \max\Big\{\frac{1}{\epsilon},\frac{1}{\epsilon \lambda^2_A}\Big\} \log\left(\frac{1}{\epsilon}\right)  \right)}$  \\ 
			
			\bottomrule[0.5mm]
			
		\end{tabular}   
		\begin{tablenotes}\scriptsize
			\item  {Note: The results on the performance of TD are due to \cite{bhandari2018finite} and the results on the performance of VRTD (which are highlighted by the red color) are characterized by this work.} 
		\end{tablenotes}

		\smallskip
	\end{threeparttable}  
	\vspace{-0.5cm}
\end{table}

\subsection{Related Work}
\textbf{On-policy TD learning and variance reduction.}
On-policy TD learning aims to minimize the Mean Squared Bellman Error (MSBE) \cite{sutton1988learning} when samples are drawn independently from the stationary distribution of the corresponding MDP. The non-asymptotic convergence under i.i.d.\ sampling has been established in \cite{dalal2018finite} for TD with linear function approximation and for TD with overparameterized neural network approximation \cite{cai2019neural}. The convergence of averaged linear SA with constant stepsize has been studied in \cite{lakshminarayanan2018linear}. In the Markovian setting, the non-asymptotic convergence has been studied for on-policy TD in \cite{bhandari2018finite, karmakar2016dynamics, wang2019multistep,srikant2019finite}. \cite{korda2015td} proposed a variance reduced CTD algorithm (called VRTD in this paper), which directly applies variance reduction technique to the TD algorithm. The analysis of VRTD provided in \cite{korda2015td} has technical errors. The aim of this paper is to provide a technically solid analysis for VRTD to characterize the advantage of variance reduction.

\textbf{Variance reduced batch TD learning.}
Batch TD \cite{lange2012batch} algorithms are generally designed for policy evaluation by solving an optimization problem  on a fixed dataset. In \cite{du2017stochastic}, the empirical MSPBE is first transformed into a quadratic convex-concave saddle-point optimization problem and variance reduction methods of SVRG \cite{johnson2013accelerating} and SAGA \cite{defazio2014saga} were then incorporated into a primal-dual batch gradient method. Furthermore, \cite{peng2019svrg} applied two variants of variance reduction methods to solve the same saddle point problems, and showed that those two methods can save pseudo-gradient computation cost.

We note that due to the extensive research in TD learning, we include here only studies that are highly related to our work, and cannot cover many other interesting topics on TD learning such as asymptotic convergence of TD learning \cite{tadi2001td, hu2019characterizing}, off-policy TD learning \cite{sutton2008convergent, sutton2009fast,liu2015finite,wang2017finite,karmakar2017two}, two time-scale TD algorithms \cite{xu2019two, dalal2017finite, yu2017convergence}, fitted TD algorithms \cite{lee2019target}, SARSA \cite{zou2019finite} etc. The idea of the variance reduction algorithm proposed in \cite{korda2015td} as well as the analysis techniques that we develop in this paper can potentially be useful for these algorithms.

%{\color{red} [Should we include one paragraph on SVRG in conventional optimization?]}

%In the problem of policy evaluation with linear function approximation, on-policy TD learning try to minimize the Mean Squared Bellman Error (MSBE) \cite{sutton1988learning} while off-policy TD learning try to minimize the Meas Squared Projected Bellman Error (MSPBE) \cite{maei2011gradient, sutton2008convergent, sutton2009fast}. When samples are drawn independently from the stationary distribution of the corresponding MDP, the non-asymptotic convergence of on-policy TD with linear function approximation has been established in \cite{dalal2018finite} and TD(0) with overparameterized neural network approximation has been proved to converge to the global optimal at a sublinear rate \cite{cai2019neural}. \cite{liu2015finite} studied the off-policy GTD and GTD2 in the one-timescale update scheme by converting the objective function into a convex-concave saddle problem and \cite{dalal2017finite} provide non-asymptotic convergence for GTD, GTD2 and TDC in the two-timescale update scheme. In the Markovian setting, the non-asymptotic convergence has been studied for on-policy TD($\lambda$) in \cite{bhandari2018finite, srikant2019finite} and \cite{wang2017finite} provided the convergence rate of GTD and GTD2 using similar approach in \cite{liu2015finite}.
%and \cite{xu2019two} established the non-asymptotic analysis of two-timescale TDC.

%\input{relate.tex}
\vspace{-0.2cm}
\section{Problem Formulation and Preliminaries}
\vspace{-0.1cm}
\subsection{On-policy Value Function Evaluation}

We describe the problem of value function evaluation over a Markov decision process (MDP) $(\mcs, \mca, \mathsf{P},r,\gamma)$, where each component is explained in the sequel. Suppose $\mcs\subset \mbR^d$ is a compact state space, and $\mca$ is a finite action set. Consider a stationary policy $\pi$, which maps a state $s\in \mcs$ to the actions in $\mca$ via a probability distribution $\pi(\cdot|s)$. At time-step $t$, suppose the process is in some state $s_t\in \mcs$, and an action $a_t\in \mca$ is taken based on the policy $\pi(\cdot|s_t)$. Then the transition kernel $\mathsf{P}=\mathsf{P}(s_{t+1} |s_t,a_t)$ determines the probability of being at state $s_{t+1}\in \mcs$ in the next time-step, and the reward $r_t=r(s_t, a_t, s_{t+1})$ is received, which is assumed to be bounded by $r_{\max}$. We denote the associated Markov chain by $p(s^\prime|s)=\sum_{a\in\mca}p(s^\prime|s,a)\pi(a|s)$, and assume that it is ergodic. Let $\mu_\pi$ be the induced stationary distribution, i.e., $\sum_{s}p(s^\prime|s)\mu_{\pi}(s)=\mu_{\pi}(s^\prime)$. 
We define the value function for a policy $\pi$ as $v^\pi\left(s\right)=\mbE[\sum_{t=0}^{\infty}\gamma^t r(s_t,a_t, s_{t+1})|s_0=s,\pi]$, where $\gamma\in(0,1)$ is the discount factor. Define the Bellman operator $T^\pi$ for any function $\xi(s)$ as $T^\pi \xi(s)\coloneqq r^\pi(s)+\gamma\mbE_{s^\prime |s} \xi(s^\prime)$, where $r^\pi(s)=\mbE_{a, s^\prime|s}r(s,a,s^\prime)$ is the expected reward of the Markov chain induced by the policy $\pi$. It is known that $v^\pi(s)$ is the unique fixed point of the Bellman operator $T^\pi$, i.e., $v^\pi(s) = T^\pi v^\pi(s)$. In practice, since the MDP is unknown, the value function $v^\pi(s)$ cannot be directly obtained. The goal of policy evaluation is to find the value function $v^\pi(s)$ via sampling the MDP.

%We consider the problem of policy evaluation for a Markov decision process (MDP) $(\mcs, \mca, \mathsf{P},r,\gamma)$, where $\mcs\subset \mbR^d$ is a compact state space, $\mca$ is a finite action set, $\mathsf{P}=\mathsf{P}(s^\prime|s,a)$ is the transition kernel,  $r(s, a, s^\prime)$ is the reward function bounded by $r_{\max}$, and $\gamma\in(0,1)$ is the discount factor. A stationary policy $\pi$ maps a state $s\in \mcs$ to a probability distribution $\pi(\cdot|s)$ over $\mca$. At time-step $t$, suppose the process is in some state $s_t\in \mcs$. Then an action $a_t\in \mca$ is taken based on the distribution $\pi(\cdot|s_t)$, the system transitions to a next state $s_{t+1}\in \mcs$ governed by the transition kernel $\mathsf{P}(\cdot|s_t,a_t)$, and a reward $r_t=r(s_t, a_t, s_{t+1})$ is received. Assuming the associated Markov chain $p(s^\prime|s)=\sum_{a\in\mca}p(s^\prime|s,a)\pi(a|s)$ is ergodic, let $\mu_\pi$ be the induced stationary distribution of this MDP, i.e., $\sum_{s}p(s^\prime|s)\mu_{\pi}(s)=\mu_{\pi}(s^\prime)$. The value function for policy $\pi$ is defined as: $v^\pi\left(s\right)=\mbE[\sum_{t=0}^{\infty}\gamma^t r(s_t,a_t, s_{t+1})|s_0=s,\pi]$, and it is known that $v^\pi(s)$ is the unique fixed point of the Bellman operator $T^\pi$, i.e., $v^\pi(s) = T^\pi v^\pi(s)\coloneqq r^\pi(s)+\gamma\mbE_{s^\prime |s} v^\pi(s^\prime)$, where $r^\pi(s)=\mbE_{a, s^\prime|s}r(s,a,s^\prime)$ is the expected reward of the Markov chain induced by policy $\pi$.

\subsection{TD Learning with Linear Function Approximation}

In order to find the value function efficiently particularly for large or infinite state space $\mcs$, we take the standard linear function approximation $\hat{v}(s,\theta)=\phi(s)^\top\theta$ of the value function, where $\phi(s)^\top=[\phi_1(s),\cdots,\phi_d(s)]$ with $\phi_i(s)$ for $i=1,2,\cdots d$ denoting the fixed basis feature functions of state $s$, and $\theta\in\mbR^d$ is a parameter vector. Let ${\rm \Phi}$ be the $|\mcs|\times d$ feature matrix (with rows indexed by the state and columns corresponding to components of $\theta$). The linear function approximation can be written in the vector form as $\hat{v}(\theta)={\rm \Phi} \theta$. Our goal is to find the fixed-point parameter $\theta^*\in\mbR^d$ that satisfies $ \mbE_{\mu_\pi} \hat{v}(s,\theta^*)= \mbE_{\mu_\pi }T^\pi\hat{v}(s,\theta^*)$. 
%When $\mcs$ is large or infinite, a linear function $\hat{v}(s,\theta)=\phi(s)^\top\theta$ is often used to approximate the value function, where $\phi(s)\in\mbR^d$ is a fixed feature vector for state $s$ and $\theta\in\mbR^d$ is a parameter vector. We can also write the linear approximation in the vector form as $\hat{v}(\theta)={\rm \Phi} \theta$, where ${\rm \Phi}$ is the $|\mcs|\times d$ feature matrix. To find the fix point $\theta^*\in\mbR^d$ with $ \mbE_\pi \hat{v}(s,\theta^*)= \mbE_{\mu_\pi }T^\pi\hat{v}(s,\theta^*)$. At current step $t$, the TD(0) algorithm perform the following fixed-point iteration evaluated at the current parameter $\theta_t$ and the current sample $x_t$: if samples are generated sequentially from a trajetory, we denote the sample as $x_t = (s_t, r_t, s_{t+1})$; if samples are i.i.d generated from the distribution $\mu_\pi$, we denote the sample as $x_t = (s_t, r_t, s_t^\prime)$. The update of TD(0) can be written as:
The TD learning algorithm performs the following fixed-point iterative update to find such $\theta^*$. 
\begin{flalign}\label{TD_update}
\theta_{t+1} = \theta_t + \alpha_t g_{x_t}(\theta_t) = \theta_t + \alpha_t (A_{x_t} \theta_t + b_{x_t}),
\end{flalign}
where $\alpha_t>0$ is the stepsize, and $A_{x_t}$ and $b_{x_t}$ are specified below. For i.i.d.\ samples generated from the distribution $\mu_\pi$, we denote the sample as $x_t = (s_t, r_t, s_t^\prime)$, and $A_{x_t} =  \phi(s_t)(\gamma \phi(s_t^\prime) - \phi(s_t))^\top$ and $b_{x_t} = r(s_t)\phi(s_t)$. For Markovian samples generated sequentially from a trajectory, we denote the sample as $x_t = (s_t, r_t, s_{t+1})$, and in this case $A_{x_t} =  \phi(s_t)(\gamma \phi(s_{t+1}) - \phi(s_t))^\top$ and $b_{x_t} = r(s_t)\phi(s_t)$. We further define the mean pseudo-gradient $g(\theta)=A\theta+b$ where $A = \mbE_{\mu_{\pi}}[\phi(s)(\gamma \phi(s^\prime) - \phi(s))^\top]$ and $b = \mbE_{\mu_{\pi}}[r(s)\phi(s)]$. We call $g(\theta)$ as pseudo-gradient for convenience due to its analogous role as in the gradient descent algorithm. It has been shown that the iteration in \cref{TD_update} converges to the fix point $\theta^*=-A^{-1}b$ at a sublinear rate $\mathcal{O}(1/t)$ with diminishing stepsize $\alpha_t=\mathcal{O}(1/t)$ using both Markovian and i.i.d.\ samples \cite{bhandari2018finite, dalal2018finite}. 
Throughout the paper, we make the following standard assumptions \cite{wang2017finite, korda2015td, tsitsiklis1996analysis, bhandari2018finite}.
\begin{assumption}[Problem solvability]\label{ass1}
	The matrix $A$ is non-singular.
\end{assumption}
\begin{assumption}[Bounded feature]\label{ass2}
	$\ltwo{\phi(s)}\leq 1$ for all $s\in\mcs$.
\end{assumption}
\begin{assumption}[Geometric ergodicity]\label{ass3}
	The considered MDP is irreducible and aperiodic, and there exist constants $\kappa>0$ and $\rho\in(0,1)$ such that
	\begin{flalign*}
	\sup_{s\in\mcs}d_{TV}(\mbP(s_t\in\cdot|s_0=s),\mu_\pi(s))\leq \kappa\rho^t, \quad \forall t\geq 0,
	\end{flalign*}
	where $d_{TV}(P,Q)$ denotes the total-variation distance between the probability measures $P$ and $Q$.
\end{assumption}
Assumption \ref{ass1} requires the matrix $A$ to be non-singular so that the optimal parameter $\theta^*=-A^{-1}b$ is well defined. Assumption \ref{ass2} can be ensured by normalizing the basis functions $\{\phi_i\}_{i=1}^d$. Assumption \ref{ass3} holds for any time-homogeneous Markov chain with finite state-space and any uniformly ergodic Markov chains with general state space.
\vspace{-0.15cm}
\section{The Variance Reduced TD Algorithm} \label{sec: algorithm}
In this section, we first introduce the variance-reduced TD (VRTD) algorithm proposed in \cite{korda2015td} for Markovian sampling and then discuss the technical errors in the analysis of VRTD in \cite{korda2015td}.
% and propose a variant of this algorithm for i.i.d. samples.
\vspace{-0.15cm}
\subsection{VRTD Algorithm \cite{korda2015td}}
\vspace{-0.05cm}
Since the standard TD learning takes only one sample in each update as can be seen in \cref{TD_update}, it typically suffers from a large variance. This motivates the development of the 
%VRTD algorithm in \cite{korda2015td}.
VRTD algorithm in \cite{korda2015td} (named as CTD in \cite{korda2015td}). 
 VRTD is formally presented in Algorithm \ref{vrTD2}, and we briefly introduce the idea below. The algorithm runs in a nested fashion with each inner-loop (i.e., each epoch) consists of $M$ updates. At the beginning of the $m$-th epoch, a batch of $M$ samples are acquired and a batch pseudo-gradient $g_m(\tilde{\theta}_{m-1})$ is computed based on these samples as an estimator of the mean pseudo-gradient. Then, each inner-loop update randomly takes one sample from the batch, and updates the corresponding component in $g_m(\tilde{\theta}_{m-1})$. Here, $\mcpi_{R_\theta}$ in Algorithm \ref{vrTD2} denotes the projection operator onto a norm ball with the radius $R_\theta$. The idea is similar to the SVRG algorithm proposed in \cite{johnson2013accelerating} for conventional optimization. Since a batch pseudo-gradient is used at each inner-loop update, the variance of the pseudo-gradient is expected to be reduced.
%we divide the sample trajetory into groups with $M$ samples sequentially, at the beginning of the $m$-th epoch, we compute an average gradient $g_m(\tilde{\theta}_{m-1})$ with samples in the $m$-th group and construct a gradient estimator for update at each iteration. Since $\mbE\left[ g_{x_{j_{m,M}}}(\theta_{m,M-1}) - g_{x_{j_{m,M}}}(\tilde{\theta}_{m-1}) + g_m(\tilde{\theta}_{m-1}) | F_{1,0} \right]= g_m(\theta_{m, M-1}) $, this estimator is "unbiased" given a fixed sample trajectory (please refer to appendix for the defination of $F_{1,0}$). However, such an estimator is biased when we consider the randomness of the trajectory, as we have $\mbE[ g_m(\theta_{m, M-1}) ] \neq \mbE[g(\theta_{m, M-1})]$. Therefore using such an estimator would introduced a bias error at each iteration. Moreover, using averaged gradient $g_m(\tilde{\theta}_{m-1})$ to replace full gradient $g(\tilde{\theta}_{m-1})$ would also introduced a gradient approximation error. Considering those two additional error terms, Algorithm \ref{vrTD2} is not expected to converge linearly to the exact global optimal as SVRG in traditional finite sum strongly convex optimization problems \cite{johnson2013accelerating}, but our analysis shows that this algorithm still has advantages over vanilla TD(0) with constant stepsize, as the predictation error can now be controlled by both the stepsize and batch size, while in vanilla TD(0), the predication error can only be controlled by the stepsize \cite{bhandari2018finite, srikant2019finite}.
\vspace{-0.15cm}
\subsection{Technical Errors in \cite{korda2015td}}
\vspace{-0.05cm}
%Though the variance reduction performance of VRTD has been demonstrated in \cite{korda2015td} empirically, 
In this subsection, we point out the technical errors in the analysis of VRTD in \cite{korda2015td}, which thus fails to provide the correct variance reduction performance for VRTD.

At the high level, the batch pseudo-gradient $g_m(\tilde{\theta}_{m-1})$ computed at the beginning of each epoch $m$ should necessarily introduce a non-vanishing variance error for a fixed stepsize, because it cannot exactly equal the mean (i.e. population) pseudo-gradient $g(\tilde{\theta}_{m-1})$. Furthermore, due to the correlation among samples, the pseudo-gradient estimator in expectation (with regard to the randomness of the sample trajectory) does not equal to the mean pseudo-gradient, which should further cause a non-vanishing bias error in the convergence bound. Unfortunately, the convergence bound in \cite{korda2015td} indicates an exact convergence to the fixed point, which contradicts the aforementioned general understanding. 
%since $\mbE [ g_{x_{j_{m,M}}}(\theta_{m,M-1}) - g_{x_{j_{m,M}}}(\tilde{\theta}_{m-1}) + g_m(\tilde{\theta}_{m-1}) | \mathcal{F}_{1,0} ]= g_m(\theta_{m, M-1}) $, the gradient estimator is ''unbiased' given a fixed sample trajectory, here $\mathcal{F}_{1,0}$ is the $\sigma$-field generated by a MDP trajectory {\color{red}[Explain $F_{1,0}$ in words here]}. However, such an estimator is biased if we further take the expectation over the randomness of the trajectory, i.e., $\mbE[ g_m(\theta_{m, M-1}) ] \neq \mbE[g(\theta_{m, M-1})]$. {\color{red} [Is the bias caused by random trajectory or by Markovian samples?]} Therefore, the estimator in \cite{korda2015td} does cause a bias error at each iteration, which is not captured {\color{red}correctly} in the analysis of \cite{korda2015td}.  Given these two error terms, Algorithm \ref{vrTD2} cannot converge linearly to the exact fixed point under a constant stepsize as SVRG in conventional finite-sum strongly convex optimization problems \cite{johnson2013accelerating}. Hence, \cite{korda2015td} does not characterize the correct advantage of VRTD compared to vanilla TD in \cref{TD_update}. This is the aim of this paper.
More specifically, if the batch size $M=1$ (with properly chosen $\lambda_A$ defined as $\lambda_A:=2|\lambda_{\max}(A+A^\top)|$), VRTD reduces to the vanilla TD. However, the exact convergence result in Theorem 3 in \cite{korda2015td} does not agree with that of vanilla TD characterized in the recent studies \cite{bhandari2018finite}, which has variance and bias errors.

In Appendix \ref{app:counter}, we further provide a counter-example to show that one major technical step for characterizing the convergence bound in \cite{korda2015td} does not hold. The goal of this paper is to provide a rigorous analysis of VRTD to characterize its variance reduction performance.

\begin{figure*}[t]
	\vspace{-0.1cm}
	\begin{minipage}[t]{6.9cm}
		\begin{algorithm}[H]
			\null
			\caption{Variance Reduced TD with iid samples}
			\label{vrTD}
			\small
			\begin{algorithmic}[1]
				\INPUT batch size $M$, learning rate $\alpha$ and initialization $\tilde{\theta}_0$
				\FOR{$m=1, 2, ..., S$}
				\STATE {$\theta_{m,0}=\tilde{\theta}_{m-1}$}
				\STATE Sample a set $B_m$ with $M$ samples indepedently from the distribution $\mu_\pi$
				\STATE $g_m(\tilde{\theta}_{m-1})=\frac{1}{M}\sum_{x_i\in B_m}g_{x_i}(\tilde{\theta}_{m-1})$
				\FOR{$t = 0, 1, ...,M-1$}
				\STATE Sample $x_{j_{m,t}}$ indepedently from the distribution $\mu_\pi$
				\STATE $\theta_{m,t+1}=\theta_{m,t} + \alpha\big( g_{x_{j_{m,t}}}(\theta_{m,t})$
				\STATE $\qquad - g_{x_{j_{m,t}}}(\tilde{\theta}_{m-1}) + g_m(\tilde{\theta}_{m-1}) \big)$
				\ENDFOR
				\STATE set $\tilde{\theta}_m=\theta_{m,t}$ for randomly chosen $t\in \{ 1,2,..., M  \}$
				\ENDFOR
				\OUTPUT $\tilde{\theta}_S$
			\end{algorithmic}
		\end{algorithm}
	\end{minipage}
	\hspace{0.1cm}
	\begin{minipage}[t]{6.9cm}
		\null
		\begin{algorithm}[H]
			\null
			\caption{Variance Reduced TD with Markovian samples \cite{korda2015td}}
			\label{vrTD2}
			\small
			\begin{algorithmic}[1]
				\INPUT batch size $M$, learning rate $\alpha$ and initialization $\tilde{\theta}_0$
				\FOR{$m=1, 2, ..., S$}
				\STATE $\theta_{m,0}=\tilde{\theta}_{m-1}$
				\STATE $g_m(\tilde{\theta}_{m-1})=\frac{1}{M}\sum_{i=(m-1)M}^{mM-1}g_{x_i}(\tilde{\theta}_{m-1})$
				\FOR{$t = 0, 1, ...,M-1$}
				\STATE Sample $j_{m,t}$ uniformly at random in $\{ (m-1)M, ..., mM-1  \}$ from trajetory
				\STATE $\theta_{m,t+1}=\mcpi_{R_\theta}\Big(\theta_{m,t} + \alpha\big( g_{x_{j_{m,t}}}(\theta_{m,t})$
				\STATE $\qquad - g_{x_{j_{m,t}}}(\tilde{\theta}_{m-1}) + g_m(\tilde{\theta}_{m-1}) \big)\Big)$
				\ENDFOR
				\STATE set $\tilde{\theta}_m=\theta_{m,t}$ for randomly chosen $t\in \{ 1,2,..., M  \}$
				\ENDFOR
				\OUTPUT $\tilde{\theta}_S$
			\end{algorithmic}
		\end{algorithm}
	\end{minipage}
\vspace{-0.1cm}
\end{figure*}
\vspace{-0.15cm}
\section{Main Results}
\vspace{-0.05cm}
As aforementioned, the convergence of VRTD consists of two types of errors: the variance error due to inexact estimation of the mean pseudo-gradient and the bias error due to Markovian sampling. In this section, we first focus on the first type of error and study the convergence of VRTD under i.i.d.\ sampling. We then study the Markovian case to further analyze the bias. In both cases, we compare the performance of VRTD to that of the vanilla TD described in \cref{TD_update} to demonstrate its advantage.
\vspace{-0.15cm}
\subsection{Convergence Analysis of VRTD with i.i.d.\ Samples}
\vspace{-0.05cm}
For i.i.d.\ samples, it is expected that the bias error due to the time correlation among samples does not exist. However, if we directly apply VRTD (Algorithm \ref{vrTD2}) originally designed for Markovian samples, there would be a bias term due to the correlation between the batch pseudo-gradient estimate and every inner-loop updates. Thus, we slightly modify Algorithm \ref{vrTD2} to Algorithm \ref{vrTD} to avoid the bias error in the convergence analysis with i.i.d. samples. Namely, at each inner-loop iteration, we draw a new sample from the stationary distribution $\mu_{\pi}$ for the update rather than randomly selecting one from the batch of samples drawn at the beginning of the epoch as in Algorithm \ref{vrTD2}. In this way, the new independent samples avoid the correlation with the batch pseudo-gradient evaluated at the beginning of the epoch. Hence,  Algorithm \ref{vrTD} does not suffer from an extra bias error. 
%As shown in Algorithm \ref{vrTD}, this algorithm requires the knowledge of the stationary distribution $\mu_{\pi}$ which can be effectively estimated even in large state space \cite{lee2013computing}. {\color{red} [In practice, do you need to know $\mu_\pi$ to generate samples, or do you get samples directly from an available pool of samples?]} 

%As we will see in the next section, using i.i.d. samples does not introduce extra bias so Algorithm \ref{vrTD} enjoys a higher accuracy comparing with Algorithm \ref{vrTD2} when using the same batch size $M$. At the beginning of each epoch, we draw a batch of $M$ samples independently from $\mu_\pi$ and use the average gradient $g_m(\tilde{\theta}_{m-1})$ to replace the full gradient $g(\tilde{\theta}_{m-1})$ when constructing the unbiased gradient estimator.
%$\mbE[g_{x_{j_{m,t}}}(\theta_{m,t}) - g_{x_{j_{m,t}}}(\tilde{\theta}_{m-1}) + g_m(\tilde{\theta}_{m-1})]=\mbE[g(\theta_{m,t})]$

To understand the convergence of Algorithm \ref{vrTD} at the high level, we first note that the sample batch pseudo-gradient cannot estimate the mean pseudo-gradient $g(\tilde{\theta}_{m-1})$ exactly due to its population nature. Then, we define $e_m(\tilde{\theta}_m)=g_m(\tilde{\theta}_{m-1}) - g(\tilde{\theta}_{m-1})$ as such a pseudo-gradient estimation error,
%at each iteration introduced by using the average gradient $g_m(\tilde{\theta}_m)$ over the batch $B_m$ instead of the true gradient $g(\tilde{\theta}_m)$ in the $m$-th epoch. 
%Further let $\lambda_A=2|\lambda_{\max}(A+A^\top)|$, and 
our analysis (see Appendix \ref{app: A}) shows that after each epoch update, we have 
\begin{flalign}\label{eq: thm1_2}
&\mbE\left[ \ltwo{\tilde{\theta}_m-\theta^*}^2 \Big| F_{m,0}  \right]\nonumber\\
&\leq \frac{1/M+4\alpha^2 (1+\gamma)^2}{\alpha\lambda_{A}-4\alpha^2(1+\gamma)^2}\ltwo{\tilde{\theta}_{m-1}-\theta^*}^2 + \frac{2\alpha}{\lambda_{A}-4\alpha(1+\gamma)^2} \mbE\left[\ltwo{e_m(\tilde{\theta}_{m-1})}^2 \Big| F_{m, 0} \right],
\end{flalign}
where $F_{m,0}$ denotes the $\sigma$-field that includes all the randomness in sampling and updates before the $m$-th epoch. The first term in the right-hand side of \cref{eq: thm1_2} captures the contraction property of Algorithm \ref{vrTD} and the second term corresponds to the variance of the pseudo-gradient estimation error. It can be seen that due to such an error term, Algorithm \ref{vrTD} is expected to have guaranteed convergence only to a neighborhood of $\theta^*$, when applying \cref{eq: thm1_2} iteratively. Our further analysis shows that such an error term can still be well controlled (to be small) by choosing an appropriate value for the batch size $M$, which captures the advantage of the variance reduction. The following theorem precisely characterizes the non-asymptotic convergence of Algorithm \ref{vrTD}.

%Then by choosing an appropriate value for $\alpha$ and $M$ and applying \ref{eq: thm1_2} iteratively we will have the following theorem.

%For the second error term on the right hand side of \ref{eq: thm1_2}, we have the following lemma
%\begin{lemma}
%	For any $0\leq k \leq m-1$, we have
%	\begin{flalign}\label{eq: lemma1}
%	\mbE\left[\ltwo{e_m(\tilde{\theta}_{m-1})}^2 \Big| F_{m, 0}  \right] \leq \frac{1}{M}\left(D_1\ltwo{\tilde{\theta}_{m-1}-\theta^*}^2 + D_2\right).
%	\end{flalign}
%\end{lemma}
%Then substituting \eqref{eq: lemma1} into \eqref{eq: thm1_2} yields:
%\begin{flalign}\label{eq: thm3}
%\mbE\left[ \ltwo{\tilde{\theta}_m-\theta^*}^2 \Big| F_{m,0}  \right] \leq C_1 \ltwo{\tilde{\theta}_{m-1}-\theta^*}^2 + \frac{2D_2\alpha}{(\lambda_{A}-4\alpha(1+\gamma)^2)M}.
%\end{flalign}
%Finally, applying \eqref{eq: thm3} iteratively we can have the following theorem:
\begin{theorem}\label{theorem: iid}
	Consider the VRTD algorithm in Algorithm \ref{vrTD}. Suppose Assumptions \ref{ass1}--\ref{ass3} hold. Set a constant stepsize $\alpha<\frac{\lambda_A}{8(1+\gamma)^2}$ and the batch size $M>\frac{4(1+\gamma)^2\alpha^2+1}{\alpha [\lambda_A-8\alpha(1+\gamma)^2]}$.
	%, which guarantees the existence of a sufficiently large $M$ such that
%	\begin{flalign*}
%		C_1=\Big(4\alpha (1+\gamma)^2 + \frac{4(1+\gamma)^2\alpha^2 + 1}{\alpha M}\Big) \frac{1}{\lambda_{A}-4\alpha(1+\gamma)^2}<1.
%	\end{flalign*}
Then, for all $m\in \mathbb{N}$,
	\begin{flalign}\label{eq: thm1}
	\mbE\left[ \ltwo{\tilde{\theta}_m-\theta^*}^2  \right]\leq C_1^m\ltwo{\tilde{\theta}_0-\theta^*}^2 + \frac{2D_2\alpha}{(1-C_1)(\lambda_{A}-4\alpha(1+\gamma)^2)M},
	\end{flalign}
where $C_1:=\Big(4\alpha (1+\gamma)^2 + \frac{4(1+\gamma)^2\alpha^2 + 1}{\alpha M}\Big) \frac{1}{\lambda_{A}-4\alpha(1+\gamma)^2}$ (with $C_1<1$ due to the choices of $\alpha$ and $M$), and $D_2=4((1+\gamma)^2R^2_\theta + r^2_{\max} )$.
\end{theorem}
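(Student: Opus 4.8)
The plan is to take the one-epoch bound~\eqref{eq: thm1_2} as given (its derivation is the content of Appendix~\ref{app: A}) and convert it into the two-term estimate~\eqref{eq: thm1} by controlling the pseudo-gradient estimation error $e_m$ and then unrolling the resulting scalar recursion. \textbf{Step 1 (admissibility of the constants).} I would first record that the stepsize condition $\alpha<\lambda_A/(8(1+\gamma)^2)$ forces $\lambda_A-4\alpha(1+\gamma)^2>\lambda_A/2>0$ and $\lambda_A-8\alpha(1+\gamma)^2>0$, so every denominator occurring in~\eqref{eq: thm1_2} and in $C_1$ is positive. Next I would check that the batch-size hypothesis is \emph{equivalent} to $C_1<1$: rearranging $C_1<1$ gives $\frac{4(1+\gamma)^2\alpha^2+1}{\alpha M}<\lambda_A-8\alpha(1+\gamma)^2$, i.e.\ $M>\frac{4(1+\gamma)^2\alpha^2+1}{\alpha[\lambda_A-8\alpha(1+\gamma)^2]}$, which is exactly the stated bound on $M$. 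Finally I would note that the contraction coefficient appearing in~\eqref{eq: thm1_2} is bounded above by $C_1$, so~\eqref{eq: thm1_2} continues to hold with that coefficient replaced by $C_1$.

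\textbf{Step 2 (variance of the batch estimator).} Conditioned on $F_{m,0}$ the iterate $\tilde\theta_{m-1}$ is deterministic, and in Algorithm~\ref{vrTD} the batch $B_m$ consists of $M$ fresh i.i.d.\ samples from $\mu_\pi$; hence $g_m(\tilde\theta_{m-1})$ is the average of $M$ i.i.d.\ unbiased copies of $g(\tilde\theta_{m-1})$, giving $\mbE[\ltwo{e_m(\tilde\theta_{m-1})}^2\mid F_{m,0}]=\frac{1}{M}\mbE[\ltwo{g_{x}(\tilde\theta_{m-1})-g(\tilde\theta_{m-1})}^2\mid F_{m,0}]\le\frac{1}{M}\mbE[\ltwo{g_{x}(\tilde\theta_{m-1})}^2\mid F_{m,0}]$. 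From Assumption~\ref{ass2} one has $\ltwo{A_x}\le1+\gamma$ and $\ltwo{b_x}\le r_{\max}$, so with the iterate bound $\ltwo{\tilde\theta_{m-1}}\le R_\theta$ I would conclude $\ltwo{g_x(\tilde\theta_{m-1})}\le(1+\gamma)R_\theta+r_{\max}$ and therefore $\mbE[\ltwo{e_m(\tilde\theta_{m-1})}^2\mid F_{m,0}]\le D_2/M$ with $D_2=4((1+\gamma)^2R_\theta^2+r_{\max}^2)$.

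\textbf{Step 3 (unrolling).} Substituting Step 2 into~\eqref{eq: thm1_2}, using Step 1 to replace the contraction coefficient by $C_1$, and taking total expectation via the tower rule yields the affine recursion $\mbE\ltwo{\tilde\theta_m-\theta^*}^2\le C_1\,\mbE\ltwo{\tilde\theta_{m-1}-\theta^*}^2+\frac{2\alpha D_2}{(\lambda_A-4\alpha(1+\gamma)^2)M}$. Iterating from $m$ down to $0$, the homogeneous part produces $C_1^m\ltwo{\tilde\theta_0-\theta^*}^2$ and the forced part is a geometric series summing to at most $\frac{1}{1-C_1}\cdot\frac{2\alpha D_2}{(\lambda_A-4\alpha(1+\gamma)^2)M}$ since $C_1<1$, which is precisely~\eqref{eq: thm1}.

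\textbf{Main obstacle.} Once~\eqref{eq: thm1_2} is in hand, Steps 1--3 are routine; the real work is in~\eqref{eq: thm1_2} itself. There the delicate point is the per-epoch contraction of the SVRG-type inner loop: conditioned on $\theta_{m,t}$ and the batch, the correction $-g_{x_{j_{m,t}}}(\tilde\theta_{m-1})+g_m(\tilde\theta_{m-1})$ has conditional mean zero, so in expectation the inner step is an exact mean-pseudo-gradient step, while its second moment must be controlled by $\ltwo{\theta_{m,t}-\tilde\theta_{m-1}}^2$ plus $\ltwo{e_m(\tilde\theta_{m-1})}^2$; summing the per-step progress over the $M$ inner iterations and invoking the negative definiteness of $A+A^\top$ quantified by $\lambda_A$ then gives the stated one-epoch bound. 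A subsidiary issue is justifying the uniform bound $\ltwo{\tilde\theta_{m-1}}\le R_\theta$ used in Step 2, which must come either from projecting the iterates or from an a priori stability argument for Algorithm~\ref{vrTD}.
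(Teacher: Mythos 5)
Your skeleton—taking the one-epoch bound \cref{eq: thm1_2} as given, reducing $\mbE[\ltwo{e_m(\tilde\theta_{m-1})}^2\mid F_{m,0}]$ to $\tfrac{1}{M}$ times a single-sample second moment via independence and unbiasedness, and unrolling the affine recursion using $C_1<1$ (your check that the stated condition on $M$ is equivalent to $C_1<1$ is correct)—is the same route as the paper's Steps 1--3 in Appendix \ref{app: A}. The genuine gap is in your Step 2: you bound $\ltwo{g_x(\tilde\theta_{m-1})}\le(1+\gamma)R_\theta+r_{\max}$ by invoking the iterate bound $\ltwo{\tilde\theta_{m-1}}\le R_\theta$, but Algorithm \ref{vrTD} (the i.i.d.\ variant analyzed in this theorem) has no projection $\mcpi_{R_\theta}$—only the Markovian Algorithm \ref{vrTD2} does—and no stability argument for the unprojected iterates is supplied. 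You flag this yourself as a ``subsidiary issue,'' but since your value of $D_2$ and hence the final error term rest on it, the proof as written is incomplete at exactly this point.

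The paper avoids the issue with Lemma \ref{lemma: iid_variance}: centering the pseudo-gradient at the fixed point gives $\ltwo{g_x(\theta)}^2\le 2(1+\gamma)^2\ltwo{\theta-\theta^*}^2+D_2$, which requires only $\ltwo{\theta^*}\le R_\theta$, not bounded iterates. The state-dependent piece $2(1+\gamma)^2\ltwo{\tilde\theta_{m-1}-\theta^*}^2/M$ of the resulting variance bound is then folded into the contraction factor—this is precisely why the theorem's $C_1$ carries $\frac{4(1+\gamma)^2\alpha^2+1}{\alpha M}$ rather than the bare $\frac{1}{\alpha M}$ coming from \cref{eq: thm1_2}; your smaller coefficient would indeed be dominated by $C_1$, but only if your uniform bound were legitimate. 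Adopting the paper's centered bound, your Steps 1 and 3 go through verbatim and yield \cref{eq: thm1}. A minor further inaccuracy in your ``main obstacle'' sketch: conditioned on the batch $B_m$, the correction $-g_{x_{j_{m,t}}}(\tilde\theta_{m-1})+g_m(\tilde\theta_{m-1})$ does not have mean zero; its conditional mean is $e_m(\tilde\theta_{m-1})$, which vanishes only after additionally averaging over $B_m$ given $F_{m,0}$—this residual is exactly the source of the variance term in \cref{eq: thm1_2}.
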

We note that the convergence rate in \cref{eq: thm1} can be written in a simpler form as $\mbE[ ||\tilde{\theta}_m-\theta^*||^2 ]\leq C_1^m||\tilde{\theta}_0-\theta^*||^2+\mathcal{O}(\alpha/M)$. 

Theorem \ref{theorem: iid} shows that Algorithm \ref{vrTD} converges linearly (under a properly chosen constant stepsize) to a neighborhood of the fixed point solution, and the size of the neighborhood (i.e., the error term) has the order of $\mathcal{O}(\frac{\alpha}{M})$, which can be made as small as possible by  properly increasing the batch size $M$. This is in contrast to the convergence result of the vanilla TD, which suffers from the constant error term with order $\mathcal{O}(\alpha)$ \cite{bhandari2018finite} for a fixed stepsize. Thus, a small stepsize $\alpha$ is required in vanilla TD to reduce the variance error, which, however, slows down the practical convergence significantly. In contrast, this is not a problem for VRTD, which can attain a high accuracy solution while still maintaining fast convergence at a desirable stepsize.

We further note that if we have access to the mean pseudo-gradient $g(\tilde{\theta}_{m-1})$ in each epoch $m$, then the error term $||\theta_m-\theta^*||^2_2$ becomes zero, and Algorithm \ref{vrTD} converges linearly to the exact fixed point solution, as the iteration number $m$ goes to infinity with respect to the conditional number $C_1$, which is a positive constant and less than 1. This is similar to the conventional convergence of SVRG for strongly convex optimization \cite{johnson2013accelerating}. However, the proof here is very different. In \cite{johnson2013accelerating}, the convergence proof relies on the relationship between the gradient and the value of the objective function, but there is not such an objective function in the TD learning problem. Thus, the convergence of the parameter $\theta$ needs to be developed by exploiting the structure of the Bellman operator.
%{\color{red} [provide some explanation about how the proof is different from conventional SVRG]}

%Theorem \ref{theorem: iid} characterizes the relationship between the epochwise convergence rate of $\tilde{\theta}_m$ and stepsize $\alpha$ and batch size $M$ in Algorithm \ref{vrTD}, please refer to Appendix \ref{app: A} for a detailed proof. If we can have access of the full gradient $g(\tilde{\theta}_{m-1})$ in $m$-th epoch for all $m>0$, then we only have the first linear convergence term on the right hand side of equation \ref{eq: thm1}, which is similar to the traditional result of SVRG with strongly convex objective function \cite{johnson2013accelerating}. Although here we have an additional error term which is introduced by the accumulated approximation error $e_m$ at each iteration, such an error is diminishing as batch size $M$ increases. Thus if we want to achieve a high accuracy for Algorithm \ref{vrTD}, it is recommended to choose a large batch size.

%{\color{red} [provide a corollary on the overall complexity and how it is different from vanilla TD]}
%\begin{corollary}\label{iidcomplexity}
%	Suppose Assumptions \ref{ass1}-\ref{ass3} hold. We set $\alpha = \frac{\lambda_A}{16(1+\gamma)^2}$. Let $\epsilon>0$, the computational cost that the algorithm \ref{vrTD} requires to obtain $\mbE\ltwo{\tilde{\theta}_M-\theta^*}\leq \epsilon$ is
%	\begin{flalign*}
%		\mathcal{O}\left( \Big( \frac{1}{\epsilon}+\frac{1}{\lambda^2_A}\Big)\ln\big(\frac{1}{\epsilon}\big)  \right)
%	\end{flalign*}
%\end{corollary}

%%%%% complexity
Based on the convergence rate of VRTD characterized in Theorem \ref{theorem: iid} under i.i.d.\ sampling, we obtain the following bound on the overall computational complexity. 
%To express our result, we define $\kappa_A := \lambda_A/(1+\gamma)$, and note that $\kappa_A\leq 1$ because $\lambda_A\leq \ltwo{A} \leq \lF{A}\leq 1+\gamma$.
\begin{corollary}\label{iidcomplexity}
Suppose Assumptions \ref{ass1}-\ref{ass3} hold. Let $\alpha = \frac{\lambda_A}{16(1+\gamma)^2}$, $M=\ceil{\max\{\frac{D_2}{3(1-C_1)}\frac{1}{\epsilon}, \frac{33(1+\gamma)^2}{\lambda_A^2}\}}$. Then, for any $\epsilon>0$, an $\epsilon$-accuracy solution (i.e., $\mbE||\tilde{\theta}_m-\theta^*||^2\leq \epsilon$) can be attained with at most $m=\ceil{\log\frac{2\ltwo{\tilde{\theta}_0-\theta^*}^2}{\epsilon}/\log\frac{1}{C_1}}$ iterations. Correspondingly, the total number of pseudo-gradient computations required by VRTD (i.e., Algorithm \ref{vrTD}) under i.i.d.\ sampling to attain such an $\epsilon$-accuracy solution is at most
	\begin{flalign*}
		\mathcal{O}\left( \max\Big\{\frac{1}{\epsilon}, \frac{1}{\lambda_A^2} \Big\} \log\Big(\frac{1}{\epsilon}\Big)  \right).
	\end{flalign*}
\end{corollary}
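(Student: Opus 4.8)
The plan is to instantiate Theorem~\ref{theorem: iid} with the prescribed $\alpha$ and $M$, turn the two-term bound \cref{eq: thm1} into an $\epsilon$-accuracy statement, and then multiply the number of epochs by the per-epoch pseudo-gradient count. \textbf{Step 1: check the hypotheses of Theorem~\ref{theorem: iid}.} With $\alpha=\tfrac{\lambda_A}{16(1+\gamma)^2}$ the stepsize condition $\alpha<\tfrac{\lambda_A}{8(1+\gamma)^2}$ is immediate, and the identities $\lambda_A-8\alpha(1+\gamma)^2=\tfrac12\lambda_A$ and $\lambda_A-4\alpha(1+\gamma)^2=\tfrac34\lambda_A$ reduce the batch-size requirement $M>\tfrac{4(1+\gamma)^2\alpha^2+1}{\alpha[\lambda_A-8\alpha(1+\gamma)^2]}$ to the simple form $M>\tfrac12+\tfrac{32(1+\gamma)^2}{\lambda_A^2}$, which the prescribed $M\ge\tfrac{33(1+\gamma)^2}{\lambda_A^2}$ satisfies (here one uses Assumption~\ref{ass2}, i.e.\ $\ltwo{\phi(s)}\le1$, to bound $\lambda_A$). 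Hence \cref{eq: thm1} applies with $C_1<1$.

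\textbf{Step 2: explicit control of $C_1$ and of the residual term.} Substituting $\alpha=\tfrac{\lambda_A}{16(1+\gamma)^2}$ into the definition of $C_1$ collapses it to $C_1=\tfrac13+\tfrac{1}{3M}\big(1+\tfrac{64(1+\gamma)^2}{\lambda_A^2}\big)$, and the lower bound $M\ge\tfrac{33(1+\gamma)^2}{\lambda_A^2}$ yields $C_1\le\bar C$ for an absolute constant $\bar C<1$; consequently $1-C_1\ge1-\bar C>0$ and $\log\tfrac{1}{C_1}\ge\log\tfrac{1}{\bar C}$ is a positive constant. Substituting the same $\alpha$ into the residual term of \cref{eq: thm1} gives $\tfrac{2D_2\alpha}{(1-C_1)(\lambda_A-4\alpha(1+\gamma)^2)M}=\tfrac{D_2}{6(1+\gamma)^2(1-C_1)M}\le\tfrac{D_2}{6(1-C_1)M}$, so the choice $M\ge\tfrac{D_2}{3(1-C_1)\epsilon}$ forces the residual term to be at most $\tfrac{\epsilon}{2}$.

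\textbf{Step 3: epoch count and pseudo-gradient count.} Combining the two facts from Step~2, \cref{eq: thm1} gives $\mbE[\ltwo{\tilde\theta_m-\theta^*}^2]\le C_1^m\ltwo{\tilde\theta_0-\theta^*}^2+\tfrac{\epsilon}{2}$, so it suffices to make the first term at most $\tfrac{\epsilon}{2}$, i.e.\ to take $m\ge\log\tfrac{2\ltwo{\tilde\theta_0-\theta^*}^2}{\epsilon}\big/\log\tfrac{1}{C_1}$; the ceiling of this is exactly the stated iteration count, and it is $\mathcal{O}(\log\tfrac1\epsilon)$ since $\log\tfrac{1}{C_1}$ is a positive constant. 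Each epoch of Algorithm~\ref{vrTD} performs $M$ pseudo-gradient evaluations to form $g_m(\tilde\theta_{m-1})$ and two more per inner iteration, hence $\Theta(M)$ per epoch and $\Theta(mM)$ in total; substituting $m=\mathcal{O}(\log\tfrac1\epsilon)$ and $M=\lceil\max\{\tfrac{D_2}{3(1-C_1)\epsilon},\tfrac{33(1+\gamma)^2}{\lambda_A^2}\}\rceil=\mathcal{O}(\max\{\tfrac1\epsilon,\tfrac1{\lambda_A^2}\})$ (treating $D_2$, $(1+\gamma)$, and $1-C_1\ge1-\bar C$ as constants) gives the claimed bound $\mathcal{O}\big(\max\{\tfrac1\epsilon,\tfrac1{\lambda_A^2}\}\log\tfrac1\epsilon\big)$.

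\textbf{Main obstacle.} The one genuinely delicate point is the quantitative control of $C_1$ in Step~2: one must show that $1-C_1$ is bounded below by a constant depending on \emph{neither} $\lambda_A$ \emph{nor} $\epsilon$, since otherwise the factor $1/\log\tfrac{1}{C_1}\asymp1/(1-C_1)$ would contaminate the $\lambda_A$-dependence of the final complexity (turning it into $\mathcal{O}(\tfrac{1}{\epsilon\lambda_A^2}\log\tfrac1\epsilon)$). There is also a mild circularity to untangle, because the prescribed $M$ is defined through $C_1$ while $C_1$ depends on $M$; this is handled by first deriving $C_1\le\bar C$ from the $\lambda_A$-only lower bound $M\ge\tfrac{33(1+\gamma)^2}{\lambda_A^2}$, and only afterwards using $1-\bar C$ as a surrogate for $1-C_1$ in the residual-term estimate. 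Everything else is routine bookkeeping made transparent by the specific choice $\alpha=\tfrac{\lambda_A}{16(1+\gamma)^2}$.
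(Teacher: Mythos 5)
Your route is the same one the paper takes (its proof is essentially ``easily checked, total cost $2mM$''): instantiate Theorem~\ref{theorem: iid} at $\alpha=\frac{\lambda_A}{16(1+\gamma)^2}$, split \cref{eq: thm1} into the geometric term and the residual, force the residual below $\epsilon/2$ through the choice of $M$, take $m=\mathcal{O}(\log\frac1\epsilon)$ epochs, and multiply by the $\Theta(M)$ pseudo-gradient cost per epoch. Your bookkeeping is correct where it is explicit: $C_1=\frac13+\frac{1}{3M}\big(1+\frac{64(1+\gamma)^2}{\lambda_A^2}\big)$ and the residual $\frac{D_2}{6(1+\gamma)^2(1-C_1)M}\le\frac{\epsilon}{2}$ under $M\ge\frac{D_2}{3(1-C_1)\epsilon}$ are exactly right, and breaking the $M$--$C_1$ circularity through the $\lambda_A$-only part of the max is the natural move.

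However, the step you yourself flag as the crux is not justified as written. Assumption~\ref{ass2} only yields $\|A\|_2\le 1+\gamma$, hence $\lambda_A\le\|A+A^\top\|_2\le 2(1+\gamma)$, i.e.\ $u:=\frac{(1+\gamma)^2}{\lambda_A^2}\ge\frac14$, not $\ge\frac12$. Feeding $M\ge 33u$ into your own formula gives only $C_1\le\frac13+\frac{1}{99u}+\frac{64}{99}\le\frac13+\frac{4}{99}+\frac{64}{99}=\frac{101}{99}$, which is not below $1$; the bound $C_1\le\bar C<1$ (and likewise the Step~1 claim that $\ceil{33u}$ exceeds $\frac12+32u$) requires $u>\frac12$, i.e.\ $\lambda_A<\sqrt2(1+\gamma)$, which does not follow from the stated assumptions. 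The excluded regime is realizable: with a single constant unit feature one has $\lambda_A=2(1-\gamma)$, so for $\gamma<\frac{2-\sqrt2}{2+\sqrt2}\approx0.17$ one gets $u<\frac12$, and for $\epsilon$ large enough that the $\frac1\epsilon$ term is not the max, the prescribed $M=\ceil{33u}$ can violate Theorem~\ref{theorem: iid}'s batch-size condition and give $C_1>1$ (e.g.\ $u\approx0.30$ gives $M=10$ and $C_1\approx1.005$). In fairness this traces to looseness in the corollary's own constants, which the paper's one-line proof also ignores, and it is immaterial in the small-$\epsilon$ regime where $M=\Theta(1/\epsilon)$; but as written your derivation of ``$1-C_1$ bounded below by an absolute constant'' from Assumption~\ref{ass2} does not go through, and it should either invoke the additional (typically harmless) condition $\lambda_A\le\sqrt2(1+\gamma)$ or replace $33\frac{(1+\gamma)^2}{\lambda_A^2}$ by a batch size of the form $c\big(1+\frac{(1+\gamma)^2}{\lambda_A^2}\big)$ with $c$ large enough.
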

\begin{proof}
Given the values of $\alpha$ and $M$ in the theorem, it can be easily checked that $\mbE||\tilde{\theta}_m-\theta^*||^2\leq \epsilon$ for $m=\ceil{\log\frac{2\ltwo{\tilde{\theta}_0-\theta^*}^2}{\epsilon}/\log\frac{1}{C_1}}$. Then the total number of pseudo-gradient computations is given by $2mM$ that yields the desired order given in the theorem.
\end{proof}

As a comparison, consider the vanilla TD algorithm studied in \cite{bhandari2018finite} with the constant stepsize $\alpha = O(\epsilon) $. If the samples are i.i.d. generated, it can be shown (see \Cref{complexityiid}) that the vanilla TD requires $\mathcal{O}\left(  \frac{1}{\epsilon\lambda^2_A}\log\big(\frac{1}{\epsilon}\big)  \right)$ pseudo-gradient computations in total to obtain an $\epsilon$-accuracy solution. 
%(for sample complexity analysis please refer to subsection \ref{complexityiid}). 
Clearly, VRTD has lower computational complexity than vanilla TD if $\lambda_A$ is small. Such a comparison is similar in nature to the comparison between SVRG \cite{johnson2013accelerating} and SGD in traditional optimization, where SVRG achieves better computational complexity than SGD for strongly convex objectives if the conditional number of the loss is small.

%%%%%%%%%%%%%

\vspace{-0.15cm}
\subsection{Convergence Analysis of VRTD with Makovian Samples}
\vspace{-0.05cm}
In this section, we study the VRTD algorithm (i.e., Algorithm \ref{vrTD2}) with Markovian samples, in which samples are generated from one single MDP path. In such a case, we expect that the convergence of VRTD to have both the variance error due to the pseudo-gradient estimation (similar to the case with i.i.d.\ samples) and the bias error due to the correlation among samples. 
To understand this at the high level, we define the bias at each iteration as $\xi_m(\theta)=(\theta-\theta^*)^\top(g_m(\theta) - g(\theta) )$. Then our analysis (see Appendix \ref{app: B}) shows that after the update of each epoch, we have
\begin{flalign}\label{eq: thm2_1}
&\mbE\left[ \ltwo{\tilde{\theta}_m-\theta^*}^2 \Big| F_{m,0}  \right]\nonumber\\
&\leq \frac{1/M+3\alpha^2 (1+\gamma)^2}{\alpha\lambda_{A}-3\alpha^2(1+\gamma)^2}\ltwo{\tilde{\theta}_{m-1}-\theta^*}^2 +
 \frac{3\alpha}{\lambda_{A}-3\alpha(1+\gamma)^2}\mbE\left[\ltwo{g_m(\theta^*)}^2\Big| F_{m, 0} \right] \nonumber\\
&\quad + \frac{2}{[\lambda_{A}-3\alpha(1+\gamma)^2]M}\sum_{i=0}^{M-1} \mbE\left[\xi_m(\theta_{m,i}) \Big| F_{m,0} \right]
\end{flalign}
The first term on the right-hand side of \cref{eq: thm2_1} captures the epochwise contraction property of Algorithm \ref{vrTD2}. The second term is due to the variance of the pseudo-gradient estimation, which captures how well the batch pseudo-gradient $g_m(\theta^*)$ approximates the mean pseudo-gradient $g(\theta^*)$ (note that $g(\theta^*)=0$). Such a variance term can be shown to decay to zero as the batch size gets large similarly to the i.i.d. case. The third term captures the bias introduced by the correlation among samples in the $m$-th epoch. To quantitatively understand this error term, we provide the following lemma that characterizes how the bias error is controlled by the batch size $M$.
%\begin{lemma}\label{lemma: bias}
%	For any $m>0$ and any $\theta\in B_\theta$, which is a ball with the radius $R_\theta$, we have
%	\begin{flalign*}
%	%\mbE[\xi_{m}(\theta)]\leq \frac{\lambda_A}{4}\ltwo{\theta-\theta^*}^2 + 8[(1+\gamma)^2R_\theta d^6 + r_{\max}^2 d^3] \frac{\pi C_0}{\lambda_A M}
%	\mbE[\xi_{m}(\theta)]\leq 4[(1+\gamma)R_\theta^2d^3 + r_{\max}R	_{\theta}d^{\frac{3}{2}}] \sqrt{\frac{\pi C_0}{M}},
%	\end{flalign*}
%%{\color{red}[change the order on M by the technique we talked about today]}
%	where the expectation is over the random trajectory, $\theta$ is treated as a fixed variable, and $0<C_0<\infty$ is a constant depending only on the MDP.
%\end{lemma}
\begin{lemma}\label{lemma: bias}
	For any $m>0$ and any $\theta\in B_\theta$, which is a ball with the radius $R_\theta$, we have
	\begin{flalign*}
	\mbE[\xi_{m}(\theta)]\leq  \frac{\lambda_A}{4}\mbE[\ltwo{\theta-\theta^*}^2|\mcf_{n,0}] + \frac{8[1+(\kappa-1)\rho]}{\lambda_A(1-\rho)M} [R^2_\theta (1+\gamma)^2+r^2_{\max}],
	%\mbE[\xi_{m}(\theta)]\leq 4[(1+\gamma)R_\theta^2d^3 + r_{\max}R	_{\theta}d^{\frac{3}{2}}] \sqrt{\frac{\pi C_0}{M}},
	\end{flalign*}
	%{\color{red}[change the order on M by the technique we talked about today]}
	where the expectation is over the random trajectory, $\theta$ is treated as a fixed variable, and $0<C_0<\infty$ is a constant depending only on the MDP.
\end{lemma}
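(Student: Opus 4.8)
\textbf{Proof plan for Lemma \ref{lemma: bias}.}
The plan is to condition on the $\sigma$-field $\mcf_{m,0}$ and then average over the fresh randomness of the $m$-th epoch. Since the last sample used before epoch $m$ is $x_{(m-1)M-1}=(s_{(m-1)M-1},r_{(m-1)M-1},s_{(m-1)M})$, the initial state $s_{(m-1)M}$ of epoch $m$ is $\mcf_{m,0}$-measurable, and by the Markov property the conditional law of $s_{(m-1)M+t}$ given $\mcf_{m,0}$ depends only on $s_{(m-1)M}$. Treating $\theta\in B_\theta$ as a fixed point and recalling $g_m(\theta)=\frac1M\sum_{i=(m-1)M}^{mM-1}g_{x_i}(\theta)$ and $g(\theta)=A\theta+b$, we first write
\begin{align*}
\mbE[\xi_m(\theta)\mid\mcf_{m,0}]
&=(\theta-\theta^*)^\top\big(\mbE[g_m(\theta)\mid\mcf_{m,0}]-g(\theta)\big)\\
&=\frac1M\sum_{t=0}^{M-1}(\theta-\theta^*)^\top\big(\mbE[g_{x_{(m-1)M+t}}(\theta)\mid\mcf_{m,0}]-g(\theta)\big),
\end{align*}
so it suffices to bound each per-sample bias term.

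Next I would control the per-sample bias using geometric ergodicity. Writing $i=(m-1)M+t$ and applying the tower property over $s_{i+1}$, one has $\mbE[A_{x_i}\mid\mcf_{m,0}]=\mbE[\Psi(s_i)\mid s_{(m-1)M}]$ with $\Psi(s):=\phi(s)\big(\gamma\,\mbE[\phi(s')\mid s]-\phi(s)\big)^\top$, and $\mbE[b_{x_i}\mid\mcf_{m,0}]=\mbE[r^\pi(s_i)\phi(s_i)\mid s_{(m-1)M}]$, while $A=\mbE_{\mu_\pi}[\Psi(s)]$ and $b=\mbE_{\mu_\pi}[r^\pi(s)\phi(s)]$. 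By Assumption \ref{ass2} we have $\ltwo{\Psi(s)}\le 1+\gamma$ and $\ltwo{r^\pi(s)\phi(s)}\le r_{\max}$ uniformly in $s$, and by Assumption \ref{ass3} the law $P_t(\cdot):=\mbP(s_{(m-1)M+t}\in\cdot\mid s_{(m-1)M})$ obeys $d_{TV}(P_t,\mu_\pi)\le\min\{1,\kappa\rho^t\}$. Using the elementary bound $\ltwo{\int f\,dP-\int f\,dQ}\le 2\big(\sup_s\ltwo{f(s)}\big)d_{TV}(P,Q)$ for vector/matrix-valued $f$, this yields
\begin{align*}
\ltwo{\mbE[g_{x_{(m-1)M+t}}(\theta)\mid\mcf_{m,0}]-g(\theta)}
\le 2\min\{1,\kappa\rho^t\}\big((1+\gamma)\ltwo{\theta}+r_{\max}\big)
\le 2\min\{1,\kappa\rho^t\}\big((1+\gamma)R_\theta+r_{\max}\big).
\end{align*}

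To finish I would use Cauchy--Schwarz and then Young's inequality \emph{applied to each of the $M$ terms separately} (this is the step that produces the claimed $\mathcal O(1/M)$ order with the stated constants, rather than a weaker bound): for every $t$,
\begin{align*}
(\theta-\theta^*)^\top\big(\mbE[g_{x_{(m-1)M+t}}(\theta)\mid\mcf_{m,0}]-g(\theta)\big)
\le\frac{\lambda_A}{4}\ltwo{\theta-\theta^*}^2+\frac{4}{\lambda_A}\min\{1,\kappa\rho^t\}^2\big((1+\gamma)R_\theta+r_{\max}\big)^2.
\end{align*}
Averaging over $t=0,\dots,M-1$, the $\ltwo{\theta-\theta^*}^2$ contributions average to exactly $\tfrac{\lambda_A}{4}\ltwo{\theta-\theta^*}^2$; for the remainder I would use $\min\{1,\kappa\rho^t\}^2\le\min\{1,\kappa\rho^t\}$, the geometric sum $\sum_{t=0}^{M-1}\min\{1,\kappa\rho^t\}\le 1+\sum_{t\ge1}\kappa\rho^t=\tfrac{1+(\kappa-1)\rho}{1-\rho}$, and $\big((1+\gamma)R_\theta+r_{\max}\big)^2\le 2\big((1+\gamma)^2R^2_\theta+r^2_{\max}\big)$, to obtain
\begin{align*}
\mbE[\xi_m(\theta)\mid\mcf_{m,0}]\le\frac{\lambda_A}{4}\ltwo{\theta-\theta^*}^2+\frac{8[1+(\kappa-1)\rho]}{\lambda_A(1-\rho)M}\big[(1+\gamma)^2R^2_\theta+r^2_{\max}\big].
\end{align*}
Since the right-hand side is deterministic, taking a further expectation over $\mcf_{m,0}$ gives the stated claim (with $\theta$ fixed, $\mbE[\ltwo{\theta-\theta^*}^2\mid\mcf_{m,0}]=\ltwo{\theta-\theta^*}^2$).

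\textbf{Main obstacle.} The two displays of algebra are routine; the delicate points are (i) the measure-theoretic bookkeeping, namely checking that $s_{(m-1)M}$ is $\mcf_{m,0}$-measurable so that Assumption \ref{ass3} governs the drift of $s_{(m-1)M+t}$, and correctly reducing $g_{x_i}(\theta)$, which depends on both $s_i$ and $s_{i+1}$, to a function of $s_i$ alone via the tower property; and (ii) noticing that Young's inequality must be applied term by term \emph{before} averaging over the batch---this makes the $\ltwo{\theta-\theta^*}^2$ coefficients average to exactly $\lambda_A/4$ while leaving a bias remainder of order $\mathcal O(1/M)$ (and only $d_{TV}^2\le d_{TV}$ is then needed), whereas a single application of Young's inequality after averaging, though still valid, would not reproduce the constants asserted in the lemma.
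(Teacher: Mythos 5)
Your proposal is correct and reproduces the stated constants exactly, but it proves the lemma by a genuinely different route than the paper. You control the \emph{conditional first moment} of each individual sample's pseudo-gradient: writing $\mbE[g_{x_{(m-1)M+t}}(\theta)\mid \mcf_{m,0}]-g(\theta)$ as an integral of the bounded function $\Psi(s)\theta+r^\pi(s)\phi(s)$ against the difference between the $t$-step conditional law and $\mu_\pi$, you get a per-sample bias of order $\min\{1,\kappa\rho^t\}$, and then apply Young's inequality term by term before averaging over the epoch. The paper instead applies Young's inequality once to $(\theta-\theta^*)^\top(g_m(\theta)-g(\theta))$ (as in \cref{eq: biasub}) and then bounds the conditional \emph{second moment} $\mbE[\ltwo{g_m(\theta)-g(\theta)}^2\mid\mcf_{m,0}]$ by expanding the square of the batch average: diagonal terms are handled by boundedness ($\ltwo{A_{x_i}}\le 1+\gamma$, $\ltwo{b_{x_i}}\le r_{\max}$) and the $O(M^2)$ cross terms by the mixing bound of Lemma \ref{lemma: markov_Ab}, $\mbE[\langle A_{x_i}-A,A_{x_j}-A\rangle\mid\mcf_{m,0}]\le 2\kappa(1+\gamma)^2\rho^{|i-j|}$, which again sums to $O(1/M)$. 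Your argument is somewhat more elementary (no pairwise-correlation bookkeeping, only a TV bound on single-sample conditional laws), and as you note it could even yield an $O(1/M^2)$ remainder if one applied Young after averaging, at the price of a different constant; the paper's second-moment route has the advantage of directly matching the decomposition \cref{eq: biasub} advertised in the main text and of paralleling the variance computation for $g_m(\theta^*)$ in Step 3 of the proof of Theorem \ref{theorem: non-iid}. Your measure-theoretic points (measurability of $s_{(m-1)M}$ with respect to $\mcf_{m,0}$, reduction of $g_{x_i}(\theta)$ to a function of $s_i$ via the tower property, and $d_{TV}\le\min\{1,\kappa\rho^t\}$ for the $t=0$ term) are exactly the right checks, and the final constants $\frac{\lambda_A}{4}$ and $\frac{8[1+(\kappa-1)\rho]}{\lambda_A(1-\rho)M}[R_\theta^2(1+\gamma)^2+r_{\max}^2]$ coincide with the paper's.
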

%In \Cref{lemma: bias}, the constant $C_0$ depends proportionally on the coupling time and the returning time of the underlying Markov chain. Specifically, the coupling time indicates how fast the Markov chain converges to the stationary distribution, and the returning time, which depends on the nature of the stationary distribution, captures the expected time that the Markov chain returns to the same state. Although it is in general difficult to obtain an explicit form for $C_0$, the value of $C_0$ is typically small if the Markov chain has a small mixing time and the stationary distribution is less degenerate.
Lemma \ref{lemma: bias} shows that the bias error diminishes as the batch size $M$ increases and the algorithm approaches to the fixed point $\theta^*$. To explain why this happens, the definition of $\xi_m(\theta)$ immediately yields the following bound: 
\begin{flalign}\label{eq: biasub}
	\xi_m(\theta)\leq  \frac{1}{\lambda_A}\ltwo{g_n(\theta)-g(\theta)}^2 + \frac{\lambda_A}{4}\ltwo{\theta-\theta^*}^2.
\end{flalign}
The first term on the right-hand-side of \cref{eq: biasub} can be bounded by the concentration property for the ergodic process as $g_m(\theta)=\frac{1}{M}\sum_{i=(m-1)M}^{mM-1}g_{x_i}(\theta)\overset{a.s.}{\rightarrow} g(\theta)$. As $M$ increases, the randomness due to the pseudo-gradient estimation is essentially averaged out due to the variance reduction step in VRTD, which implicitly eliminates its correlation from samples in the previous epochs.
%To understand why 
%Moreover, since the sum of the second term over any $m$-th epoch contributes to the convergence of $ ||\tilde{\theta}_m-\theta^*||^2$. The accumulated bias error caused by Markovian sampling essentially decreases to $0$ at a sublinear rate as $M$ increases as shown in Theorem \ref{theorem: non-iid}.} 

%this difference should be small when $M$ is sufficient large, as for any fixed $\theta\in \mbR^d$, we have $g_m(\theta)=\frac{1}{M}\sum_{i=(m-1)M}^{mM-1}g_{x_i}(\theta)\overset{a.s.}{\rightarrow} g(\theta)$ due to the ergodicity. However, since $\theta_{m,i}$ is a random variable, we can not apply the ergodic theorem to bound this term directly. But as $\theta_{m,i}$ has bounded $\ell_2$-norm, the randomness of $\theta_{m,i}$ would eventually be averaged out by the geometric ergodicity of the MDP.

As a comparison, the bias error in vanilla TD has been shown to be bounded by $\mbE[\xi_{n}(\theta)]=\mathcal{O}(\alpha\log(1/\alpha))$
%$\mbE[\xi_{n}(\theta)]\leq 2\alpha G^2(2 + 3\log_{1/\rho}(\alpha/\kappa))$ 
\cite{bhandari2018finite}. In order to reduce the bias and achieve a high convergence accuracy, the stepsize $\alpha$ is required to be small, which causes the algorithm to run very slowly. The advantage of VRTD is that the bias can be reduced by choosing a sufficiently large batch size $M$ so that the stepsize can still be kept at a desirable constant to guarantee fast convergence. 
%However, the following theorem tells us we can still achieven a high accuracy with a large stepsize, as long as we can choose a sufficient large batch $M$.
%Similar to \eqref{eq: thm1_2}, the third term on the right hand side of \eqref{eq: thm2_1} is the gradient approximation error and the second term correspond to the bias due to the correlation between samples in $m$-th epoch. We have the following lemma to characterize those two terms
%\begin{lemma}
%	For any $m>0$ and $0\leq k \leq m-1$, we have
%	\begin{flalign}\label{eq: lemma2_1}
%		\mbE[\ltwo{g_{m-k}(\theta^*)}^2]\leq  \frac{C_4}{M},
%	\end{flalign}
%	and
%	\begin{flalign}\label{eq: lemma2_2}
%		\mbE[\xi_m(\theta_{m,k})]\leq (8R_\theta + 3L_\xi M(\Delta+1))G\alpha
%	\end{flalign}
%	where $C_4=G^2 + \frac{2\rho\kappa G[(1+\gamma)R_\theta+r_{\max}]}{(1-\rho)}$, $L_\xi = 4R_\theta (1+\gamma) + 2G$ and $\Delta=(\ceil{\frac{\tau_\alpha}{M}}+1)$.
%\end{lemma}
\begin{theorem}\label{theorem: non-iid}
	 Consider the VRTD algorithm in Algorithm \ref{vrTD2}. Suppose Assumptions \ref{ass1}--\ref{ass3} hold. Set the constant stepsize $\alpha<\frac{\lambda_A}{12(1+\gamma)^2}$ and the batch size $M>\frac{1}{0.5\alpha\lambda_A-6\alpha^2(1+\gamma)^2}$.
%	 which guarantees that there exists a sufficiently large $M$ such that
%	\begin{flalign*}
%		C_1 = \frac{1/M+3\alpha^2 (1+\gamma)^2}{\alpha\lambda_{A}-3\alpha^2(1+\gamma)^2}<1.
%	\end{flalign*}
	Then, we have
%	
%	\textbf{Dimension free bound:}
%	\begin{flalign}
%	&\mbE\left[ \ltwo{\tilde{\theta}_m-\theta^*}^2 \right]\nonumber\nonumber\\
%	&\leq C_1^m\ltwo{\tilde{\theta}_0-\theta^*}^2 + \frac{3C_4\alpha}{(1-C_1)[\lambda_{A}-3\alpha(1+\gamma)^2]M}\nonumber\\
%	&\quad + \frac{2(8R_\theta + 3L_\xi M(\Delta+1))G\alpha}{(1-C_1)[\lambda_{A}-3\alpha(1+\gamma)^2]},\label{thm2: eq1}
%	\end{flalign}
%	
%	\textbf{Dimension depend bound:}
\begin{flalign}
&\mbE\left[ \ltwo{\tilde{\theta}_m-\theta^*}^2 \right] \leq C_1^m\ltwo{\tilde{\theta}_0-\theta^*}^2 + \frac{3C_4 \alpha + C_2/\lambda_A}{(1-C_1)[0.5\lambda_{A}-3\alpha(1+\gamma)^2]M},\label{thm2: eq2}
\end{flalign}
%	\begin{flalign}
%	&\mbE\left[ \ltwo{\tilde{\theta}_m-\theta^*}^2 \right]\nonumber\\
%	&\leq C_1^m\ltwo{\tilde{\theta}_0-\theta^*}^2 + \frac{3C_4 \alpha}{(1-C_1)[\lambda_{A}-3\alpha(1+\gamma)^2]M} + \frac{8[(1+\gamma)R_\theta^2d^3 + r_{\max}R	_{\theta}d^{\frac{3}{2}}]}{(1-C_1)[\lambda_{A}-3\alpha(1+\gamma)^2]} \sqrt{\frac{\pi C_0}{M}}, \label{thm2: eq2}
%	\end{flalign}
	where $C_1 = \frac{1/M+3\alpha^2 (1+\gamma)^2}{0.5\alpha\lambda_{A}-3\alpha^2(1+\gamma)^2}$ (with $C_1<1$ due to the choices for $\alpha$ and $M$), $C_2=\frac{16[1+(\kappa-1)\rho][R^2_\theta (1+\gamma)^2+r^2_{\max}]}{1-\rho}$ and $C_4=[(1+\gamma)R_\theta + r_{\max}]^2 + \frac{2\rho\kappa G[(1+\gamma)R_\theta+r_{\max}]}{1-\rho}$.
\end{theorem}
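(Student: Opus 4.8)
The plan is to iterate the one-epoch recursion~\eqref{eq: thm2_1} and to control the variance term $\mbE[\ltwo{g_m(\theta^*)}^2 \mid F_{m,0}]$ and the bias term $\sum_{i=0}^{M-1}\mbE[\xi_m(\theta_{m,i}) \mid F_{m,0}]$ by the batch size $M$. First I would bound the variance term: since $g(\theta^*) = 0$, we have $g_m(\theta^*) = \frac1M\sum_{i=(m-1)M}^{mM-1} g_{x_i}(\theta^*)$, so by Assumption~\ref{ass2}, the boundedness of the reward, and the projection radius $R_\theta$, each summand $g_{x_i}(\theta^*)$ is bounded in norm by a constant of order $[(1+\gamma)R_\theta + r_{\max}]$; combining this with the geometric-ergodicity bound of Assumption~\ref{ass3} (to handle the fact that the $x_i$ are correlated and only asymptotically distributed as $\mu_\pi$) gives $\mbE[\ltwo{g_m(\theta^*)}^2 \mid F_{m,0}] \le C_4/M$, which explains the constant $C_4$ appearing in the statement. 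For the bias term, I would invoke Lemma~\ref{lemma: bias} applied at each $\theta = \theta_{m,i}$ — here the crucial point is that within the $m$-th epoch the iterates $\theta_{m,i}$ lie in $B_\theta$ (guaranteed by the projection $\mcpi_{R_\theta}$) and the lemma treats $\theta$ as fixed, so it applies pointwise — yielding $\mbE[\xi_m(\theta_{m,i}) \mid F_{m,0}] \le \frac{\lambda_A}{4}\mbE[\ltwo{\theta_{m,i}-\theta^*}^2 \mid F_{m,0}] + \frac{C_2}{4\lambda_A M}$ up to constants.

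The next step is to absorb the $\frac{\lambda_A}{4}\ltwo{\theta_{m,i}-\theta^*}^2$ contributions coming from the bias bound back into the contraction. Here I expect the argument in Appendix~\ref{app: B} already produces~\eqref{eq: thm2_1} in a form where the inner-loop second-moment terms $\mbE[\ltwo{\theta_{m,i}-\theta^*}^2]$ have been telescoped/bounded in terms of $\ltwo{\tilde\theta_{m-1}-\theta^*}^2$ (using the non-expansiveness of the projection and the one-step inner recursion); so after plugging Lemma~\ref{lemma: bias} into the third term of~\eqref{eq: thm2_1} and collecting the $\ltwo{\tilde\theta_{m-1}-\theta^*}^2$ coefficients, one obtains the clean epoch recursion
\begin{flalign*}
\mbE\!\left[\ltwo{\tilde\theta_m - \theta^*}^2 \,\Big|\, F_{m,0}\right] \le C_1\, \ltwo{\tilde\theta_{m-1}-\theta^*}^2 + \frac{3C_4\alpha + C_2/\lambda_A}{[0.5\lambda_A - 3\alpha(1+\gamma)^2]M},
\end{flalign*}
with $C_1 = \frac{1/M + 3\alpha^2(1+\gamma)^2}{0.5\alpha\lambda_A - 3\alpha^2(1+\gamma)^2}$. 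Then I would check $C_1 < 1$: this is exactly equivalent to $1/M + 3\alpha^2(1+\gamma)^2 < 0.5\alpha\lambda_A - 3\alpha^2(1+\gamma)^2$, i.e. $1/M < 0.5\alpha\lambda_A - 6\alpha^2(1+\gamma)^2$, which is the stated condition on $M$ (and the right-hand side is positive precisely because $\alpha < \frac{\lambda_A}{12(1+\gamma)^2}$).

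Finally I would take total expectations, unroll the recursion over $m$ epochs, and sum the resulting geometric series: $\mbE[\ltwo{\tilde\theta_m - \theta^*}^2] \le C_1^m \ltwo{\tilde\theta_0 - \theta^*}^2 + \frac{1}{1-C_1}\cdot\frac{3C_4\alpha + C_2/\lambda_A}{[0.5\lambda_A - 3\alpha(1+\gamma)^2]M}$, which is exactly~\eqref{thm2: eq2}. The main obstacle I anticipate is the bookkeeping in the second step — carefully tracking how the inner-loop second moments $\mbE[\ltwo{\theta_{m,i}-\theta^*}^2]$ are dominated by $\ltwo{\tilde\theta_{m-1}-\theta^*}^2$ plus lower-order $\alpha^2/M$ terms, and making sure the extra $\frac{\lambda_A}{4}$-factors from the $M$-fold sum of bias bounds in~\eqref{eq: thm2_1} (each multiplied by $\frac{2}{[\lambda_A - 3\alpha(1+\gamma)^2]M}$) really do combine into the advertised $C_1$ rather than a larger constant; this is where the precise constants $3\alpha^2(1+\gamma)^2$ versus $4\alpha^2(1+\gamma)^2$ (as in the i.i.d. case) get pinned down. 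The variance and bias tail terms themselves are routine once Lemma~\ref{lemma: bias} and the $C_4/M$ variance bound are in hand.
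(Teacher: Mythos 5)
Your proposal is correct and follows essentially the same route as the paper's proof in Appendix \ref{app: B}: the epoch recursion of \cref{eq: thm2_1}, Lemma \ref{lemma: bias} applied at each projected inner iterate, a variance bound $\mbE[\ltwo{g_m(\theta^*)}^2]\leq C_4/M$ obtained from Lemmas \ref{lemma: gradient_bound} and \ref{lemma: markov_Ab} (boundedness for the diagonal terms, geometric ergodicity for the cross terms), the observation that $C_1<1$ is equivalent to $1/M<0.5\alpha\lambda_A-6\alpha^2(1+\gamma)^2$, and the geometric unrolling over epochs. The one bookkeeping point you leave open is handled in the paper not by dominating $\mbE[\ltwo{\theta_{m,i}-\theta^*}^2]$ through $\ltwo{\tilde{\theta}_{m-1}-\theta^*}^2$, but by keeping the telescoped sum $\sum_{i=0}^{M-1}\mbE[\ltwo{\theta_{m,i}-\theta^*}^2\,|\,F_{m,0}]$ on the left-hand side of the inner-loop inequality, subtracting the $\tfrac{\alpha\lambda_A}{2}\sum_i\mbE[\ltwo{\theta_{m,i}-\theta^*}^2\,|\,F_{m,0}]$ contribution of the bias bound there (which is exactly what turns $\alpha\lambda_A$ into $0.5\alpha\lambda_A$ in the denominators), and then using that $\tilde{\theta}_m$ is drawn uniformly from the inner iterates so the averaged sum equals $\mbE[\ltwo{\tilde{\theta}_m-\theta^*}^2\,|\,F_{m,0}]$, which yields precisely the advertised $C_1$ rather than a larger constant.
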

\vspace{-0.1cm}
%{\color{red}For simply expression, we rewrite equation \ref{thm2: eq2} as $\mbE[ ||\tilde{\theta}_m-\theta^*||^2 ]\leq O(C_1^m)+O(1/\sqrt{M})$.} 

We note that the convergence rate in \cref{thm2: eq2} can be written in a simpler form as $\mbE[ ||\tilde{\theta}_m-\theta^*||^2 ]\leq C_1^m||\tilde{\theta}_0-\theta^*||^2+\mathcal{O}(1/M)$. 

Theorem \ref{theorem: non-iid} shows that VRTD (i.e., Algorithm \ref{vrTD2}) with Markovian samples converges to a neighborhood of $\theta^*$ at a linear rate, and the size of the neighborhood (i.e., the convergence error) decays sublinearly with the batch size $M$. More specifically, the first term in the right-hand side of \cref{thm2: eq2} captures the linear convergence of the algorithm, the second term corresponds to the sum of the cumulative pseudo-gradient estimation error and the cumulative bias error. For the fixed stepsize, the total convergence error is dominated by the sum of those two error terms with the order $\mathcal{O}(1/M)$. Therefore, the variance reduction in Algorithm \ref{vrTD2} reduces both the variance and the bias of the pseudo-gradient estimator. 

%%%%%%% complexity for non i.i.d.
Based on the convergence rate of VRTD characterized in Theorem \ref{theorem: non-iid} under Markovian sampling, we obtain the following bound on the corresponding computational complexity.
\begin{corollary}\label{noniidcomplexity}
Suppose Assumptions \ref{ass1}-\ref{ass3} hold. Let $\alpha = \frac{\lambda_A}{24(1+\gamma)^2}$ and $M=\ceil{(\frac{32C_2+4C_4 }{3(1-C_1)}+100(1+\gamma)^2)\max\{\frac{1}{\epsilon},\frac{1}{\epsilon \lambda^2_A}\}}$. Then, for any $\epsilon>0$, an $\epsilon$-accuracy solution (i.e., $\mbE||\tilde{\theta}_m-\theta^*||^2\leq \epsilon$) can be attained with at most $m=\ceil{\log\frac{2\ltwo{\tilde{\theta}_0-\theta^*}^2}{\epsilon}/\log\frac{1}{C_1}}$ iterations. Correspondingly, the total number of pseudo-gradient computations required by VRTD (i.e., Algorithm \ref{vrTD2}) under Markovian\ sampling to attain such an $\epsilon$-accuracy solution is at most 
	\begin{flalign*}
	\mathcal{O}\left( \max\Big\{\frac{1}{\epsilon},\frac{1}{\epsilon \lambda^2_A}\Big\} \log\frac{1}{\epsilon}  \right).
	\end{flalign*}
\end{corollary}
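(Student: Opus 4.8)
The plan is to mirror the short argument used for Corollary~\ref{iidcomplexity}: substitute the prescribed $\alpha$ and $M$ into the non-asymptotic bound \cref{thm2: eq2} of Theorem~\ref{theorem: non-iid}, split its right-hand side into the geometric term and the error-floor term, show each is at most $\epsilon/2$, and finally count pseudo-gradient evaluations. The first task is to confirm that $\alpha=\frac{\lambda_A}{24(1+\gamma)^2}$ and the stated $M$ satisfy the hypotheses of Theorem~\ref{theorem: non-iid}. The bound $\alpha<\frac{\lambda_A}{12(1+\gamma)^2}$ is immediate. Substituting $\alpha=\frac{\lambda_A}{24(1+\gamma)^2}$ gives $0.5\alpha\lambda_A-6\alpha^2(1+\gamma)^2=\frac{\lambda_A^2}{96(1+\gamma)^2}$, so the requirement $M>\frac{1}{0.5\alpha\lambda_A-6\alpha^2(1+\gamma)^2}$ reads $M>\frac{96(1+\gamma)^2}{\lambda_A^2}$; this is guaranteed by the $100(1+\gamma)^2\max\{1/\epsilon,1/(\epsilon\lambda_A^2)\}$ contribution to $M$, once we assume (without loss of generality) $\epsilon\le 1$, since then $\max\{1/\epsilon,1/(\epsilon\lambda_A^2)\}\ge\max\{1,1/\lambda_A^2\}$. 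The same inequality forces $C_1<1$, because $C_1<1$ is equivalent to $1/M<0.5\alpha\lambda_A-6\alpha^2(1+\gamma)^2$.

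Next I would bound the two terms of \cref{thm2: eq2}. For the geometric term, $C_1^m\ltwo{\tilde\theta_0-\theta^*}^2\le\epsilon/2$ is precisely $m\ge\log\frac{2\ltwo{\tilde\theta_0-\theta^*}^2}{\epsilon}\big/\log\frac{1}{C_1}$, so the stated $m=\ceil{\log\frac{2\ltwo{\tilde\theta_0-\theta^*}^2}{\epsilon}/\log\frac{1}{C_1}}$ suffices. For the error-floor term, substituting $\alpha=\frac{\lambda_A}{24(1+\gamma)^2}$ simplifies $0.5\lambda_A-3\alpha(1+\gamma)^2$ to $\frac{3\lambda_A}{8}$ and $3C_4\alpha$ to $\frac{C_4\lambda_A}{8(1+\gamma)^2}$, so the floor equals $\frac{1}{(1-C_1)M}\big(\frac{C_4}{3(1+\gamma)^2}+\frac{8C_2}{3\lambda_A^2}\big)$. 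Bounding each summand by $\epsilon/4$ requires $M\ge\frac{4C_4}{3(1+\gamma)^2(1-C_1)\epsilon}$ and $M\ge\frac{32C_2}{3\lambda_A^2(1-C_1)\epsilon}$; using $(1+\gamma)^2\ge 1$, the first is implied by the $\frac{4C_4}{3(1-C_1)}\cdot\frac{1}{\epsilon}$ piece and the second by the $\frac{32C_2}{3(1-C_1)}\cdot\frac{1}{\epsilon\lambda_A^2}$ piece of the chosen $M$, both of which are dominated by $\frac{32C_2+4C_4}{3(1-C_1)}\max\{\frac1\epsilon,\frac1{\epsilon\lambda_A^2}\}$. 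Hence the floor is at most $\epsilon/2$, and together with the geometric bound this gives $\mbE\ltwo{\tilde\theta_m-\theta^*}^2\le\epsilon$.

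For the complexity count, each epoch of Algorithm~\ref{vrTD2} performs a number of pseudo-gradient evaluations on the order of $M$ (the $M$ terms forming $g_m(\tilde\theta_{m-1})$ plus a constant number per inner-loop step), so the total over $m$ epochs is of order $mM$. Plugging in $M=\mathcal O\!\big(\max\{1/\epsilon,1/(\epsilon\lambda_A^2)\}\big)$ and $m=\mathcal O(\log\frac1\epsilon)$ --- with $C_1$, $C_2$, $C_4$ and $\gamma$ treated as constants --- yields the claimed $\mathcal O\!\big(\max\{\frac1\epsilon,\frac1{\epsilon\lambda_A^2}\}\log\frac1\epsilon\big)$.

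The step requiring the most care is the estimate $m=\mathcal O(\log\frac1\epsilon)$, which hinges on $1/\log\frac{1}{C_1}$ staying bounded as $\epsilon\to0$. One must check that with the fixed stepsize $\alpha=\frac{\lambda_A}{24(1+\gamma)^2}$ the contraction factor $C_1=\frac{1/M+3\alpha^2(1+\gamma)^2}{0.5\alpha\lambda_{A}-3\alpha^2(1+\gamma)^2}$ tends to $1/3$ as $M\to\infty$ and stays strictly below $1$, bounded away from $1$ uniformly, for every admissible $M$ --- in particular this resolves the mild circularity that the formula for $M$ itself contains $C_1$ (e.g.\ by invoking a crude uniform upper bound on $\frac{1}{1-C_1}$). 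Everything else is the routine substitution of the constants above.
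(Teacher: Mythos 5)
Your proposal is correct and follows essentially the same route as the paper's own (very terse) proof: substitute the prescribed $\alpha$ and $M$ into the bound of Theorem~\ref{theorem: non-iid}, check that the geometric term and the error floor are each at most $\epsilon/2$ for the stated $m$, and count roughly $2mM$ pseudo-gradient evaluations. The only difference is that you spell out what the paper dismisses as ``easily checked,'' including the useful observation that $C_1$ stays uniformly bounded away from $1$ (tending to $1/3$ as $M\to\infty$), which justifies treating $1/(1-C_1)$ and $1/\log\frac{1}{C_1}$ as constants.
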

\begin{proof}
Given the values of $\alpha$ and $M$ in the theorem, it can be easily checked that $\mbE||\tilde{\theta}_m-\theta^*||^2\leq \epsilon$ for $m=\ceil{\log\frac{2\ltwo{\tilde{\theta}_0-\theta^*}^2}{\epsilon}/\log\frac{1}{C_1}}$. Then the total number of pseudo-gradient computations is given by $2mM$ that yields the desired order given in the theorem.
\end{proof}
%\begin{proof}
%	Recall Theorem \ref{theorem: non-iid}, we have
%	\begin{flalign*}
%	&\mbE\left[ \ltwo{\tilde{\theta}_m-\theta^*}^2 \right] \leq C_1^m\ltwo{\tilde{\theta}_0-\theta^*}^2 + \frac{3C_4 \alpha + C_2 C_0/\lambda_A}{(1-C_1)[0.5\lambda_{A}-3\alpha(1+\gamma)^2]M}.
%	\end{flalign*}
%	First, let $\frac{C_2 C_0/\lambda_A}{(1-C_1)[0.5\lambda_{A}-3\alpha(1+\gamma)^2]M}\leq \frac{\epsilon}{4}$, we have
%	\begin{flalign*}
%		M\geq \frac{4C_2 C_0}{(1-C_1)\lambda_A[0.5\lambda_{A}-3\alpha(1+\gamma)^2]\epsilon}=\frac{32C_2C_0}{3(1-C_1)\lambda^2_A\epsilon}.
%	\end{flalign*}
%	Then, let $\frac{3C_4 \alpha}{(1-C_1)[0.5\lambda_{A}-3\alpha(1+\gamma)^2]M}\leq \frac{\epsilon}{4}$, we have
%	\begin{flalign*}
%		M\geq \frac{12C_4 \alpha}{(1-C_1)[0.5\lambda_{A}-3\alpha(1+\gamma)^2]\epsilon}=\frac{4C_4}{3(1-C_1)(1+\gamma)^2\epsilon}.
%	\end{flalign*}
%	Recall that Theorem \ref{theorem: non-iid} also requires $M>\frac{1}{0.5\alpha\lambda_A-6\alpha^2(1+\gamma)^2}=\frac{96(1+\gamma)^2}{\lambda_A^2}$. Finally, let $C_1^m\ltwo{\tilde{\theta}_0-\theta^*}^2\leq \frac{\epsilon}{2}$, we have
%	\begin{flalign*}
%		m\geq\log_{1/C_1}\frac{2\ltwo{\tilde{\theta}_0-\theta^*}^2}{\epsilon}
%	\end{flalign*}
%	Thus, choosing $\alpha$, $M$ and $m$ in Corollary \ref{noniidcomplexity} yields $\mbE[ ||\tilde{\theta}_m-\theta^*||^2 ]\leq \epsilon$. The total sample complexity is
%	\begin{flalign*}
%	mM=\mathcal{O}\left( \max\{\frac{1}{\epsilon},\frac{1}{\epsilon \lambda^2_A}\} \log\Big(\frac{1}{\epsilon}\Big)  \right).
%	\end{flalign*}
%\end{proof}
As a comparison, consider the vanilla TD algorithm studied in \cite{bhandari2018finite} with the constant stepsize $\alpha = O(\epsilon/\log(1/\epsilon)) $. Under Markovian sampling, it can be shown (see \Cref{complexitynoniid}) that vanilla TD requires $\mathcal{O}\left( \frac{1}{\epsilon\lambda^2_A}\log^2\big(\frac{1}{\epsilon}\big)  \right)$ pseudo-gradient computations in total to obtain an $\epsilon$-accuracy solution. 
%(for sample complexity analysis please refer to subsection \ref{complexitynoniid}). 
Hence, in the Markovian setting, VRTD outperforms vanilla TD in terms of the total computational complexity by a factor of $\log\frac{1}{\epsilon}$. To intuitively explain, we first note that the correlation among data samples in the Markovian case also causes a bias error in addition to the variance error. For VRTD, due to the variance reduction scheme, the bias and variance errors are kept at the same level (with respect to the batch size) so that the bias error does not cause order-level increase in the computational complexity for VRTD. However, for vanilla TD, the bias error dominates the variance error, which turns out to require more iterations to attain an $\epsilon$-accurate solution, and yields an additional $\log\frac{1}{\epsilon}$ factor in the total complexity compared to VRTD. 

\vspace{-0.15cm}
\section{Experiments}\label{exp}
\vspace{-0.05cm}
In this section, we provide numerical results to verify our theoretical results. Note that in Appendix \ref{app:experiments}, we provide further experiments on two problems in OpenAI Gym \cite{1606.01540} and one experiment to demonstrate that VRTD is more sample-efficient than vanilla TD.

We consider an MDP with $\gamma=0.95$ and $|S|=50$. Each transition probability are randomly sampled from [0,1] and the transitions were normalized to one. The expected reward for each transition is also generated randomly in [0,1] and the reward on each transition was sampled without noise. Each component of the feature matrix ${\rm \Phi}\in \mbR^{50 \times 4}$ is randomly and uniformly sampled between 0 and 1. The baseline for comparison is the vanilla TD algorithm, which corresponds to the case with $M=1$ in our figure.
We conduct two experiments to investigate how the batch size $M$ for variance reduction affects the performance of VRTD with i.i.d. and Markovian samples. In the Markovian setting, we sample the data from a MDP trajectory. In the i.i.d. setting, we sample the data independently from the corresponding stationary distribution.  In both experiments, we set the constant stepsize to be $\alpha=0.1$ and we run the experiments for five different batch sizes: $M = 1, 50, 500, 1000, 2000$. Our results are reported in Figure \ref{fig: 1} and \ref{fig: 1.5}. All the plots report the square error over $1000$ independent runs. In each case, the left figure illustrates the convergence process over the number of pseudo-gradient computations and the right figure shows the convergence errors averaged over the last $10000$ iterations for different batch size values.
It can be seen that in both i.i.d. and Markovian settings, the averaged error decreases as the batch size increases, which corroborates both Theorem \ref{theorem: iid} and Theorem \ref{theorem: non-iid}. 
We also observe that increased batch size substantially reduces the error without much slowing down the convergence, demonstrating the desired advantage of variance reduction. Moreover,
we observe that the error of VRTD with i.i.d samples is smaller than that of VRTD with Markovian samples under all batch size settings, which indicates that the correlation among Markovian samples introduces additional errors.
\begin{figure}[h]
	\centering 
	\subfigure[left: iteration process; right: averaged convergence error]{\includegraphics[width=2.2in]{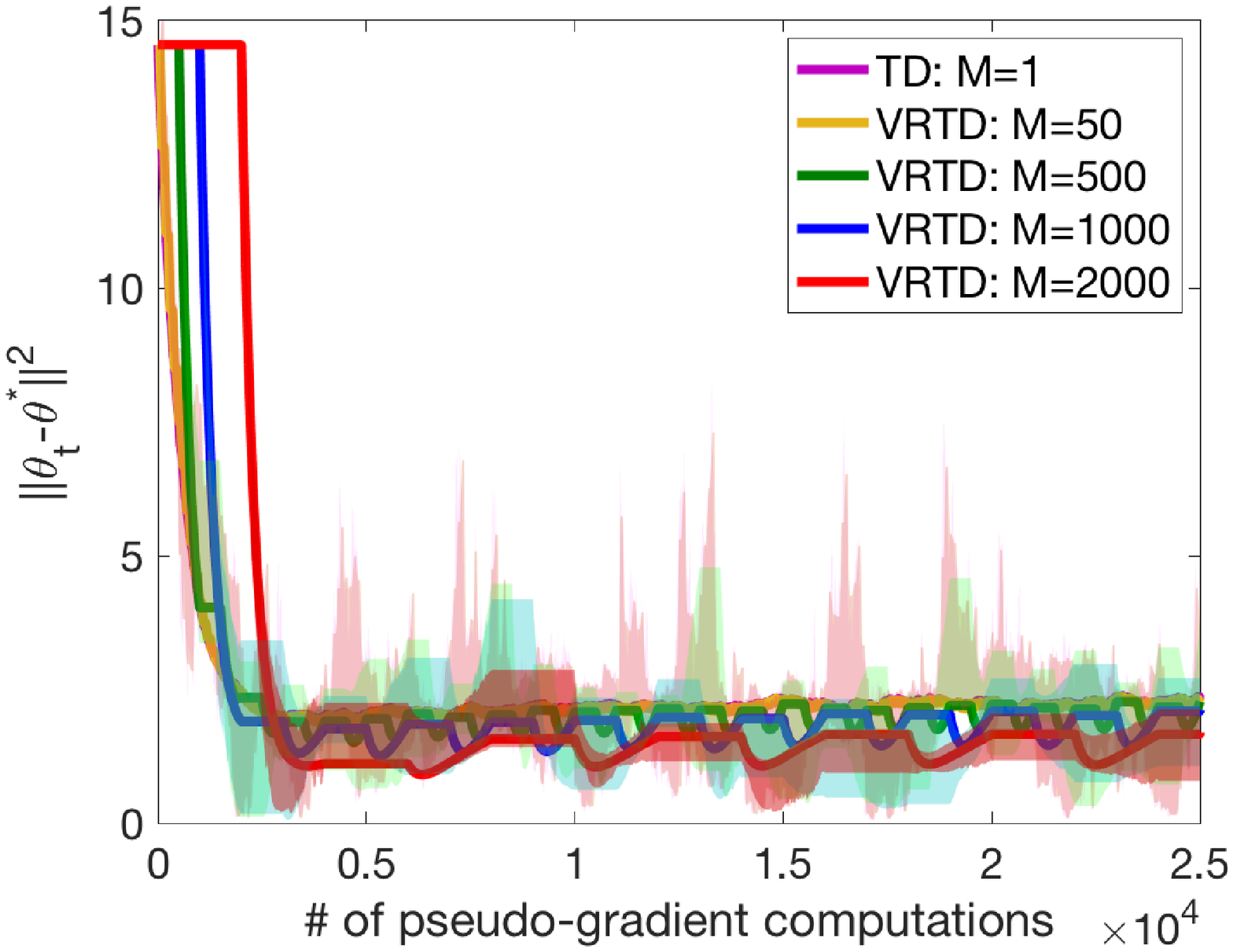}\includegraphics[width=2.2in]{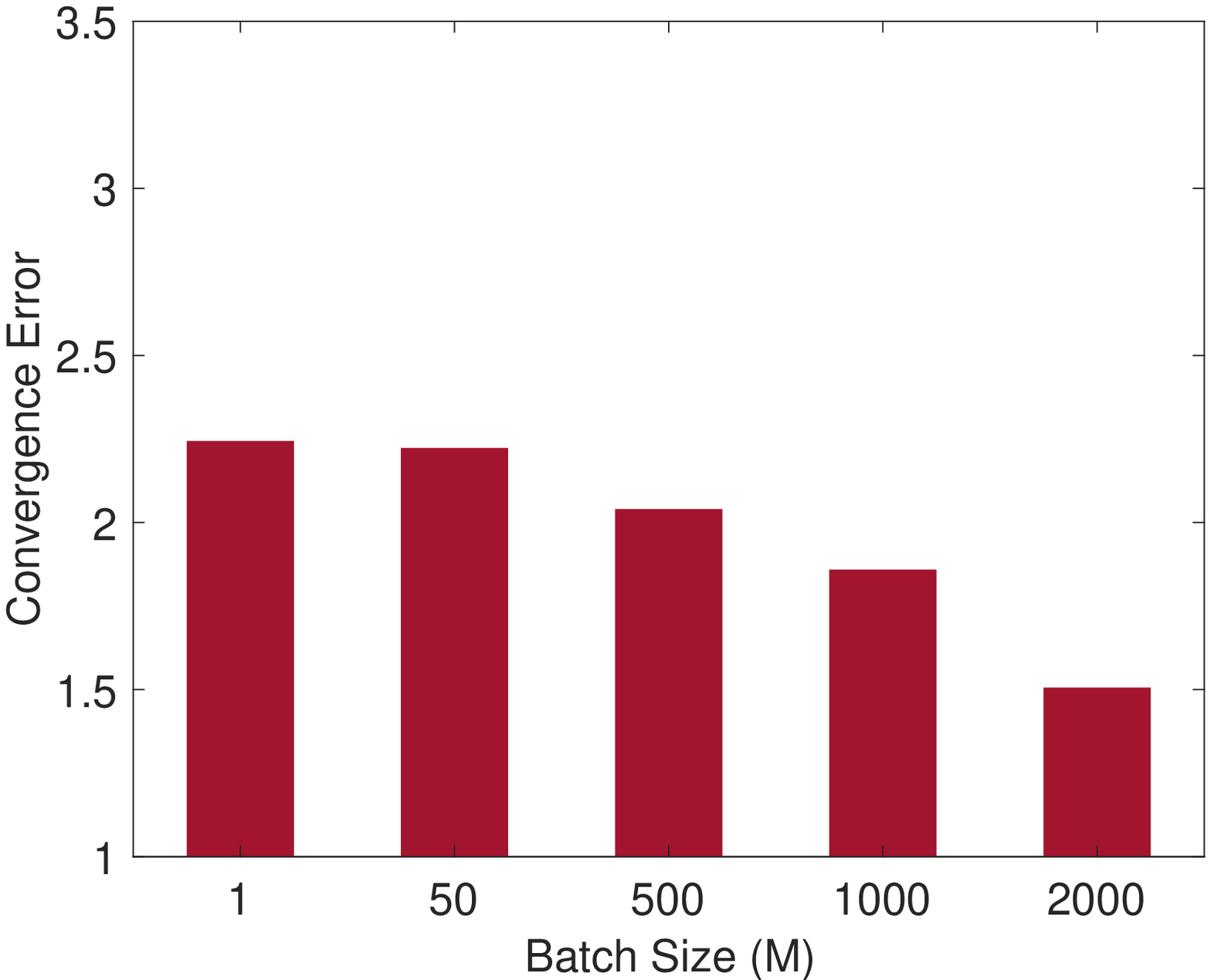}}
	\caption{Error decay of VRTD with i.i.d. sample} 
	\label{fig: 1}
\end{figure}

\begin{figure}[h]
	\centering 
	\subfigure[left: iteration process; right: averaged convergence error]{\includegraphics[width=2.2in]{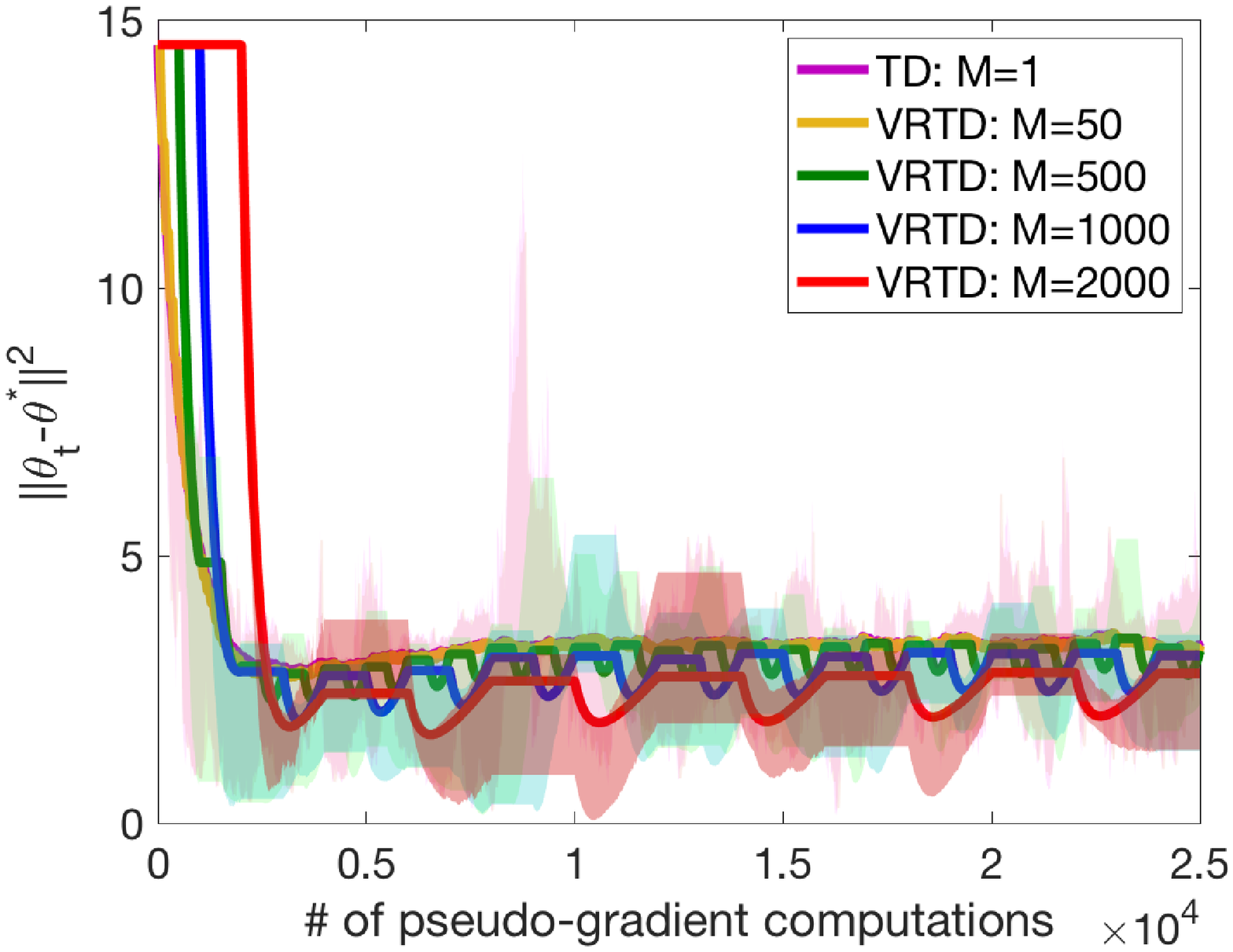}\includegraphics[width=2.2in]{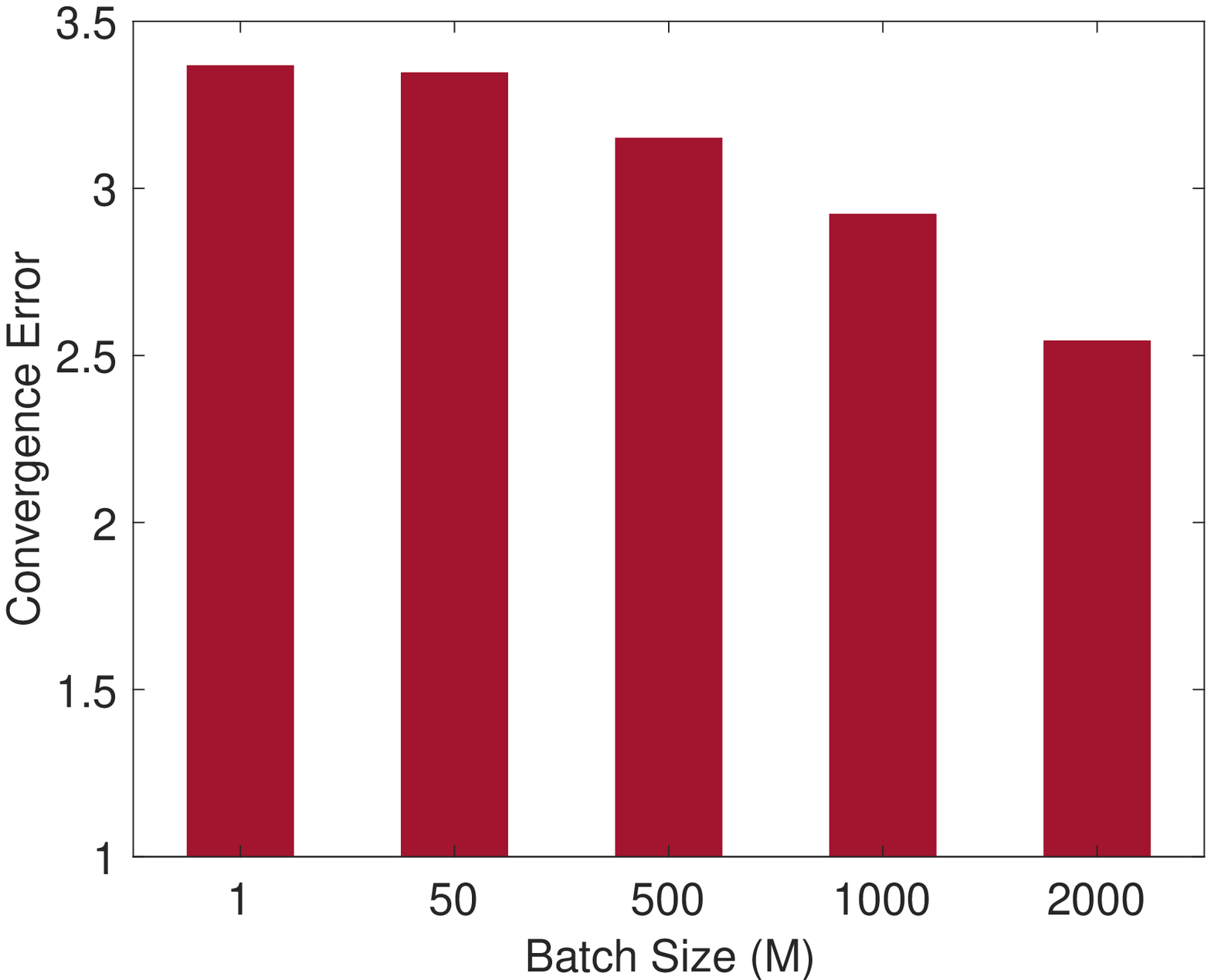}}
	\caption{Error decay of VRTD with Markovian sample} 
	\label{fig: 1.5}
\end{figure}

%\begin{figure}[h]
%	\vspace{-0.3cm}
%	\centering 
%	\subfigure[i.i.d. (left: iteration process; right: averaged convergence error)]{\includegraphics[width=1.3in]{iidoverall.eps}\includegraphics[width=1.3in]{iidpredicerror.eps}}
%	\subfigure[Markovian (left: iteration process; right: averaged convergence error)]{\includegraphics[width=1.3in]{noniidoverall.eps}\includegraphics[width=1.3in]{noniidpredicerror.eps}}
%%	\subfigure[i.i.d. (left: iteration process; right: averaged \protect\\ convergence error)]{\includegraphics[width=1.3in]{iidoverall.eps}\includegraphics[width=1.3in]{iidpredicerror.eps}}
%%	\subfigure[Markovian (left: iteration process; right: averaged \protect\\ convergence error)]{\includegraphics[width=1.3in]{noniidoverall.eps}\includegraphics[width=1.3in]{noniidpredicerror.eps}}
%	\caption{ Error decay of VRTD with i.i.d. and Markovian samples} 
%	\label{fig: 1}
%	\vspace{-0.5cm}
%\end{figure}
\vspace{-0.15cm}
\section{Conclusion}
\vspace{-0.05cm}
In this paper, we provided the convergence analysis for VRTD with both i.i.d. and Markovian samples. We developed a novel technique to bound the bias of the VRTD pseudo-gradient estimator. Our result demonstrate the advantage of VRTD over vanilla TD on the reduced variance and bias errors by the batch size. We anticipate that such a variance reduction technique and our analysis tools can be further applied to other RL algorithms.

\section*{Acknowledgments}
The work was supported in part by US National Science Foundation under the grants CCF-1801855, ECCS-1818904, and CCF-1909291. The authors would like to thank Bowen Weng at the Ohio State University for the helpful discussions on the experiments. The authors would also like to thank a few anonymous reviewers for their suggestions on the analysis of the overall computational complexity as well as additional experiments, which significantly help to improve the quality of the paper.

\bibliographystyle{apalike}
\bibliography{ref}
\bibliographystyle{iclr2020_conference}

\newpage
\appendix
\noindent {\Large \textbf{Supplementary Materials}}
\section{Additional Experiments} \label{app:experiments}

In this section, we assess the practical performance of VRTD (Algorithm \ref{vrTD2}) on two problems in OpenAI Gym \cite{1606.01540}, which are Frozen Lake ($4\times4$) and Mountain Car. We also provide an additional experiment to demonstrate that VRTD is more sample-efficient than vanilla TD.
\subsection{Frozen Lake}\label{frozenlake}
Frozen Lake is a game in OpenAI Gym, which is designed by an MDP with a finite state and action space.  An agent starts from the starting point at $t=0$ and can only transport to the neighbor blocks. It returns to the start-point every time it reaches a "hole" or the "goal". The agent receives a reward $1$ only when it reaches the goal and $0$ otherwise. Each transition probability is randomly sampled from $[0,1]$ and normalized to one, and each component of the feature matrix $\Phi\in \mbR^{16\times4}$ is also randomly sampled from $[0,1]$. Given the feature matrix and the transition probability, the ground truth value of $\theta^*$ can be calculated, which is used to evaluate the error in the experiments. We set the stepsize to be $\alpha=0.1$ and run vanilla TD ($M=1$) and VRTD with the batch sizes $M = 50, 500, 1000, 2000$. Note that $M=1$ corresponds to the base line vanilla TD. We compute the squared error over $1000$ independent runs. The left plot in \Cref{fig: 2} shows the convergence process over the number of pseudo-gradient computations and the right plot in \Cref{fig: 2} shows the convergence error averaged over the last $10000$ iterations. It can be observed that VRTD achieves much smaller error than TD, and increasing the batch size for VRTD substantially reduces the error without much slowing down the convergence.
\begin{figure}[h]
	\centering 
	\subfigure[left: iteration process; right: averaged convergence error]{\includegraphics[width=2.6in]{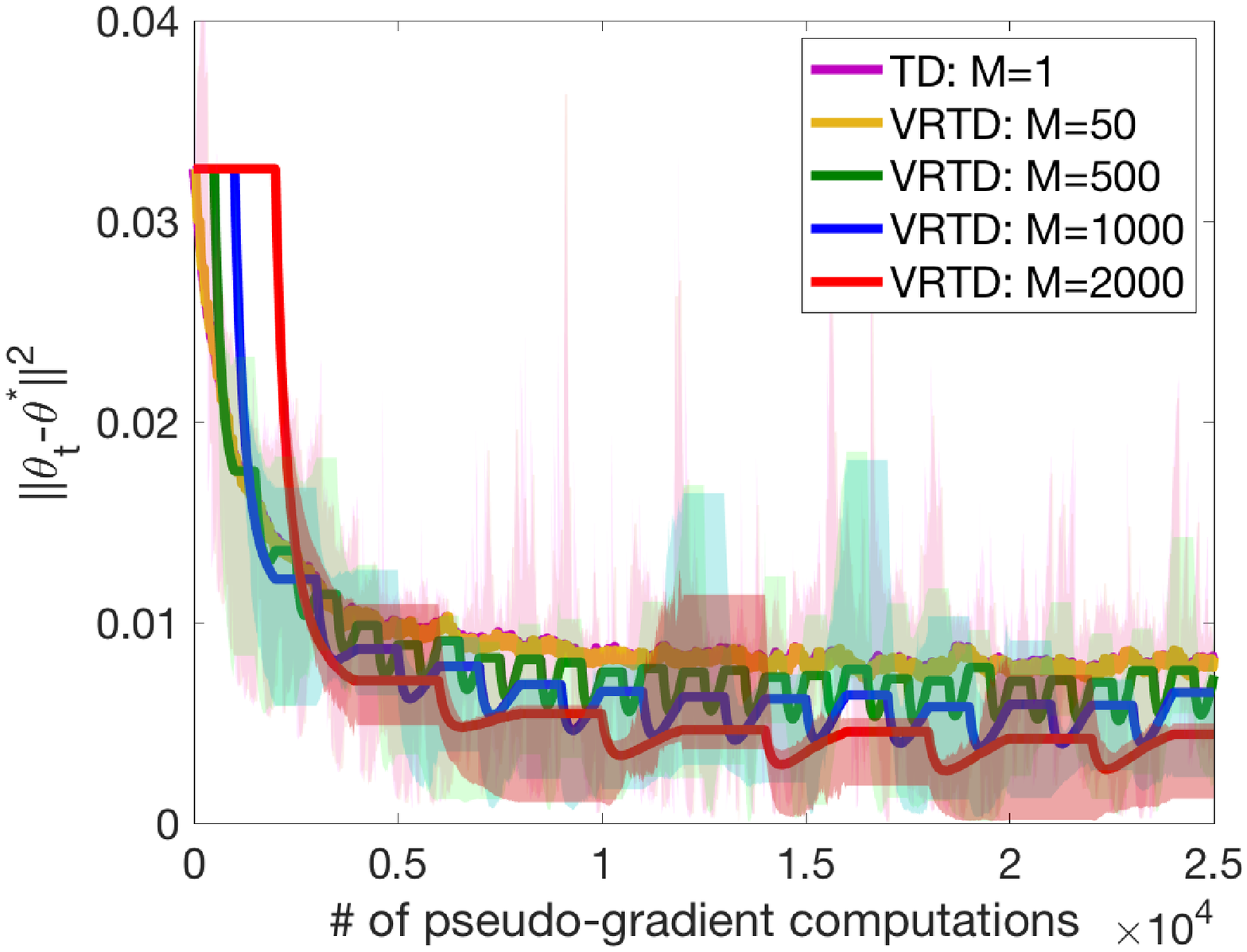}\includegraphics[width=2.6in]{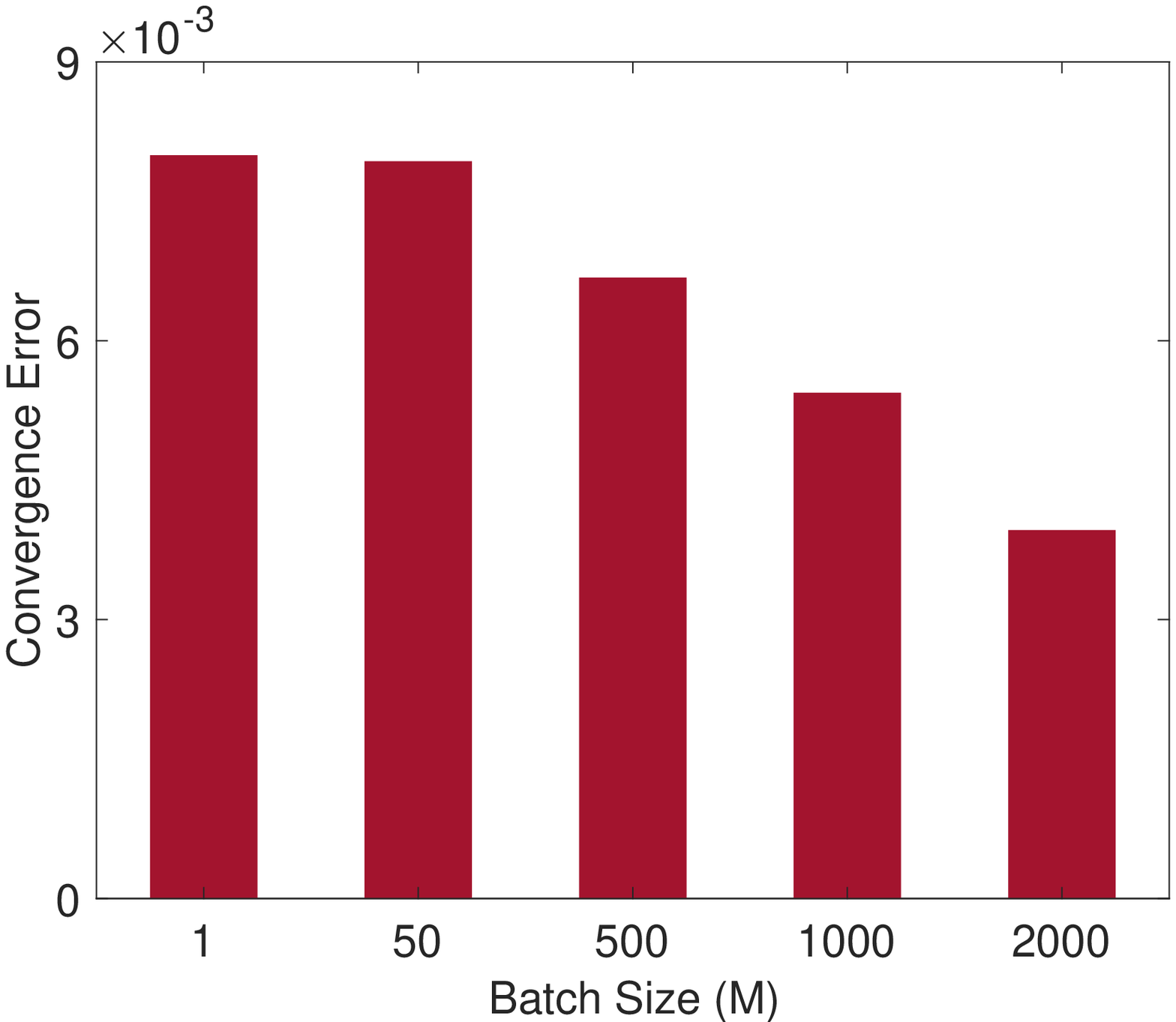}}
	%	\subfigure[i.i.d. (left: iteration process; right: averaged \protect\\ convergence error)]{\includegraphics[width=1.3in]{iidoverall.eps}\includegraphics[width=1.3in]{iidpredicerror.eps}}
	%	\subfigure[Markovian (left: iteration process; right: averaged \protect\\ convergence error)]{\includegraphics[width=1.3in]{noniidoverall.eps}\includegraphics[width=1.3in]{noniidpredicerror.eps}}
	\caption{Error decay of VRTD in Frozen Lake problem} 
	\label{fig: 2}
\end{figure}

\subsection{Mountain Car}
Mountain Car is a game in OpenAI Gym, which is driven by an MDP with an infinite state space and a finite action space. At each time step, an agent randomly chooses an action $\in \{\text{push left}, \text{push right}, \text{no push}\}$. In this problem, the ground truth value of $\theta^*$ is not known. In order to quantify the performance of VRTD, we apply the error metric known as the norm of the expected TD update given by NEU= $\ltwo{\mbE[\delta \phi]}^2$, where $\delta$ is the temporal difference \cite{sutton2009fast,maei2011gradient}. The state sample is transformed into a feature vector with the dimension $20$ using an approximation of a RBF kernel. The agent follows a random policy in our experiment and we initialize $\theta_0=0$. At $t=0$, the agent starts from the lowest point, receives a reward of $-1$ at each time step, and returns to the starting point every time it reaches the goal. We set the stepsize to be $\alpha=0.2$ and run vanilla TD ($M=1$) and VRTD with batch size $M= 1000$. After every $10000$ pseudo-gradient computations, learning is paused and the NEU is computed by averaging over $1000$ test samples. We conduct $1000$ independent runs and the results are reported by averaging over these runs. Figure \ref{fig: 3} shows the convergence process of the NEU versus the number of pseudo-gradient computations. It can been seen that VRTD achieves smaller NEU than vanilla TD. 
%We use the empirical NEU averaged over $1000$ independent runs to approximate the true NEU and the results are reported in Figure \ref{fig: 3}. 
%The left figure in \ref{fig: 3} shows the convergence process of the empirical NEU over the number of gradient computations, and the right figure shows the empirical NEU averaged over the last $5000$ iterations. 
%The right figure clearly indicates that the empirical NEU decreases as the batch size increases, and the left figure demonstrates that the iteration of VRTD with a large batch size ($M=1000$) is more stable than VRTD with a small batch size.
%a $\epsilon$-greedy policy (with $\epsilon=0.1$) prelearned by  tabular Q-learning with discretized state space.
\begin{figure}[h]
	\centering 
	\subfigure{\includegraphics[width=2.6in]{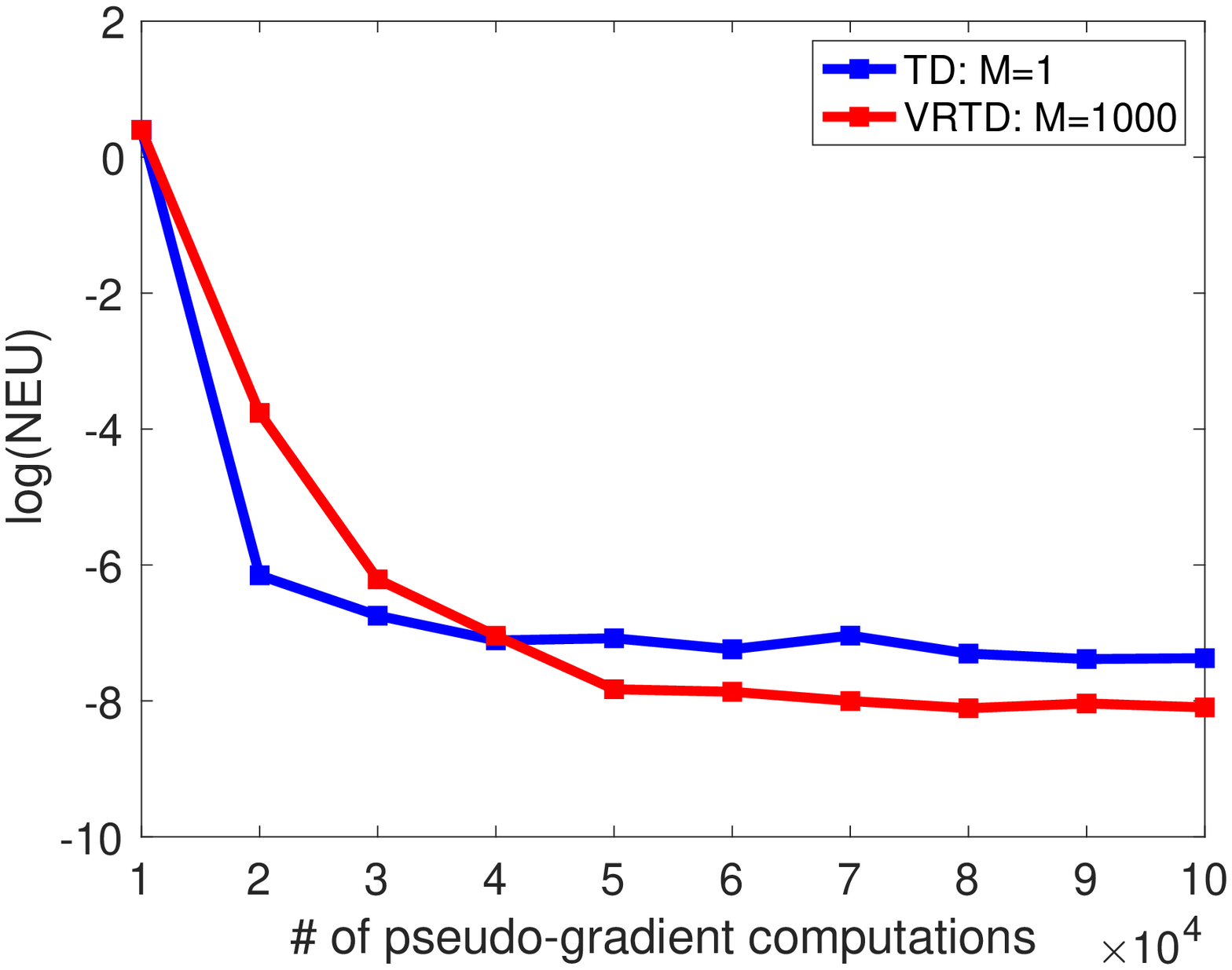}}
	%	\subfigure[i.i.d. (left: iteration process; right: averaged \protect\\ convergence error)]{\includegraphics[width=1.3in]{iidoverall.eps}\includegraphics[width=1.3in]{iidpredicerror.eps}}
	%	\subfigure[Markovian (left: iteration process; right: averaged \protect\\ convergence error)]{\includegraphics[width=1.3in]{noniidoverall.eps}\includegraphics[width=1.3in]{noniidpredicerror.eps}}
	\caption{NEU decay of VRTD in Mountain Car problem} 
	\label{fig: 3}
\end{figure}

\subsection{Comparison between VRTD and TD with a Changing Stepsize}

In this subsection, we provide an additional experiment to compare the performance of VRTD given in Algorithm \ref{vrTD2} (under constant stepsize) with the TD algorithm (under a changing stepsize as suggested by the reviewer). We adopt the same setting of Frozen Lake as in \Cref{frozenlake}. Let VRTD take a batch size $M = 5000$ and stepsize $\alpha = 0.1$. For a fair comparison, we start TD with the same constant stepsize $\alpha = 0.1$ and then reduce the stepsize by half whenever the error stops decrease. 
%In our experiment, the stepsize of TD decreases two times at step $t=12000,20000$, respectively. %Note that $M=1$ corresponds to TD with the changing stepsize. 
The comparison is reported in Figure \ref{fig: 4}, where both curves are averaged over $1000$ independent runs. The two algorithms are compared in terms of the squared error versus the total number of pseudo-gradient computations (equivalently, the total number of samples being used). It can be seen that VRTD reaches the required accuracy much faster than TD.
% reaches the same accuracy as that of VRTD after $60000$ gradient computations, while it takes only $20000$ gradient computations for VRTD to reach such an accuracy.

\begin{figure}[h]
	\centering 
	\subfigure{\includegraphics[width=2.6in]{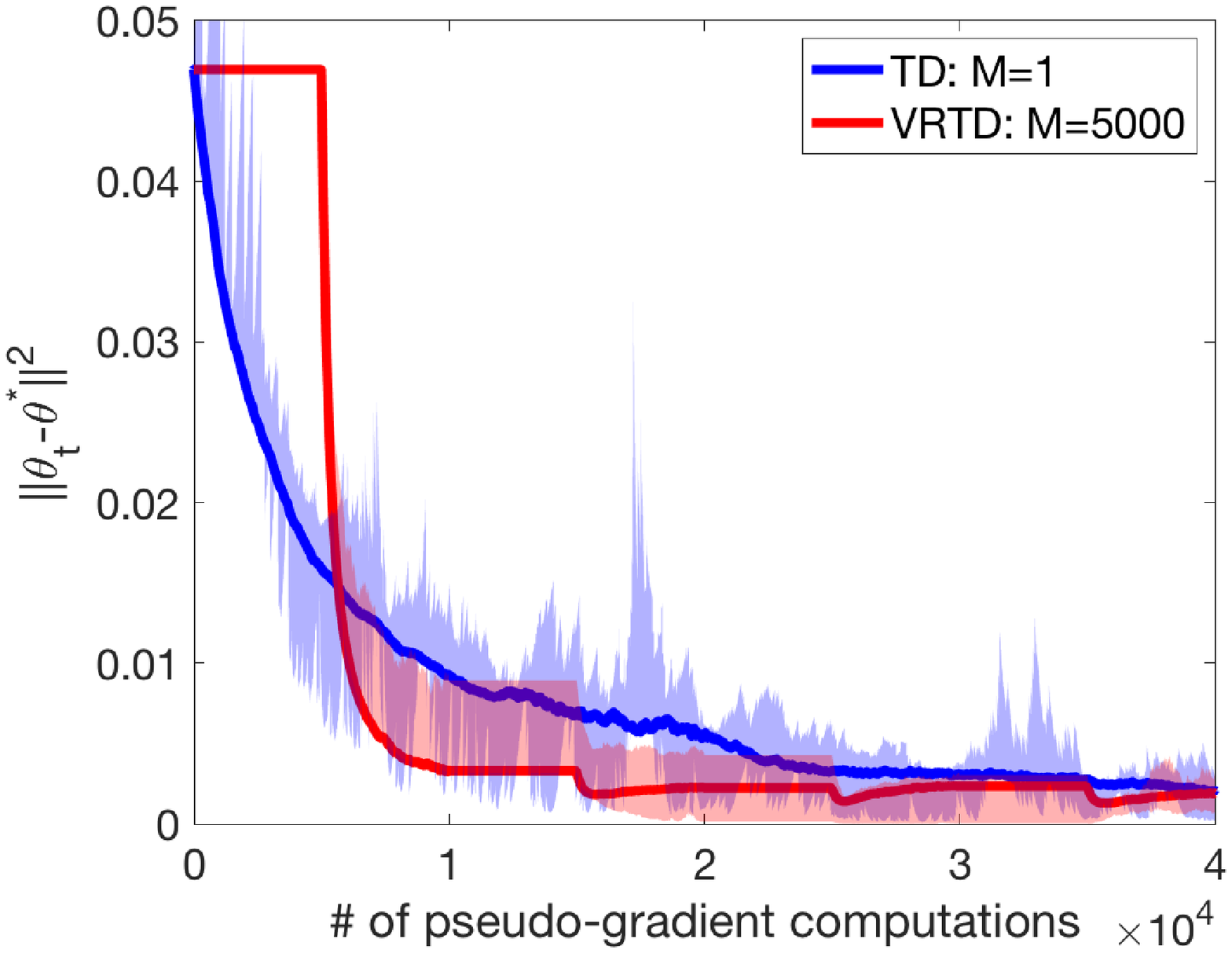}}
	%	\subfigure[i.i.d. (left: iteration process; right: averaged \protect\\ convergence error)]{\includegraphics[width=1.3in]{iidoverall.eps}\includegraphics[width=1.3in]{iidpredicerror.eps}}
	%	\subfigure[Markovian (left: iteration process; right: averaged \protect\\ convergence error)]{\includegraphics[width=1.3in]{noniidoverall.eps}\includegraphics[width=1.3in]{noniidpredicerror.eps}}
	\caption{ Comparison of error decay of VRTD and TD with changing stepsize in Frozen Lake problem} 
	\label{fig: 4}
\end{figure}
\section{A Counter Example}\label{app:counter}

In this section, we use a counter-example to show that one major technical step for characterizing the convergence bound in \cite{korda2015td} does not hold. Consider Step 4 in the proof of Theorem 3 in \cite{korda2015td}. For the following defined $\epsilon(\theta)$
	\begin{flalign}\label{eq: err2}
		\epsilon(\theta)=(\theta-\theta^*)^\top[\mbE(v^\top v|\mathcal{F}_n) - \mbE_{\Psi,\theta_n}(v^\top v)](\theta-\theta^*), 
	\end{flalign}
where $\Psi$ denotes the stationary distribution of the corresponding Markov chain, \cite{korda2015td} claimed that the following inequality holds
	\begin{flalign}\label{eq: err1}
		\ltwo{\epsilon(\theta)} \leq 2H\ltwo{ \mbE(v|\mathcal{F}_n) - \mbE_{\Psi,\theta_n}(v)  }.
	\end{flalign}
This is not correct. Consider the following counter-example. Let the batch size $M=3$ and the dimension of the feature vector be one, i.e., $\Phi\in \mbR^{|\mathcal{S}|\times 1}$. Hence, all variables in \cref{eq: err1} and \cref{eq: err2} are scalars. Since the steps for proving \cref{eq: err1} in \cite{korda2015td} do not have specific requirements for the transition kernel, \cref{eq: err1} should hold for any distribution of $v$. Thus, suppose $v$ follows the uniform distribution over $[-3, 3]$. Further assume that in the $n$-th epoch, the samples of $v$ are given by $\{ 1,2,-3 \}$. Recall that $\mbE(\cdot|\mathcal{F}_n)$ is the average over the batch samples in the $n$-th epoch. We have:
\begin{flalign*}
	\mbE_{\Psi,\theta_n}(v)=0, \quad \mbE_{\Psi,\theta_n}(v^2)=3, \quad  \mbE(v|\mathcal{F}_n)=0, \quad \mbE(v^2|\mathcal{F}_n)=\frac{14}{3}.
\end{flalign*}
Substituting the above values into \cref{eq: err1} yields  
\begin{flalign}\label{eq: err3}
	\ltwo{\epsilon(\theta)} = \Big(\frac{14}{3}-3\Big)(\theta-\theta^*)^2 \leq 2H \times 0=0,
\end{flalign}
which obviously does not hold in general when $\theta\neq \theta^*$. Consequently the second statement in Theorem 3 of \cite{korda2015td}, which is critically based on the above erroneous steps, does not hold. Hence, the first statement in the same theorem whose proof is based on the second statement cannot hold either. 
%The goal of this paper is to provide a rigorous analysis of VRTD to characterize its variance reduction performance.

\section{Useful Lemmas}
In the rest of the paper, for any matrix $W\in \mbR^{d\times d}$, we denote $\ltwo{W}$ as the spectral norm of $W$ and $\lF{W}$ as the Frobenius norm of $W$.
\begin{lemma}\label{lemma: AB_bound}
	For any $x_i=(s_i, r_i, s_i^\prime)$ (i.i.d. sample) or $x_i=(s_i, r_i, s_{i+1})$ (Markovian sample), we have $\ltwo{A_{x_i}}\leq 1+\gamma$ and $\ltwo{b_{x_i}}\leq r_{\max}$.
\end{lemma}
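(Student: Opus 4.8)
The plan is to treat the two bounds separately, since each reduces to an elementary norm estimate using Assumption~\ref{ass2} and the boundedness of the reward.

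For the bound on $b_{x_i}$, I would simply recall that $b_{x_i}=r(s_i)\phi(s_i)$ and factor the norm: $\ltwo{b_{x_i}}=|r(s_i)|\cdot\ltwo{\phi(s_i)}$. Since the reward is bounded by $r_{\max}$ and $\ltwo{\phi(s)}\le 1$ for every $s\in\mcs$ by Assumption~\ref{ass2}, this gives $\ltwo{b_{x_i}}\le r_{\max}$ immediately. This holds identically in the i.i.d.\ and Markovian cases because $b_{x_i}$ has the same form in both.

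For the bound on $A_{x_i}$, the key observation is that $A_{x_i}=\phi(s_i)\big(\gamma\phi(s_i^\prime)-\phi(s_i)\big)^\top$ (respectively with $s_{i+1}$ in the Markovian case) is a rank-one matrix of the form $uv^\top$ with $u=\phi(s_i)$ and $v=\gamma\phi(s_i^\prime)-\phi(s_i)$. For any rank-one matrix, the spectral norm equals $\ltwo{u}\ltwo{v}$ (its only nonzero singular value). Hence $\ltwo{A_{x_i}}=\ltwo{\phi(s_i)}\cdot\ltwo{\gamma\phi(s_i^\prime)-\phi(s_i)}$. Applying the triangle inequality to the second factor, $\ltwo{\gamma\phi(s_i^\prime)-\phi(s_i)}\le\gamma\ltwo{\phi(s_i^\prime)}+\ltwo{\phi(s_i)}\le\gamma+1$, and using $\ltwo{\phi(s_i)}\le1$ for the first factor, we conclude $\ltwo{A_{x_i}}\le 1+\gamma$.

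There is no real obstacle here: the statement is a direct consequence of Assumption~\ref{ass2}, the triangle inequality, and the elementary fact about spectral norms of rank-one matrices. The only point requiring a word of care is the rank-one spectral-norm identity, which one can either cite as standard or verify in a line by noting $\ltwo{uv^\top x}=\ltwo{u}\,|v^\top x|\le \ltwo{u}\ltwo{v}\ltwo{x}$ with equality at $x=v/\ltwo{v}$.
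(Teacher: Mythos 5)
Your proof is correct and follows essentially the same elementary route as the paper: the bound on $b_{x_i}$ is identical, and for $A_{x_i}$ the only cosmetic difference is that the paper passes through the Frobenius norm ($\ltwo{A_{x_i}}\le\lF{A_{x_i}}$) and applies the triangle inequality at the matrix level, whereas you invoke the exact rank-one identity $\ltwo{uv^\top}=\ltwo{u}\ltwo{v}$ and apply the triangle inequality to the vector factor $\gamma\phi(s_i^\prime)-\phi(s_i)$. Both arguments rest on Assumption~\ref{ass2} and boundedness of the reward and give the same constants, so no changes are needed.
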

\begin{proof}
	First consider the case when samples are i.i.d. Due to the definition of $A_{x_i}$, we have
	\begin{flalign*}
	\ltwo{A_{x_i}}&=\ltwo{\phi(s_i)(\gamma \phi(s^\prime_i) - \phi(s_i))^\top}\\
	&\leq \lF{\phi(s_i)(\gamma \phi(s^\prime_i) - \phi(s_i))^\top}\\
	&\leq \gamma\lF{\phi(s_i) \phi(s^\prime_i)^\top} + \lF{\phi(s_i)\phi(s_i)^\top}\\
	&\leq 1+\gamma.
	\end{flalign*}
	Then, consider $b_{x_i}$:
	\begin{flalign*}
	\ltwo{b_{x_i}}=\ltwo{r_{x_i}\phi(s_i)}\leq r_{\max}\ltwo{\phi(s_i)}\leq r_{\max}.
	\end{flalign*}
	Following similar steps, we can obtain the same upper bounds for the case with Markovian samples.
\end{proof}

\begin{lemma}\label{lemma: gradient_bound}
	Let $G = (1+\gamma)R_\theta + r_{\max}$. Consider Algorithm \ref{vrTD2}. For any $m>0$ and $0\leq t\leq M-1$, we have $\ltwo{g_{x_{j_m,t}}(\theta_{m,t})},\, \ltwo{g_{x_{j_m,t}}(\tilde{\theta}_{m-1})},\, \ltwo{g_m(\tilde{\theta}_{m-1})} \leq G$.
\end{lemma}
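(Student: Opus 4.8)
The plan is to reduce the lemma to the elementary estimate $\ltwo{g_x(\theta)}=\ltwo{A_x\theta+b_x}\le\ltwo{A_x}\ltwo{\theta}+\ltwo{b_x}$ and then invoke Lemma~\ref{lemma: AB_bound} together with the fact that every parameter vector appearing in the statement lies in the ball $B_\theta$ of radius $R_\theta$. The argument is entirely deterministic (pathwise); no probabilistic or concentration tools are needed, since the claimed bound is uniform over all realizations of the sampling.

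First I would verify that $\ltwo{\theta_{m,t}}\le R_\theta$ and $\ltwo{\tilde\theta_{m-1}}\le R_\theta$ for all $m\ge 1$ and $0\le t\le M-1$, by induction on the epoch index $m$. For the base case, the initialization is taken to satisfy $\ltwo{\tilde\theta_0}\le R_\theta$ (a standing assumption on the input of Algorithm~\ref{vrTD2}, e.g.\ $\tilde\theta_0=0$). For the inductive step, note that within the $m$-th epoch every iterate $\theta_{m,t}$ with $t\ge 1$ is the image of the projection operator $\mcpi_{R_\theta}$ and therefore lies in $B_\theta$ by definition of $\mcpi_{R_\theta}$, while $\theta_{m,0}=\tilde\theta_{m-1}$, which is in $B_\theta$ either by the base case (when $m=1$) or because it was chosen among $\{\theta_{m-1,1},\dots,\theta_{m-1,M}\}$, all of which are projected iterates (when $m\ge 2$).

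Next, for any sample $x$ (i.i.d.\ or Markovian) and any $\theta$ with $\ltwo{\theta}\le R_\theta$, the triangle inequality and submultiplicativity of the spectral norm give, using Lemma~\ref{lemma: AB_bound} (so that $\ltwo{A_x}\le 1+\gamma$ and $\ltwo{b_x}\le r_{\max}$),
\[
\ltwo{g_x(\theta)}=\ltwo{A_x\theta+b_x}\le\ltwo{A_x}\ltwo{\theta}+\ltwo{b_x}\le(1+\gamma)R_\theta+r_{\max}=G.
\]
Applying this with $\theta=\theta_{m,t}$ and with $\theta=\tilde\theta_{m-1}$ (both in $B_\theta$ by the previous paragraph) yields the first two claimed bounds. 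For the batch pseudo-gradient, since $g_m(\tilde\theta_{m-1})=\frac{1}{M}\sum_{i=(m-1)M}^{mM-1}g_{x_i}(\tilde\theta_{m-1})$ is an average of vectors each of norm at most $G$, the triangle inequality gives $\ltwo{g_m(\tilde\theta_{m-1})}\le\frac{1}{M}\sum_{i=(m-1)M}^{mM-1}\ltwo{g_{x_i}(\tilde\theta_{m-1})}\le G$.

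The only point requiring any care — and it is minor — is the bookkeeping that confirms every iterate and every epoch anchor stays inside $B_\theta$; this rests solely on the presence of the projection step $\mcpi_{R_\theta}$ in Algorithm~\ref{vrTD2} and on the assumption $\tilde\theta_0\in B_\theta$. Once that is settled, the estimate is a one-line consequence of Lemma~\ref{lemma: AB_bound}, so I do not anticipate a genuine obstacle here.
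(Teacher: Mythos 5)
Your proposal is correct and follows essentially the same route as the paper: bound $\ltwo{g_x(\theta)}$ by $\ltwo{A_x}\ltwo{\theta}+\ltwo{b_x}\le(1+\gamma)R_\theta+r_{\max}$ via Lemma~\ref{lemma: AB_bound}, then bound the batch pseudo-gradient as an average of vectors of norm at most $G$. The only difference is that you spell out the induction showing all iterates stay in the radius-$R_\theta$ ball (via the projection $\mcpi_{R_\theta}$ and the assumption $\tilde\theta_0\in B_\theta$), which the paper uses implicitly.
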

\begin{proof}
	First, we bound $\ltwo{g_{x_{j_m,t}}(\theta_{m,t})}$ as follows.
	\begin{flalign*}
	\ltwo{g_{x_{j_m,t}}(\theta_{m,t})} &= \ltwo{A_{x_{j_m,t}}\theta_{m,t} + b_{\theta_{m,t}}}\\
	&\leq \ltwo{A_{x_{j_m,t}}}\ltwo{\theta_{m,t}} + \ltwo{b_{\theta_{m,t}}}\\
	&\leq (1+\gamma)R_\theta + r_{\max}.
	\end{flalign*}
	Following the steps similar to the above, we have $\ltwo{g_{x_{j_m,t}}(\tilde{\theta}_{m-1})}\leq G$. Finally for $\ltwo{g_{x_{j_m,t}}(\tilde{\theta}_{m-1})}$, we have
	\begin{flalign}
	\ltwo{g_{x_{j_m,t}}(\tilde{\theta}_{m-1})} &= \ltwo{\frac{1}{M}\sum_{i=(m-1)M}^{mM-1}g_{x_i}(\tilde{\theta}_{m-1})}\nonumber\\
	&\leq \frac{1}{M}\sum_{i=(m-1)M}^{mM-1} \ltwo{g_{x_i}(\tilde{\theta}_{m-1})}\nonumber\\
	&\leq G, \label{lemma: G_eq1}
	\end{flalign}
	where \cref{lemma: G_eq1} follows from the last fact $\ltwo{g_{x_{j_m,t}}(\tilde{\theta}_{m-1})}\leq G$.
\end{proof}
%
%\begin{lemma}\label{lemma: lip_xi}
%	For any $\theta_1$ and $\theta_2$ generated from the Algorithm \ref{vrTD2}, for any $n>0$ we have $|\xi_{n}(\theta_1)-\xi_{n}(\theta_2)|\leq L_\xi \ltwo{\theta_1-\theta_2}$, where $L_\xi = 4R_\theta (1+\gamma) + 2G$.
%\end{lemma}
%\begin{proof}
%	We derive the bound as follows
%	\begin{flalign*}
%		&|\xi_{n}(\theta_1)-\xi_{n}(\theta_2)|\\
%		&= | (\theta_1-\theta^*)^\top(\tilde{g}_n(\theta_1) - g(\theta_1) ) - (\theta_2-\theta^*)^\top(\tilde{g}_n(\theta_2) - g(\theta_2) ) |\\
%		&\leq \ltwo{\theta_1-\theta^*}\ltwo{\tilde{g}_n(\theta_1) - g(\theta_1) - \tilde{g}_n(\theta_2) + g(\theta_2)} + \ltwo{\tilde{g}_n(\theta_2)-g(\theta_2)}\ltwo{\theta_1-\theta_2}\\
%		&\leq \ltwo{\theta_1-\theta^*} (\ltwo{\tilde{g}_n(\theta_1) - \tilde{g}_n(\theta_2)} + \ltwo{g(\theta_1)-g(\theta_2)}) + \ltwo{\tilde{g}_n(\theta_2)-g(\theta_2)}\ltwo{\theta_1-\theta_2}\\
%		&\leq 2R_\theta (\ltwo{ \frac{1}{M}\sum_{i=(m-1)M}^{mM-1}A_{x_i} }\ltwo{\theta_1-\theta_2} + \ltwo{A} \ltwo{\theta_1-\theta_2} ) + 2G\ltwo{\theta_1-\theta_2}\\
%		&\leq 2R_\theta (\frac{1}{M}\sum_{i=(m-1)M}^{mM-1}\ltwo{A_{x_i} }\ltwo{\theta_1-\theta_2} + \ltwo{A} \ltwo{\theta_1-\theta_2} ) + 2G\ltwo{\theta_1-\theta_2}\\
%		&\leq (4R_\theta (1+\gamma) + 2G)\ltwo{\theta_1-\theta_2}.
%	\end{flalign*}
%\end{proof}
%
\begin{lemma}\label{lemma: iid_variance}
	Define $D_1=2(1+\gamma)^2$ and $D_2=4((1+\gamma)^2R^2_\theta + r^2_{\max} )$. For any $\theta\in \mbR^d$, we have $\ltwo{g_{x_i}(\theta)}^2\leq D_1\ltwo{\theta-\theta^*}^2 + D_2$.
\end{lemma}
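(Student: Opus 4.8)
The plan is to reduce everything to norm bounds that are already available from Lemma~\ref{lemma: AB_bound}, by decomposing the sampled pseudo-gradient around the fixed point $\theta^*$. The key observation is that although $g_{x_i}(\theta^*)=A_{x_i}\theta^*+b_{x_i}$ need not vanish (only its expectation $g(\theta^*)=A\theta^*+b$ does), we can still write
\begin{flalign*}
g_{x_i}(\theta)=A_{x_i}\theta+b_{x_i}=A_{x_i}(\theta-\theta^*)+\big(A_{x_i}\theta^*+b_{x_i}\big)=A_{x_i}(\theta-\theta^*)+g_{x_i}(\theta^*),
\end{flalign*}
and then apply $\ltwo{u+v}^2\leq 2\ltwo{u}^2+2\ltwo{v}^2$ to separate the $\theta$-dependent part from a constant part.

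For the first part I would bound $\ltwo{A_{x_i}(\theta-\theta^*)}^2\leq \ltwo{A_{x_i}}^2\ltwo{\theta-\theta^*}^2\leq(1+\gamma)^2\ltwo{\theta-\theta^*}^2$ using $\ltwo{A_{x_i}}\leq 1+\gamma$ from Lemma~\ref{lemma: AB_bound}; the factor $2$ coming from the splitting inequality produces exactly $D_1=2(1+\gamma)^2$. For the second part I would use the triangle inequality together with Lemma~\ref{lemma: AB_bound} to get $\ltwo{g_{x_i}(\theta^*)}\leq\ltwo{A_{x_i}}\ltwo{\theta^*}+\ltwo{b_{x_i}}\leq(1+\gamma)\ltwo{\theta^*}+r_{\max}$, and then invoke $\ltwo{\theta^*}\leq R_\theta$ (which holds because $\theta^*$ lies in the projection ball $B_\theta$). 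Squaring and using $(a+b)^2\leq 2(a^2+b^2)$ gives $\ltwo{g_{x_i}(\theta^*)}^2\leq 2\big((1+\gamma)^2R_\theta^2+r^2_{\max}\big)$, and the outer factor $2$ then yields $D_2=4\big((1+\gamma)^2R_\theta^2+r^2_{\max}\big)$. Adding the two contributions gives the claimed inequality.

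I do not expect a substantial obstacle: the argument is a short chain of norm inequalities. The only point requiring care is recognizing that one must center the estimate at $\theta^*$ rather than trying to bound $\ltwo{A_{x_i}\theta+b_{x_i}}$ directly — the latter would fail since the lemma is stated for all $\theta\in\mbR^d$, including unbounded $\theta$ — and observing that the resulting constant term is controlled precisely because $\theta^*\in B_\theta$, so that $\ltwo{\theta^*}\leq R_\theta$.
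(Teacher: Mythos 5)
Your proof is correct and follows essentially the same route as the paper: decompose $g_{x_i}(\theta)=A_{x_i}(\theta-\theta^*)+(A_{x_i}\theta^*+b_{x_i})$, apply the splitting inequality, and bound the two pieces via Lemma~\ref{lemma: AB_bound} together with $\ltwo{\theta^*}\leq R_\theta$. The paper's proof is the same chain of inequalities, including the same implicit use of $\theta^*$ lying in the radius-$R_\theta$ ball.
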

\begin{proof}
	Recalling the definition of $g_{x_i}$, and applying Lemma \ref{lemma: AB_bound}, we have
	\begin{flalign*}
	\ltwo{g_{x_i}(\theta)}^2 &= \ltwo{A_{x_i}\theta + b_{x_i}  }^2\\
	&=\ltwo{A_{x_i}(\theta-\theta^*) + (A_{x_i}\theta^*+b_{x_i}) }^2\\
	&\leq 2\ltwo{A_{x_i}(\theta-\theta^*) }^2 + 2\ltwo{A_{x_i}\theta^*+b_{x_i}}^2\\
	&\leq 2\ltwo{A_{x_i}}^2\ltwo{\theta-\theta^*}^2 + 4( \ltwo{A_{x_i}}^2\ltwo{\theta^*}^2 + \ltwo{b_{x_i}}^2 )\\
	&\leq 2(1+\gamma)^2\ltwo{\theta-\theta^*}^2 + 4((1+\gamma)^2R^2_\theta + r^2_{\max} )\\
	&=D_1\ltwo{\theta-\theta^*}^2 + D_2.
	\end{flalign*}
\end{proof}

\begin{lemma}\label{lemma: markov_Ab}
	Considering Algorithm \ref{vrTD2} with Markovian samples. We have $\lF{\mbE[A_j|P_i]-A}\leq (1+\gamma)\kappa\rho^{j-i}$ and $\ltwo{\mbE[b_j|P_i]-b}\leq r_{\max}\kappa\rho^{j-i}$ for $0<i<j$.
\end{lemma}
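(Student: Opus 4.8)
The plan is to reduce both bounds to the geometric ergodicity estimate in Assumption~\ref{ass3} via the Markov property. Since $A_j=\phi(s_j)(\gamma\phi(s_{j+1})-\phi(s_j))^\top$ and $b_j=r(s_j)\phi(s_j)$ depend on the trajectory only through $(s_j,s_{j+1})$, and $i<j$, conditioning on $P_i$ is the same as conditioning on the state visited at time $i$, so $\mbE[A_j\mid P_i]=\mbE[A_j\mid s_i]$ and likewise for $b_j$. I would introduce the time-homogeneous conditional means $\bar A(s):=\mbE[A_j\mid s_j=s]=\phi(s)(\gamma\,\mbE_{s'\mid s}\phi(s')-\phi(s))^\top$ and $\bar b(s):=\mbE[b_j\mid s_j=s]$, which by stationarity do not depend on $j$. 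Writing $\nu_{j-i}(\cdot\mid s_i):=\mbP(s_j\in\cdot\mid s_i)$ for the $(j-i)$-step transition law, one then has $\mbE[A_j\mid P_i]=\int \bar A(s)\,\nu_{j-i}(ds\mid s_i)$ and $A=\int\bar A(s)\,\mu_\pi(ds)$, with the analogous identities for $b$.

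Subtracting, $\mbE[A_j\mid P_i]-A=\int\bar A(s)\,(\nu_{j-i}(\cdot\mid s_i)-\mu_\pi)(ds)$, and similarly $\mbE[b_j\mid P_i]-b=\int\bar b(s)\,(\nu_{j-i}(\cdot\mid s_i)-\mu_\pi)(ds)$. The next step is the key estimate $\lF{\int F\,d\nu}\le\big(\sup_s\lF{F(s)}\big)\,\|\nu\|_{TV}$ for a bounded matrix-valued $F$ and a signed measure $\nu$; applying it with $F=\bar A$ and $\nu=\nu_{j-i}(\cdot\mid s_i)-\mu_\pi$ gives $\lF{\mbE[A_j\mid P_i]-A}\le(\sup_s\lF{\bar A(s)})\cdot d_{TV}(\nu_{j-i}(\cdot\mid s_i),\mu_\pi)$ (up to the normalization convention for $d_{TV}$). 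For the first factor, Jensen's inequality together with the Frobenius-norm bound already derived inside the proof of Lemma~\ref{lemma: AB_bound} yields $\lF{\bar A(s)}\le\mbE_{s'\mid s}\lF{\phi(s)(\gamma\phi(s')-\phi(s))^\top}\le 1+\gamma$, and likewise $\ltwo{\bar b(s)}\le\mbE[\ltwo{b_j}\mid s_j=s]\le r_{\max}$ by Lemma~\ref{lemma: AB_bound}. For the second factor, Assumption~\ref{ass3} gives $d_{TV}(\nu_{j-i}(\cdot\mid s_i),\mu_\pi)\le\kappa\rho^{j-i}$ after taking the supremum over the conditioning state. Combining these yields $\lF{\mbE[A_j\mid P_i]-A}\le(1+\gamma)\kappa\rho^{j-i}$ and $\ltwo{\mbE[b_j\mid P_i]-b}\le r_{\max}\kappa\rho^{j-i}$.

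I expect the only nontrivial step to be the measure-theoretic inequality $\lF{\int F\,d\nu}\le(\sup_s\lF{F(s)})\,\|\nu\|_{TV}$ on a general (possibly continuous) state space; I would prove it by pairing against an arbitrary matrix $U$ with $\lF U=1$, writing $\langle U,\int F\,d\nu\rangle=\int\langle U,F(s)\rangle\,d\nu(s)$ and invoking the scalar fact $|\int g\,d\nu|\le\|g\|_\infty\|\nu\|_{TV}$ together with Cauchy--Schwarz $|\langle U,F(s)\rangle|\le\lF{F(s)}$, then taking the supremum over $U$. One also has to fix the normalization convention for $d_{TV}$ so that the constant in Assumption~\ref{ass3} propagates exactly; if a factor of $2$ were to be lost, it can be recovered by centering, i.e.\ replacing $\bar A(s)$ by $\bar A(s)-\bar A(s_0)$ for a fixed reference state $s_0$, which leaves the integral unchanged because $\nu_{j-i}(\cdot\mid s_i)-\mu_\pi$ has total mass zero.
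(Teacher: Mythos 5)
Your proof is correct and follows essentially the same route as the paper's: write $\mbE[A_j\mid P_i]-A$ (resp.\ the $b$ version) as an integral of a bounded quantity against the difference between the $(j-i)$-step conditional law and $\mu_\pi$, bound the integrand by $1+\gamma$ (resp.\ $r_{\max}$) via Lemma~\ref{lemma: AB_bound}, and invoke the geometric ergodicity of Assumption~\ref{ass3}. Your additional care — marginalizing over $s_{j+1}$ to reduce to a function $\bar A(s)$ of the state alone, the duality argument for the Frobenius-norm inequality, and flagging the total-variation normalization (a constant-factor ambiguity the paper's own proof also glosses over, e.g.\ the stray factor $2$ in its $b$ bound) — only tightens the same argument.
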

\begin{proof}
	We first derive
	\begin{flalign*}
	\lF{\mbE[A_j|P_i]-A} &= \lF{\int A_{x_i} dP(x_i|P_j) - \int A_{x_i}d\mu_\pi  }\\
	&\leq \int \lF{A_{x_i} dP(x_i|P_j) - A_{x_i}d\mu_\pi}\\
	&\leq \int \lF{A_{x_i}} |dP(x_i|P_j)-d\mu_\pi|\\
	&\leq (1+\gamma) \lTV{P(x_i|P_j), \mu_\pi}\\
	&\leq (1+\gamma) \kappa\rho^{j-i}.
	\end{flalign*}
	Following the steps similar to the above, we can derive $\ltwo{\mbE[b_j|P_i]-b}\leq 2 r_{\max}\kappa\rho^{j-i}$.
\end{proof}

%%%%%%%%%%%%%%%%%%%%%%%%%%%%%%%%%%%%%%%%%%%%%%%%%%%%%%%%%%%%%%%%%%%%
\section{Proof of Theorem \ref{theorem: iid}: Convergence of VRTD with i.i.d. samples} \label{app: A}
Recall that $B_m$ is the sample batch drawn at the beginning of each $m$-th epoch and $x_{i,j}$ denotes the sample picked at the $j$-th iteration in the $i$-th epoch in Algorithm \ref{vrTD}. We denote $\sigma(\tilde{\theta}_0)$ as a trivial $\sigma$-field when $\tilde{\theta}_0$ is a deterministic vector. Let $\sigma(A\cup B)$ indicate the smallest $\sigma$-field that contains both $A$ and $B$. Then, we construct a set of $\sigma$-fields in the following incremental way.
\begin{flalign*}
&F_{1,0}=\sigma(\tilde{\theta}_0), \, F_{1,1}=\sigma(F_{1,0} \cup \sigma(B_1) \cup \sigma(x_{1,1}) ), ..., \, F_{1,M}=\sigma(F_{1,(M-1)} \cup \sigma(x_{1,M}) ),\\
&F_{2,0}=\sigma(F_{1,M}\cup\sigma(\tilde{\theta}_1)), \, F_{21}=\sigma(F_{2,0} \cup \sigma(B_2) \cup \sigma(x_{2,1})), ..., \, F_{2,m}=\sigma(F_{2,(M-1)} \cup \sigma(x_{2,M})),\\
&\vdots\\
&F_{m,0}=\sigma(F_{(m-1),M}\cup\sigma(\tilde{\theta}_{m-1})), \, F_{m1}=\sigma(F_{m,0} \cup \sigma(B_m) \cup \sigma(x_{m,1})), ..., \, F_{m,M}=\sigma(F_{m,(M-1)} \cup \sigma(x_{m,M})).
\end{flalign*}
The proof of Theorem \ref{theorem: iid} proceeds along the following steps.

\textbf{Step 1: Iteration within the $m$-th epoch}

For the $m$-th epoch, we consider the last update (i.e., the $M$-th iteration in the epoch), and decompose its error into the following form.
\begin{flalign}\label{eq: step1_1_iid}
\ltwo{\theta_{m,M}-\theta^*}^2&=\ltwo{\theta_{m,M-1} + \alpha\Big( g_{x_{j_{m,M}}}(\theta_{m,M-1}) - g_{x_{j_{m,M}}}(\tilde{\theta}_{m-1}) + g_m(\tilde{\theta}_{m-1}) \Big)- \theta^* }^2 \nonumber\\
&= \ltwo{\theta_{m,M-1}-\theta^*}^2 + 2\alpha (\theta_{m,M-1}-\theta^*)^\top \Big(g_{x_{j_{m,M}}}(\theta_{m,M-1}) - g_{x_{j_{m,M}}}(\tilde{\theta}_{m-1}) + g_m(\tilde{\theta}_{m-1})\Big)\nonumber\\
&\quad + \alpha^2\ltwo{g_{x_{j_{m,M}}}(\theta_{m,M-1}) - g_{x_{j_{m,M}}}(\tilde{\theta}_{m-1}) + g_m(\tilde{\theta}_{m-1})}^2.
\end{flalign}
First, consider the third term in the right-hand side of \cref{eq: step1_1_iid}, we have
\begin{flalign}\label{eq: step1_2_iid}
&\ltwo{g_{x_{j_{m,M}}}(\theta_{m,M-1}) - g_{x_{j_{m,M}}}(\tilde{\theta}_{m-1}) + g_m(\tilde{\theta}_{m-1})}^2 \nonumber\\
&\leq 2\ltwo{g_{x_{j_{m,M}}}(\theta_{m,M-1}) - g_{x_{j_{m,M}}}(\tilde{\theta}_{m-1}) + g(\tilde{\theta}_{m-1})}^2 + 2\ltwo{g_m(\tilde{\theta}_{m-1}) - g(\tilde{\theta}_{m-1})}^2\nonumber\\
&=2\ltwo{ g_{x_{j_{m,M}}}(\theta_{m,M-1})-g_{x_{j_{m,M}}}(\theta^*) - \Big[\big(g_{x_{j_{m,M}}}(\tilde{\theta}_{m-1}) - g_{x_{j_{m,M}}}(\theta^*)\big) - \big(g(\tilde{\theta}_{m-1}) - g(\theta^*)\big)\Big]}^2\nonumber\\
&\quad + 2\ltwo{g_m(\tilde{\theta}_{m-1}) - g(\tilde{\theta}_{m-1})}^2 \nonumber\\
&\leq 4\ltwo{g_{x_{j_{m,M}}}(\theta_{m,M-1})-g_{x_{j_{m,M}}}(\theta^*)}^2 + 4\ltwo{\big(g_{x_{j_{m,M}}}(\tilde{\theta}_{m-1}) - g_{x_{j_{m,M}}}(\theta^*)\big) - \big(g(\tilde{\theta}_{m-1}) - g(\theta^*)\big)}^2 \nonumber\\
&\quad + 2\ltwo{g_m(\tilde{\theta}_{m-1}) - g(\tilde{\theta}_{m-1})}^2.
\end{flalign}
Then, by taking the expectation conditioned on $F_{m, M-1}$ on both sides of \cref{eq: step1_2_iid}, we have
\begin{flalign}
&\mbE\left[\ltwo{g_{x_{j_{m,M}}}(\theta_{m,M-1}) - g_{x_{j_{m,M}}}(\tilde{\theta}_{m-1}) + g_m(\tilde{\theta}_{m-1})}^2 \Big| F_{m, M-1} \right] \nonumber\\
&\overset{(i)}{\leq} 4\mbE\left[\ltwo{g_{x_{j_{m,M}}}(\theta_{m,M-1})-g_{x_{j_{m,M}}}(\theta^*)}^2 \Big| F_{m, M-1} \right] \nonumber\\
&\quad + 4\mbE\left[\ltwo{\big(g_{x_{j_{m,M}}}(\tilde{\theta}_{m-1}) - g_{x_{j_{m,M}}}(\theta^*)\big) - \mbE\big[g_{x_{j_{m,M}}}(\tilde{\theta}_{m-1}) - g_{x_{j_{m,M}}}(\theta^*)\big| F_{m, M-1}]}^2 \Big| F_{m, M-1} \right]\nonumber \\
&\quad + 2\mbE\left[\ltwo{g_m(\tilde{\theta}_{m-1}) - g(\tilde{\theta}_{m-1})}^2 \Big| F_{m, M-1} \right] \nonumber\\
&\overset{(ii)}{\leq} 4(1+\gamma)^2\mbE\big[{\ltwo{\theta_{m,M-1}-\theta^*}^2|F_{m,M-1}}\big] + 4(1+\gamma)^2\mbE\left[{\ltwo{\tilde{\theta}_{m-1}-\theta^*}^2|F_{m,M-1}}\right] \nonumber\\
&\quad + 2\mbE\left[\ltwo{g_m(\tilde{\theta}_{m-1}) - g(\tilde{\theta}_{m-1})}^2 \Big| F_{m, M-1} \right] \nonumber
\end{flalign}
where $(i)$ follows from the fact that $\mbE[\big(g_{x_{j_{m,M}}}(\tilde{\theta}_{m-1}) - g_{x_{j_{m,M}}}(\theta^*)\big)|F_{m,M-1}]=g(\tilde{\theta}_{m-1}) - g(\theta^*)$, and $(ii)$ follows from the inequality $\mbE[(X-\mbE X)^2]\leq \mbE X^2$ and Lemma \ref{lemma: AB_bound}. Then, taking the expectation conditioned on $F_{m,M-1}$ on both sides of \cref{eq: step1_1_iid} yields
\begin{flalign}
&\mbE\left[ \ltwo{\theta_{m,M}-\theta^*}^2 \Big| F_{m,M-1}  \right]\nonumber\\
&= \ltwo{\theta_{m,M-1}-\theta^*}^2 + 2\alpha (\theta_{m,M-1}-\theta^*)^\top \mbE\left[ g_{x_{j_{m,M}}}(\theta_{m,M-1}) - g_{x_{j_{m,M}}}(\tilde{\theta}_{m-1}) + g_m(\tilde{\theta}_{m-1}) \Big| F_{m,M-1}  \right]\nonumber\\
&\quad + \alpha^2\mbE\left[\ltwo{g_{x_{j_{m,M}}}(\theta_{m,M-1}) - g_{x_{j_{m,M}}}(\tilde{\theta}_{m-1}) + g_m(\tilde{\theta}_{m-1})}^2\Big| F_{m,M-1}  \right]\nonumber\\
&\overset{(i)}{\leq} \ltwo{\theta_{m,M-1}-\theta^*}^2 + 2\alpha (\theta_{m,M-1}-\theta^*)^\top g(\theta_{m,M-1}) \nonumber\\
&\quad + 2\alpha (\theta_{m,M-1}-\theta^*)^\top \Big(\mbE\left[ g_m(\tilde{\theta}_{m-1}) \Big| F_{m,M-1}  \right] - g(\tilde{\theta}_{m-1})\Big) \nonumber\\
&\quad + 4\alpha^2(1+\gamma)^2\ltwo{\theta_{m,M-1}-\theta^*}^2 + 4\alpha^2(1+\gamma)^2 \ltwo{\tilde{\theta}_{m-1}-\theta^*}^2 \nonumber\\
&\quad + 2\alpha^2\mbE\left[\ltwo{g_m(\tilde{\theta}_{m-1}) - g(\tilde{\theta}_{m-1})}^2 \Big| F_{m, M-1} \right] \nonumber\\
&\overset{(ii)}{\leq} \ltwo{\theta_{m,M-1}-\theta^*}^2 - \alpha\lambda_A \ltwo{\theta_{m,M-1}-\theta^*}^2 + 2\alpha \mbE\left[\xi_m(\tilde{\theta}_{m-1}) \Big| F_{m,M-1} \right] \nonumber\\
&\quad + 4\alpha^2(1+\gamma)^2\ltwo{\theta_{m,M-1}-\theta^*}^2 + 4\alpha^2(1+\gamma)^2 \ltwo{\tilde{\theta}_{m-1}-\theta^*}^2 \nonumber\\
&\quad + 2\alpha^2\mbE\left[\ltwo{g_m(\tilde{\theta}_{m-1}) - g(\tilde{\theta}_{m-1})}^2 \Big| F_{m, M-1} \right] \nonumber\\
&\overset{(iii)}{\leq} \ltwo{\theta_{m,M-1}-\theta^*}^2 - [\alpha\lambda_{A}-4\alpha^2(1+\gamma)^2]\ltwo{\theta_{m,M-1}-\theta^*}^2 + 4\alpha^2(1+\gamma)^2 \ltwo{\tilde{\theta}_{m-1}-\theta^*}^2 \nonumber\\
&\quad + 2\alpha \mbE\left[\xi_m(\tilde{\theta}_{m-1}) \Big| F_{m,M-1} \right] + 2\alpha^2\mbE\left[\ltwo{g_m(\tilde{\theta}_{m-1}) - g(\tilde{\theta}_{m-1})}^2 \Big| F_{m, M-1} \right], \label{eq: step1_8_iid}
\end{flalign}
where $(i)$ follows from the fact that $\mbE\left[ g_{x_{j_{m,M}}}(\tilde{\theta}_{m-1}) \Big| F_{m,M-1}  \right]=g(\tilde{\theta}_{m-1})$. In $(ii)$ we define $\lambda_A$ as the absolute value of the largest eigenvalue of matrix $(A^T+A)$, which is negative definite according to \cite{tsitsiklis1996analysis}. In $(iii)$ we define $\xi_m(\theta)=(\theta-\theta^*)^\top(g_m(\theta) - g(\theta) )$ for $\theta\in \mbR^d$. Then, by applying \cref{eq: step1_8_iid} iteratively, we have
\begin{flalign}
&\mbE\left[ \ltwo{\theta_{m,1}-\theta^*}^2 \Big| F_{m,0}  \right] \nonumber\\
&\leq \ltwo{\theta_{m,0}-\theta^*}^2 - [\alpha\lambda_{A}-4\alpha^2(1+\gamma)^2] \sum_{i=0}^{M-1} \mbE\left[ \ltwo{\theta_{m,i}-\theta^*}^2 \Big| F_{m,0}  \right] + 4M\alpha^2(1+\gamma)^2 \ltwo{\tilde{\theta}_{m-1}-\theta^*}^2\nonumber\\
&\quad + 2\alpha M \mbE\left[\xi_m(\tilde{\theta}_{m-1}) \Big| F_{m,0} \right] + 2M\alpha^2 \mbE\left[\ltwo{g_m(\tilde{\theta}_{m-1}) - g(\tilde{\theta}_{m-1})}^2 \Big| F_{m, 0} \right]. \label{eq: step1_9_iid}
\end{flalign}
For all $1\leq i\leq M$, we have
\begin{flalign*}
	\mbE\left[\xi_m(\tilde{\theta}_{m-1}) \Big| F_{m,0} \right] &= \mbE\left[ g_m(\tilde{\theta}_{m-1}) \Big| F_{m,0}  \right] - g(\tilde{\theta}_{m-1})\\
	&=\frac{1}{M}\sum_{i\in B_m} \mbE[A_{x_i}\tilde{\theta}_m + b_{x_i}| F_{m,0}] - (A\tilde{\theta}_m + b)\\
	&=\Big[\big(\frac{1}{M}\sum_{i\in B_m} \mbE[A_{x_i} | F_{m,0}] \big)-A\Big]\tilde{\theta}_m + \Big[\big(\frac{1}{M}\sum_{i\in B_m} \mbE[b_{x_i} | F_{m,0}] \big)-b\Big]\\
	&=0.
\end{flalign*}
Then, arranging terms in \cref{eq: step1_9_iid} and using the above fact yield
\begin{flalign}
&[\alpha\lambda_{A}-4\alpha^2(1+\gamma)^2] \sum_{i=0}^{M-1} \mbE\left[ \ltwo{\theta_{m,i}-\theta^*}^2 \Big| F_{m,0}  \right]\nonumber\\
&\leq [1+4M\alpha^2 (1+\gamma)^2]\ltwo{\tilde{\theta}_{m-1}-\theta^*}^2 + 2M\alpha^2 \mbE\left[\ltwo{g_m(\tilde{\theta}_{m-1}) - g(\tilde{\theta}_{m-1})}^2 \Big| F_{m, 0} \right].\label{eq: step1_10_iid}
\end{flalign}
Finally, dividing \cref{eq: step1_10_iid} by $[\alpha\lambda_{A}-4\alpha^2(1+\gamma)^2]M$ on both sides yields
\begin{flalign}
&\mbE\left[ \ltwo{\tilde{\theta}_m-\theta^*}^2 \Big| F_{m,0}  \right]\nonumber\\
&\leq \frac{1/M+4\alpha^2 (1+\gamma)^2}{\alpha\lambda_{A}-4\alpha^2(1+\gamma)^2}\ltwo{\tilde{\theta}_{m-1}-\theta^*}^2 + \frac{2\alpha}{\lambda_{A}-4\alpha(1+\gamma)^2} \mbE\left[\ltwo{g_m(\tilde{\theta}_{m-1}) - g(\tilde{\theta}_{m-1})}^2 \Big| F_{m, 0} \right]. \label{eq: step1_11_iid}
\end{flalign}
%For simplicity, let $C_1 = \frac{1/M+4\alpha^2 (1+\gamma)^2}{\alpha\lambda_{A}-4\alpha^2(1+\gamma)^2}$ and $C_2=\frac{2\alpha}{\lambda_{A}-4\alpha(1+\gamma)^2}$, then we rewrite \ref{eq: step1_11_iid}:
%\begin{flalign}
%\mbE\left[ \ltwo{\tilde{\theta}_m-\theta^*}^2 \Big| F_{m,0}  \right]\leq C_1\ltwo{\tilde{\theta}_{m-1}-\theta^*}^2 + C_2\mbE\left[\ltwo{g_m(\tilde{\theta}_{m-1}) - g(\tilde{\theta}_{m-1})}^2 \Big| F_{m, 0} \right].\label{eq: step1_12_iid}
%\end{flalign}

\textbf{Step 2: Bounding the variance error}

For any $0\leq k \leq m-1$, we have
\begin{flalign}
&\mbE\left[\ltwo{g_m(\tilde{\theta}_{m-1}) - g(\tilde{\theta}_{m-1})}^2 \Big| F_{m, 0}  \right]\\
&=\mbE\left[\ltwo{\frac{1}{M}\sum_{i\in B_m} g_{x_i}(\tilde{\theta}_{m-1})-g(\tilde{\theta}_{m-1})}^2\Big| F_{m, 0}\right]=\frac{1}{M^2} \mbE\left[\ltwo{\sum_{i\in B_m} g_{x_i}(\tilde{\theta}_{m-1})-g(\tilde{\theta}_{m-1})}^2\Big| F_{m, 0}\right]\nonumber\\
&=\frac{1}{M^2} \mbE\left[\Big(\sum_{i\in B_m} g_{x_i}(\tilde{\theta}_{m-1})-g(\tilde{\theta}_{m-1})\Big)^\top  \Big(\sum_{j\in B_m} g_{x_j}(\tilde{\theta}_{m-1})-g(\tilde{\theta}_{m-1})\Big)\Big| F_{m, 0}\right]\nonumber\\
&=\frac{1}{M^2}\sum_{i\in B_m}\sum_{j\in B_m} \mbE\left[\Big\langle g_{x_i}(\tilde{\theta}_{m-1})-g(\tilde{\theta}_{m-1}), g_{x_j}(\tilde{\theta}_{m-1})-g(\tilde{\theta}_{m-1}) \Big\rangle \Big| F_{m, 0} \right]   \nonumber\\
&=\frac{1}{M^2}\sum_{\substack{i=j}} \mbE\left[\ltwo{g_{x_i}(\tilde{\theta}_{m-1})-g(\tilde{\theta}_{m-1})}^2 \Big| F_{m, 0} \right]  \nonumber \\
&=\frac{1}{M^2}\sum_{\substack{i=j}} \mbE\left[\ltwo{g_{x_i}(\tilde{\theta}_{m-1})-\mbE\big[g_{x_i}(\tilde{\theta}_{m-1}) \Big| F_{m, 0} \big]} \Big| F_{m, 0} \right] \nonumber \\
&\leq \frac{1}{M^2}\sum_{\substack{i=j}} \mbE\left[\ltwo{g_{x_i}(\tilde{\theta}_{m-1})}^2 \Big| F_{m, 0} \right] \leq \frac{1}{M}\left(D_1\ltwo{\tilde{\theta}_{m-1}-\theta^*}^2 + D_2\right)  \label{eq: step2_1_iid},
\end{flalign}
where \cref{eq: step2_1_iid} follows from Lemma \ref{lemma: iid_variance}.

\textbf{Step 3: Iteration over $m$ epoches}

First, we substitute \cref{eq: step2_1_iid} into \cref{eq: step1_11_iid} to obtain
\begin{flalign}
	\mbE\left[ \ltwo{\tilde{\theta}_m-\theta^*}^2 \Big| F_{m,0}  \right] \leq C_1 \ltwo{\tilde{\theta}_{m-1}-\theta^*}^2 + \frac{2D_2\alpha}{(\lambda_{A}-4\alpha(1+\gamma)^2)M}, \label{eq: step3_1_iid}
\end{flalign}
where we define $C_1=\Big(4\alpha (1+\gamma)^2 + \frac{2D_1\alpha^2 + 1}{\alpha M}\Big) \frac{1}{\lambda_{A}-4\alpha(1+\gamma)^2}$.

Taking the expectation of \cref{eq: step3_1_iid} conditioned on $F_{m-1,0}$ and following the steps similar to those in step 1 to upper bound $\mbE\left[\ltwo{\tilde{\theta}_{m-1}-\theta^*}^2\Big| F_{m-1,0}  \right]$, we obtain
\begin{flalign*}
\mbE\left[ \ltwo{\tilde{\theta}_m-\theta^*}^2 \Big| F_{m-1,0}  \right]&\leq C_1\mbE\left[\ltwo{\tilde{\theta}_{m-1}-\theta^*}^2\Big| F_{m-1,0}  \right]+ \frac{2D_2\alpha}{(\lambda_{A}-4\alpha(1+\gamma)^2)M} \\
&\leq C_1^2 \ltwo{\tilde{\theta}_{m-2}-\theta^*}^2 + \frac{2D_2\alpha}{(\lambda_{A}-4\alpha(1+\gamma)^2)M}\sum_{k=0}^{1}C_1^k.
\end{flalign*}
Then, by following the above steps for $(m-1)$ times, we have
\begin{flalign*}
\mbE\left[ \ltwo{\tilde{\theta}_m-\theta^*}^2  \right]&\leq C_1^m\ltwo{\tilde{\theta}_0-\theta^*}^2 + \frac{2D_2\alpha}{(\lambda_{A}-4\alpha(1+\gamma)^2)M}\sum_{k=0}^{m-1}C_1^k\\
&\leq C_1^m\ltwo{\tilde{\theta}_0-\theta^*}^2 + \frac{2D_2\alpha}{(1-C_1)(\lambda_{A}-4\alpha(1+\gamma)^2)M},
\end{flalign*}
which yields the desirable result.

\section{Proof of Theorem \ref{theorem: non-iid}: Convergence of VRTD with Markovian samples} \label{app: B}
We define $\sigma(S_i)$ to be the $\sigma$-field of all samples up to the $i$-th epoch and recall that $j_{m,t}$ is the index of the sample picked at the $t$-th iteration in the $m$-th epoch in Algorithm \ref{vrTD2}. Then we define a set of $\sigma$-fields in the following incremental way:
\begin{flalign*}
	&F_{1,0}=\sigma(S_0), \, F_{1,1}=\sigma(F_{1,0} \cup \sigma(j_{1,1}) ), ..., \, F_{1,M}=\sigma(F_{1,(M-1)} \cup \sigma(j_{1,M}) ),\\
	&F_{2,0}=\sigma(\sigma(S_1)\cup F_{1,M}\cup\sigma(\tilde{\theta}_1)), \, F_{21}=\sigma(F_{2,0} \cup \sigma(j_{2,1})), ..., \, F_{2,m}=\sigma(F_{2,(M-1)} \cup \sigma(j_{2,M})),\\
	&\vdots\\
	&F_{m,0}=\sigma(\sigma(S_{m-1})\cup F_{(m-1),M}\cup\sigma(\tilde{\theta}_{m-1})), \, F_{m,1}=\sigma(F_{m,0} \cup \sigma(j_{m,1})), ..., \, F_{m,M}=\sigma(F_{m,(M-1)} \cup \sigma(j_{m,M})).
\end{flalign*}

\subsection{Proof of Lemma \ref{lemma: bias}} \label{sec: biaslemma}
We first prove Lemma \ref{lemma: bias}, which is useful for step 4 in the main proof in Theorem \ref{theorem: non-iid} provided in Section \ref{sec: proofofnoniid}.
\begin{proof}
	Recall the definition of the bias term: $\xi_{n}(\theta)=(\theta-\theta^*)^\top(g_n(\theta) - g(\theta) )$. We have
	\begin{flalign}
	&\mbE\left[\xi_{n}(\theta)\right|\mcf_{n,0}]\nonumber\\
	&=\mbE\left[(\theta-\theta^*)^\top(g_n(\theta) - g(\theta) )|\mcf_{n,0}\right]\nonumber\\
	& = \mbE\left[(\theta-\theta^*)^\top\Big[\Big( \frac{1}{M}\sum_{i=(n-1)M}^{nM-1} A_{x_i} - A \Big)\theta + \Big( \frac{1}{M}\sum_{i=(n-1)M}^{nM-1} b_{x_i} - b \Big)\Big] \Bigg|\mcf_{n,0}\right]\nonumber\\
	& \leq \frac{\lambda_A}{4} \mbE[\ltwo{\theta-\theta^*}^2|\mcf_{n,0}] + \frac{1}{\lambda_A}\mbE\left[\ltwo{\Big( \frac{1}{M}\sum_{i=(n-1)M}^{nM-1} A_{x_i} - A \Big)\theta + \Big( \frac{1}{M}\sum_{i=(n-1)M}^{nM-1} b_{x_i} - b \Big)}^2 \Bigg| \mcf_{n,0} \right]\nonumber\\
	& \leq \frac{\lambda_A}{4}\mbE[\ltwo{\theta-\theta^*}^2|\mcf_{n,0}] + \frac{2}{\lambda_A} \mbE\left[\ltwo{\Big( \frac{1}{M}\sum_{i=(n-1)M}^{nM-1} A_{x_i} - A \Big)\theta }^2 \Bigg|\mcf_{n,0} \right] \nonumber \\
	&\quad + \frac{2}{\lambda_A} \mbE\left[\ltwo{\frac{1}{M}\sum_{i=(n-1)M}^{nM-1} b_{x_i} - b}^2 \Bigg|\mcf_{n,0} \right]\nonumber\\
	& \leq \frac{\lambda_A}{4}\mbE[\ltwo{\theta-\theta^*}^2|\mcf_{n,0}] + \frac{2R^2_\theta}{\lambda_A} \mbE\left[\ltwo{ \frac{1}{M}\sum_{i=(n-1)M}^{nM-1} A_{x_i} - A  }^2 \Bigg|\mcf_{n,0} \right] \nonumber \\
	&\quad + \frac{2}{\lambda_A} \mbE\left[\ltwo{\frac{1}{M}\sum_{i=(n-1)M}^{nM-1} b_{x_i} - b}^2 \Bigg|\mcf_{n,0} \right]\nonumber\\
	& \overset{(i)}{\leq} \frac{\lambda_A}{4}\mbE[\ltwo{\theta-\theta^*}^2|\mcf_{n,0}] + \frac{2R^2_\theta}{\lambda_A} \mbE\left[\lF{ \frac{1}{M}\sum_{i=(n-1)M}^{nM-1} A_{x_i} - A  }^2 \Bigg|\mcf_{n,0} \right] \nonumber \\
	&\quad + \frac{2}{\lambda_A} \mbE\left[\ltwo{\frac{1}{M}\sum_{i=(n-1)M}^{nM-1} b_{x_i} - b}^2 \Bigg|\mcf_{n,0} \right],\label{bias_incre1}
	\end{flalign}
	where $(i)$ follows from the fact that $\ltwo{W}\leq \lF{W}$ for all matrices $W\in \mbR^{d\times d}$. We define the interproduct between two matices $W,V\in \mbR^{d\times d}$ as $\langle W,V \rangle=\sum_{ij}\sum_{ij} W_{ij} V_{ij} $. Consider the second term in \cref{bias_incre1}: $\mbE\left[\lF{ \frac{1}{M}\sum_{i=(n-1)M}^{nM-1} A_{x_i} - A  }^2 \Big|\mcf_{n,0} \right]$, we have
	\begin{flalign}
		&\mbE\left[\lF{  \frac{1}{M}\sum_{i=(n-1)M}^{nM-1} A_{x_i} - A  }^2 \Bigg|\mcf_{n,0} \right]\nonumber\\
		&=\frac{1}{M^2} \sum_{i=(n-1)M}^{nM-1} \sum_{j=(n-1)M}^{nM-1} \mbE[ \langle A_{x_i}-A, A_{x_j}-A \rangle | \mcf_{n,0} ] \nonumber\\
		&=\frac{1}{M^2} \Big[ \sum_{i=j} \mbE[ \lF{A_{x_i}-A}^2 | \mcf_{n,0} ] + \sum_{i\neq j} \mbE[   \langle A_{x_i}-A, A_{x_j}-A \rangle | \mcf_{n,0} ] \Big]\nonumber\\
		&\leq\frac{1}{M^2} \Big[ \sum_{i=j} \mbE[ (\lF{A_{x_i}}+\lF{A})^2 | \mcf_{n,0} ] + \sum_{i\neq j} \mbE[   \langle A_{x_i}-A, A_{x_j}-A \rangle | \mcf_{n,0} ] \Big]\nonumber\\
		&\leq \frac{1}{M^2}\Big[ 4(1+\gamma)^2M + \sum_{i\neq j} \mbE[   \langle A_{x_i}-A, A_{x_j}-A \rangle | \mcf_{n,0} ]. \Big]\label{bias_incre2}
	\end{flalign}
	Now consider upper bound the term $\mbE[ \langle A_{x_i}-A, A_{x_j}-A \rangle | \mcf_{n,0} ] $ in \cref{bias_incre2}. Without loss of generality, we consider the case when $i>j$ as follows:
	\begin{flalign}
		&\mbE[ \langle A_{x_i}-A, A_{x_j}-A \rangle | \mcf_{n,0} ]\nonumber\\
		&=\mbE\Big[ \mbE\big[ \langle A_{x_i}-A, A_{x_j}-A \rangle | x_j \big] \Big| \mcf_{n,0} \Big]\nonumber\\
		&=\mbE\Big[  \langle \mbE\big[A_{x_i}| x_j  \big]-A, A_{x_j}-A\rangle  \Big| \mcf_{n,0} \Big]\nonumber\\
		&\leq\mbE\Big[   \lF{\mbE\big[A_{x_i}| x_j  \big]-A} \lF{A_{x_j}-A} \Big| \mcf_{n,0} \Big] \nonumber\\
		&\leq\mbE\Big[   \lF{\mbE\big[A_{x_i}| x_j  \big]-A} \big(\lF{A_{x_j}}+\lF{A}\big) \Big| \mcf_{n,0} \Big] \nonumber\\
		&\leq 2(1+\gamma) \mbE\Big[   \lF{\mbE\big[A_{x_i}| x_j  \big]-A}  \Big| \mcf_{n,0} \Big]\nonumber\\
		&\leq 2\kappa(1+\gamma)^2\rho^{i-j}.\nonumber
	\end{flalign}
	We can further obtain
	\begin{flalign}
		\hspace{-1cm}\sum_{i\neq j} \mbE[   \langle A_{x_i}-A, A_{x_j}-A \rangle | \mcf_{n,0} ]&\leq 2\kappa(1+\gamma)^2 \sum_{i\neq j} \rho^{\lone{i-j}}\leq 2\kappa(1+\gamma)^2 \sum_{k=1}^{M-1}\sum_{l=1}^{k} \rho^l\nonumber\\
		&\leq 2\kappa(1+\gamma)^2 \frac{2\rho}{1-\rho} \sum_{k=1}^{M-1} (1-\rho^{k}) \leq \frac{4(1+\gamma)^2M\kappa\rho}{1-\rho}.\label{bias_incre3}
	\end{flalign}
	Then substituting \cref{bias_incre3} into \cref{bias_incre2} yields
	\begin{flalign}
		&\mbE\left[\lF{  \frac{1}{M}\sum_{i=(n-1)M}^{nM-1} A_{x_i} - A  }^2 \Bigg|\mcf_{n,0} \right] \leq  \frac{1}{M}\Big[ 4(1+\gamma)^2 + \frac{4(1+\gamma)^2\kappa\rho}{1-\rho} \Big].\nonumber
	\end{flalign}%\label{bias_incre4}
	Then consider the third term in \cref{bias_incre1}: $\mbE\left[\ltwo{\frac{1}{M}\sum_{i=(n-1)M}^{nM-1} b_{x_i} - b}^2 \Bigg|\mcf_{n,0} \right]$, similarly we have
	\begin{flalign}
		&\mbE\left[\ltwo{\frac{1}{M}\sum_{i=(n-1)M}^{nM-1} b_{x_i} - b}^2 \Bigg|\mcf_{n,0} \right]\nonumber\\
		&=\frac{1}{M^2} \sum_{i=(n-1)M}^{nM-1} \sum_{j=(n-1)M}^{nM-1} \mbE[ (b_{x_i}-b)^\top (b_{x_j}-b) | \mcf_{n,0} ] \nonumber\\
		&=\frac{1}{M^2} \Big[ \sum_{i=j} \mbE[ \ltwo{b_{x_i}-b}^2 | \mcf_{n,0} ] + \sum_{i\neq j} \mbE[   (b_{x_i}-b)^\top (b_{x_j}-b) | \mcf_{n,0} ] \Big]\nonumber\\
		&\leq\frac{1}{M^2} \Big[ \sum_{i=j} \mbE[ (\ltwo{b_{x_i}}+\ltwo{b})^2 | \mcf_{n,0} ] + \sum_{i\neq j} \mbE[   (b_{x_i}-b)^\top (b_{x_j}-b) | \mcf_{n,0} ] \Big]\nonumber\\
		&\leq \frac{1}{M^2}\Big[ 4r^2_{\max} M + \sum_{i\neq j} \mbE[   (b_{x_i}-b)^\top (b_{x_j}-b) | \mcf_{n,0} ]\label{bias_incre5}
	\end{flalign}
	Now to upper-bound the term $\mbE[ (b_{x_i}-b)^\top (b_{x_j}-b) | \mcf_{n,0} ] $, without loss of generality, we assume $i>j$
	\begin{flalign}
	&\mbE[ (b_{x_i}-b)^\top (b_{x_j}-b) | \mcf_{n,0} ]\nonumber\\
	&=\mbE\Big[ \mbE\big[ (b_{x_i}-b)^\top (b_{x_j}-b) | x_j  \big] \Big| \mcf_{n,0} \Big]\nonumber\\
	&=\mbE\Big[  ( \mbE\big[b_{x_i}| x_j  \big]-b)^\top (b_{x_j}-b)  \Big| \mcf_{n,0} \Big]\nonumber\\
	&\leq\mbE\Big[   \ltwo{\mbE\big[b_{x_i}| x_j  \big]-b} \ltwo{b_{x_j}-b} \Big| \mcf_{n,0} \Big] \nonumber\\
	&\leq\mbE\Big[   \ltwo{\mbE\big[b_{x_i}| x_j  \big]-b} \big(\ltwo{b_{x_j}}-\ltwo{b}\big) \Big| \mcf_{n,0} \Big] \nonumber\\
	&\leq 2r_{\max} \mbE\Big[   \ltwo{\mbE\big[b_{x_i}| x_j  \big]-b}  \Big| \mcf_{n,0} \Big]\nonumber\\
	&\leq 2\kappa r_{\max}^2\rho^{i-j}.\nonumber
	\end{flalign}
	Thus, we have
	\begin{flalign}
	\sum_{i\neq j} \mbE[   (b_{x_i}-b)^\top (b_{x_j}-b) | \mcf_{n,0} ] &\leq 2\kappa r_{\max}^2 \sum_{i\neq j} \rho^{\lone{i-j}}\leq 2\kappa r_{\max}^2 \sum_{k=1}^{M-1}\sum_{l=1}^{k} \rho^l\nonumber\\
	&\leq 2\kappa r_{\max}^2 \frac{2\rho}{1-\rho} \sum_{k=1}^{M-1} (1-\rho^{k}) \leq \frac{4 r_{\max}^2 M\kappa\rho}{1-\rho}.\label{bias_incre6}
	\end{flalign}
	Combing all of the above pieces together yields the following final bound
	\begin{flalign}
		\mbE\left[\xi_{n}(\theta)\right|\mcf_{n,0}]\leq \frac{\lambda_A}{4}\mbE[\ltwo{\theta-\theta^*}^2|\mcf_{n,0}] + \frac{8[1+(\kappa-1)\rho]}{\lambda_A(1-\rho)M} [R^2_\theta (1+\gamma)^2+r^2_{\max}].\label{eq: biasbound}
	\end{flalign}
\end{proof}

\subsection{Proof of Theorem \ref{theorem: non-iid}}\label{sec: proofofnoniid}

\textbf{Step 1: Iteration within the $m$-th inner loop}

For the $m$-th inner loop, we consider the last update (i.e., the $M$-th iteration in the epoch), and decompose its error into the following form.
\begin{flalign}\label{eq: step1_1}
	\ltwo{\theta_{m,M}-\theta^*}^2&=\ltwo{\mcpi_{R_\theta}\left(\theta_{m,M-1} + \alpha\Big( g_{x_{j_{m,M}}}(\theta_{m,M-1}) - g_{x_{j_{m,M}}}(\tilde{\theta}_{m-1}) + g_m(\tilde{\theta}_{m-1}) \Big)\right) - \theta^* }^2 \nonumber\\
	&\leq \ltwo{\theta_{m,M-1} + \alpha( g_{x_{j_{m,M}}}(\theta_{m,M-1}) - g_{x_{j_{m,M}}}(\tilde{\theta}_{m-1}) + g_m(\tilde{\theta}_{m-1}) ) - \theta^* }^2 \nonumber\\
	&= \ltwo{\theta_{m,M-1}-\theta^*}^2 + 2\alpha (\theta_{m,M-1}-\theta^*)^\top \Big(g_{x_{j_{m,M}}}(\theta_{m,M-1}) - g_{x_{j_{m,M}}}(\tilde{\theta}_{m-1}) + g_m(\tilde{\theta}_{m-1})\Big)\nonumber\\
	&\quad + \alpha^2\ltwo{g_{x_{j_{m,M}}}(\theta_{m,M-1}) - g_{x_{j_{m,M}}}(\tilde{\theta}_{m-1}) + g_m(\tilde{\theta}_{m-1})}^2.
\end{flalign}
First, consider the third term in the right-hand side of \cref{eq: step1_1}.
\begin{flalign}\label{eq: step1_2}
	&\ltwo{g_{x_{j_{m,M}}}(\theta_{m,M-1}) - g_{x_{j_{m,M}}}(\tilde{\theta}_{m-1}) + g_m(\tilde{\theta}_{m-1})}^2 \nonumber\\
	&=\ltwo{ g_{x_{j_{m,M}}}(\theta_{m,M-1})-g_{x_{j_{m,M}}}(\theta^*) - \Big[\big(g_{x_{j_{m,M}}}(\tilde{\theta}_{m-1}) - g_{x_{j_{m,M}}}(\theta^*)\big) - \big(g_m(\tilde{\theta}_{m-1}) - g_m(\theta^*)\big)\Big] + g_m(\theta^*) }^2 \nonumber\\
	&\leq 3\ltwo{g_{x_{j_{m,M}}}(\theta_{m,M-1})-g_{x_{j_{m,M}}}(\theta^*)}^2 + 3\ltwo{\big(g_{x_{j_{m,M}}}(\tilde{\theta}_{m-1}) - g_{x_{j_{m,M}}}(\theta^*)\big) - \big(g_m(\tilde{\theta}_{m-1}) - g_m(\theta^*)\big)}^2 \nonumber\\
	&\quad + 3\ltwo{g_m(\theta^*)}^2.
\end{flalign}
Then, by taking the expectation conditioned on $F_{m, (M-1)}$ on both sides of \cref{eq: step1_2}, we have
\begin{flalign}
	&\mbE\left[\ltwo{g_{x_{j_{m,M}}}(\theta_{m,M-1}) - g_{x_{j_{m,M}}}(\tilde{\theta}_{m-1}) + g_m(\tilde{\theta}_{m-1})}^2 \Big| F_{m, M-1} \right] \nonumber\\
	&\overset{(i)}{\leq} 3\mbE\left[\ltwo{g_{x_{j_{m,M}}}(\theta_{m,M-1})-g_{x_{j_{m,M}}}(\theta^*)}^2 \Big| F_{m, M-1} \right] \nonumber\\
	&\quad + 3\mbE\left[\ltwo{\big(g_{x_{j_{m,M}}}(\tilde{\theta}_{m-1}) - g_{x_{j_{m,M}}}(\theta^*)\big) - \mbE\big[g_{x_{j_{m,M}}}(\tilde{\theta}_{m-1}) - g_{x_{j_{m,M}}}(\theta^*)|F_{m,M-1}\big]}^2 \Big| F_{m, M-1} \right]\nonumber\\
	&\quad + 3\mbE\left[\ltwo{g_m(\theta^*)}^2 \Big| F_{m, M-1} \right] \nonumber\\
	&\leq 3\mbE\left[\ltwo{g_{x_{j_{m,M}}}(\theta_{m,M-1})-g_{x_{j_{m,M}}}(\theta^*)}^2 \Big| F_{m, M-1} \right] + 3\mbE\left[\ltwo{g_{x_{j_{m,M}}}(\tilde{\theta}_{m-1})-g_{x_{j_{m,M}}}(\theta^*)}^2 \Big| F_{m, M-1} \right] \nonumber\\
	&\overset{(ii)}{\leq} 3\mbE\left[\ltwo{ A_{m,M}}^2  \ltwo{\theta_{m,M-1} - \theta^*}^2  \Big| F_{m, M-1} \right] + 3\mbE\left[\ltwo{ A_{m,M}}^2  \ltwo{\tilde{\theta}_{m-1} - \theta^*}^2 \Big| F_{m, M-1} \right] \nonumber\\
	&\quad +3\mbE\left[\ltwo{g_m(\theta^*)}^2 \Big| F_{1, 0} \right] \nonumber\\
	&\overset{(iii)}{\leq} 3(1+\gamma)^2\ltwo{\theta_{m,M-1}-\theta^*}^2 + 3(1+\gamma)^2 \ltwo{\tilde{\theta}_{m-1}-\theta^*}^2 + 3\mbE\left[\ltwo{g_m(\theta^*)}^2 \Big| F_{1, 0} \right] \label{eq: step1_4}
\end{flalign}
where $(i)$ follows from the fact that $\mbE[\big(g_{x_{j_{m,M}}}(\tilde{\theta}_{m-1}) - g_{x_{j_{m,M}}}(\theta^*)\big)|F_{m,M-1}]=g_m(\tilde{\theta}_{m-1}) - g_m(\theta^*)$, $(ii)$ follows from the inequality $\mbE[(X-\mbE X)^2]\leq \mbE X^2$, and $(iii)$ follows from Lemma \ref{lemma: AB_bound}. We further consider the last term in \cref{eq: step1_4}:
\begin{flalign*}
	\mbE\left[\ltwo{g_m(\theta^*)}^2 \Big| F_{1, 0} \right] = \ltwo{\Big(\frac{1}{M}\sum_{i=(m-1)M}^{mM-1}A_i\Big)\theta^* + \Big(\frac{1}{M}\sum_{i=(m-1)M}^{mM-1}b_i\Big)}^2.
\end{flalign*}
Then, taking the expectation conditioned on $F_{m,M-1}$ on both sides of \cref{eq: step1_1} yields
\begin{flalign}
	&\mbE\left[ \ltwo{\theta_{m,M}-\theta^*}^2 \Big| F_{m,M-1}  \right]\nonumber\\
	&\leq \ltwo{\theta_{m,M-1}-\theta^*}^2 + 2\alpha (\theta_{m,M-1}-\theta^*)^\top \mbE\left[ g_{x_{j_{m,M}}}(\theta_{m,M-1}) - g_{x_{j_{m,M}}}(\tilde{\theta}_{m-1}) + g_m(\tilde{\theta}_{m-1}) \Big| F_{m,M-1}  \right]\nonumber\\
	&\quad + \alpha^2\mbE\left[\ltwo{g_{x_{j_{m,M}}}(\theta_{m,M-1}) - g_{x_{j_{m,M}}}(\tilde{\theta}_{m-1}) + \tilde{g}_m}^2\Big| F_{m,M-1}  \right]\nonumber\\
	&\overset{(i)}{\leq} \ltwo{\theta_{m,M-1}-\theta^*}^2 + 2\alpha (\theta_{m,M-1}-\theta^*)^\top \mbE\left[ g_{x_{j_{m,M}}}(\theta_{m,M-1}) \Big| F_{m,M-1}  \right] + 3\alpha^2(1+\gamma)^2\ltwo{\theta_{m,M-1}-\theta^*}^2 \nonumber\\
	&\quad  + 3\alpha^2(1+\gamma)^2 \ltwo{\tilde{\theta}_{m-1}-\theta^*}^2 + 3\alpha^2\mbE\left[\ltwo{g_m(\theta^*)}^2\Big| F_{1, 0} \right] \nonumber\\
	&\overset{(ii)}{=} \ltwo{\theta_{m,M-1}-\theta^*}^2 + 2\alpha (\theta_{m,M-1}-\theta^*)^\top g(\theta_{m,M-1}) + 2\alpha \mbE\left[\xi_m(\theta_{m,M-1}) \Big| F_{m,M-1} \right] \nonumber\\
	&\quad + 3\alpha^2(1+\gamma)^2\ltwo{\theta_{m,M-1}-\theta^*}^2 + 3\alpha^2(1+\gamma)^2 \ltwo{\tilde{\theta}_{m-1}-\theta^*}^2 + 3\alpha^2\mbE\left[\ltwo{g_m(\theta^*)}^2\Big| F_{1, 0} \right]\nonumber\\
	&\leq \ltwo{\theta_{m,M-1}-\theta^*}^2 - [\alpha\lambda_{A}-3\alpha^2(1+\gamma)^2]\ltwo{\theta_{m,M-1}-\theta^*}^2 + 3\alpha^2(1+\gamma)^2 \ltwo{\tilde{\theta}_{m-1}-\theta^*}^2 \nonumber\\
	&\quad + 2\alpha \mbE\left[\xi_m(\theta_{m,M-1}) \Big| F_{m,M-1} \right] + 3\alpha^2\mbE\left[\ltwo{g_m(\theta^*)}^2\Big| F_{1, 0} \right], \label{eq: step1_8}
\end{flalign}
where $(i)$ follows by plugging \cref{eq: step1_4} into its preceding step and from the fact that $\mbE\left[ g_{x_{j_{m,M}}}(\tilde{\theta}_{m-1}) - g_m(\tilde{\theta}_{m-1}) \Big| F_{m,M-1}  \right]=0$. In $(ii)$ we define $\xi_m(\theta)=(\theta-\theta^*)^\top(g_m(\theta) - g(\theta) )$ for $\theta\in \mbR^d$. Then, by applying \cref{eq: step1_8} iteratively, we have
\begin{flalign}
	&\mbE\left[ \ltwo{\theta_{m,1}-\theta^*}^2 \Big| F_{m,0}  \right] \nonumber\\
	&\leq \ltwo{\theta_{m,0}-\theta^*}^2 - [\alpha\lambda_{A}-3\alpha^2(1+\gamma)^2] \sum_{i=0}^{M-1} \mbE\left[ \ltwo{\theta_{m,i}-\theta^*}^2 \Big| F_{m,0}  \right] + 3M\alpha^2(1+\gamma)^2 \ltwo{\tilde{\theta}_{m-1}-\theta^*}^2\nonumber\\
	&\quad + 2\alpha\sum_{i=0}^{M-1} \mbE\left[\xi_m(\theta_{m,i}) \Big| F_{m,0} \right] + 3M\alpha^2 \mbE\left[\ltwo{g_m(\theta^*)}^2\Big| F_{1, 0} \right]. \label{eq: step1_9}
\end{flalign}
Arranging the terms in \cref{eq: step1_9} yields
\begin{flalign}
	&[\alpha\lambda_{A}-3\alpha^2(1+\gamma)^2] \sum_{i=0}^{M-1} \mbE\left[ \ltwo{\theta_{m,i}-\theta^*}^2 \Big| F_{m,0}  \right]\nonumber\\
	&\leq [1+3M\alpha^2 (1+\gamma)^2]\ltwo{\tilde{\theta}_{m-1}-\theta^*}^2+ 2\alpha\sum_{i=0}^{M-1} \mbE\left[\xi_m(\theta_{m,i}) \Big| F_{m,0} \right] + 3M\alpha^2 \mbE\left[\ltwo{g_m(\theta^*)}^2\Big| F_{1, 0} \right].\label{eq: step1_10}
\end{flalign}
Then, substituting \cref{eq: biasbound} into \cref{eq: step1_10}, we obtain 
\begin{flalign}
	&[\alpha\lambda_{A}-3\alpha^2(1+\gamma)^2] \sum_{i=0}^{M-1} \mbE\left[ \ltwo{\theta_{m,i}-\theta^*}^2 \Big| F_{m,0}  \right]\nonumber\\
	&\leq [1+3M\alpha^2 (1+\gamma)^2]\ltwo{\tilde{\theta}_{m-1}-\theta^*}^2+ \frac{\lambda_A \alpha}{2}\sum_{i=0}^{M-1} \mbE\left[ \ltwo{\theta_{m,i}-\theta^*}^2 | F_{m,0} \right] \nonumber\\
	&\quad + \frac{16[1+(\kappa-1)\rho]\alpha}{\lambda_A(1-\rho)} [R^2_\theta (1+\gamma)^2+r^2_{\max}] + 3M\alpha^2 \mbE\left[\ltwo{g_m(\theta^*)}^2\Big| F_{1, 0} \right].\label{eq: step1_13}
\end{flalign}
Subtracting  $0.5\lambda_A \alpha\sum_{i=0}^{M-1} \mbE[ \ltwo{\theta_{m,i}-\theta^*}^2 | F_{m,0} ]$ on both sides of \cref{eq: step1_13} yields
\begin{flalign}
&[0.5\alpha\lambda_{A}-3\alpha^2(1+\gamma)^2] \sum_{i=0}^{M-1} \mbE\left[ \ltwo{\theta_{m,i}-\theta^*}^2 \Big| F_{m,0}  \right]\nonumber\\
&\leq [1+3M\alpha^2 (1+\gamma)^2]\ltwo{\tilde{\theta}_{m-1}-\theta^*}^2+ \frac{16[1+(\kappa-1)\rho]\alpha}{\lambda_A(1-\rho)} [R^2_\theta (1+\gamma)^2+r^2_{\max}] \nonumber\\
&\quad  + 3M\alpha^2 \mbE\left[\ltwo{g_m(\theta^*)}^2\Big| F_{1, 0} \right].\label{eq: step1_14}
\end{flalign}

Then, dividing \cref{eq: step1_10} by $[0.5\alpha\lambda_{A}-3\alpha^2(1+\gamma)^2]M$ on both sides, we obtain
\begin{flalign}
	&\mbE\left[ \ltwo{\tilde{\theta}_m-\theta^*}^2 \Big| F_{m,0}  \right]\nonumber\\
	&\leq \frac{1/M+3\alpha^2 (1+\gamma)^2}{0.5\alpha\lambda_{A}-3\alpha^2(1+\gamma)^2}\ltwo{\tilde{\theta}_{m-1}-\theta^*}^2 + \frac{16[1+(\kappa-1)\rho][R^2_\theta (1+\gamma)^2+r^2_{\max}]\alpha}{\lambda_A(1-\rho)[0.5\alpha\lambda_{A}-3\alpha^2(1+\gamma)^2]M}  \nonumber\\
	&\quad + \frac{3\alpha}{0.5\lambda_{A}-3\alpha(1+\gamma)^2}\mbE\left[\ltwo{g_m(\theta^*)}^2\Big| F_{1, 0} \right]. \label{eq: step1_11}
\end{flalign}
For simplicity, let $C_1 = \frac{1/M+3\alpha^2 (1+\gamma)^2}{0.5\alpha\lambda_{A}-3\alpha^2(1+\gamma)^2}$, $C_2=\frac{16[1+(\kappa-1)\rho][R^2_\theta (1+\gamma)^2+r^2_{\max}]}{1-\rho}$ and $C_3=\frac{3\alpha}{0.5\lambda_{A}-3\alpha(1+\gamma)^2}$. Then we rewrite \cref{eq: step1_11}:
\begin{flalign}
	\mbE\left[ \ltwo{\tilde{\theta}_m-\theta^*}^2 \Big| F_{m,0}  \right]\leq C_1\ltwo{\tilde{\theta}_{m-1}-\theta^*}^2+\frac{ C_2}{[0.5\lambda_{A}-3\alpha(1+\gamma)^2]\lambda_A M} + C_3\mbE\left[\ltwo{g_m(\theta^*)}^2\Big| F_{1, 0} \right].\label{eq: step1_12}
\end{flalign}

\textbf{Step 2: Iteration over $m$ epochs}

Taking the expectation of \cref{eq: step1_12} conditioned on $F_{m-1,0}$ and upper-bounding $\mbE\left[\ltwo{\tilde{\theta}_{m-1}-\theta^*}^2\right]$ by following similar steps in the previous steps, we obtained
\begin{flalign*}
	&\mbE\left[ \ltwo{\tilde{\theta}_m-\theta^*}^2 \Big| F_{m-1,0}  \right]\\
	&\leq C_1\ltwo{\tilde{\theta}_{m-1}-\theta^*}^2+\frac{C_2}{[0.5\lambda_{A}-3\alpha(1+\gamma)^2]\lambda_A M} + C_3\mbE\left[\ltwo{g_m(\theta^*)}^2\Big| F_{1, 0} \right] \\
	&\leq C_1^2 \ltwo{\tilde{\theta}_{m-2}-\theta^*}^2 + \frac{C_2}{[0.5\lambda_{A}-3\alpha(1+\gamma)^2]\lambda_A M}\sum_{k=0}^{1}C_1^k + C_3\sum_{k=0}^{1}C_1^k\mbE\left[\ltwo{g_{m-k}(\theta^*)}^2\Big| F_{1, 0} \right].
\end{flalign*}
By following the above steps for $(m-1)$ times, we have
\begin{flalign}
	&\mbE\left[ \ltwo{\tilde{\theta}_m-\theta^*}^2 \Big| F_{1,0}  \right]\nonumber\\
	&\leq C_1^m\ltwo{\tilde{\theta}_0-\theta^*}^2 + \frac{C_2}{[0.5\lambda_{A}-3\alpha(1+\gamma)^2]\lambda_A M}\sum_{k=0}^{m-1}C_1^k + C_3\sum_{k=0}^{m-1}C_1^k\mbE\left[\ltwo{g_{m-k}(\theta^*)}^2\Big| F_{1, 0} \right]. \label{eq: step2_1}
\end{flalign}
Then taking the expectation of $\sigma(S)$ (which contains the randomness of the entire sample trajectory) on both sides of \cref{eq: step2_1} yields
\begin{flalign}
	&\mbE\left[ \ltwo{\tilde{\theta}_m-\theta^*}^2 \right]\nonumber\\
	&\leq C_1^m\ltwo{\tilde{\theta}_0-\theta^*}^2 + \frac{C_2}{[0.5\lambda_{A}-3\alpha(1+\gamma)^2]\lambda_A M}\sum_{k=0}^{m-1}C_1^k + C_3\sum_{k=0}^{m-1}C_1^k\mbE\left[\ltwo{g_{m-k}(\theta^*)}^2 \right], \label{eq: step2_2}
\end{flalign}
where the second term in the right hand side of \cref{eq: step2_2} corresponds to the bias error and the third term corresponds to the variance error.

\textbf{Step 3: Bounding the variance error}

For any $0\leq k \leq m-1$, we have
\begin{flalign}
	\ltwo{g_{m-k}(\theta^*)}^2&=\ltwo{\frac{1}{M}\sum_{i=(m-k-1)M}^{(m-k)M-1}g_{x_i}(\theta^*)}^2\nonumber\\
	&=\frac{1}{M^2}\Big(\sum_{i=(m-k-1)M}^{(m-k)M-1}g^\top_{x_i}(\theta^*)\Big)\Big(\sum_{j=(m-k-1)M}^{(m-k)M-1}g_{x_j}(\theta^*)\Big)\nonumber\\
	&=\frac{1}{M^2}\sum_{i=(m-k-1)M}^{(m-k)M-1}\sum_{j=(m-k-1)M}^{(m-k)M-1}g^\top_{x_i}(\theta^*)g_{x_j}(\theta^*)\nonumber\\
	&=\frac{1}{M^2}\sum_{i=j} \ltwo{g_{x_i}(\theta^*)}^2 + \frac{1}{M^2}\sum_{i\ne j} g^\top_{x_i}(\theta^*)g_{x_j}(\theta^*)\nonumber\\
	&\overset{(i)}{\leq} \frac{G^2}{M} + \frac{1}{M^2}\sum_{i\ne j} g^\top_{x_i}(\theta^*)g_{x_j}(\theta^*) \label{eq: step3_1},
\end{flalign}
where $(i)$ follows from Lemma \ref{lemma: gradient_bound}. Consider the expectation of the second term in \cref{eq: step3_1}, which is given by
\begin{flalign}
	\frac{1}{M^2}\sum_{i\ne j}\mbE[g^\top_{x_i}(\theta^*)g_{x_j}(\theta^*)]. \label{eq: step3_2}
\end{flalign}
Without loss of generality, we consider the case when $j>i$ as follows:
\begin{flalign}
	\mbE[g^\top_{x_i}(\theta^*)g_{x_j}(\theta^*)] &= \mbE[ \mbE[g_{x_j}(\theta^*)|P_i]^\top g_{x_i}(\theta^*)]\nonumber\\
	&\leq \mbE[ \ltwo{\mbE[g_{x_j}(\theta^*)|P_i]} \ltwo{g_{x_i}(\theta^*)}]\nonumber\\
	&\leq G\mbE[ \ltwo{\mbE[g_{x_j}(\theta^*)|P_i]} ]\nonumber\\
	&=G\mbE[\ltwo{\mbE[(A_j\theta^*+b_j)|P_i]}]\nonumber\\
	&\leq G\mbE[\ltwo{\mbE[A_j|P_i]\theta^*+\mbE[b_j|P_i]}]\nonumber\\
	&=G\mbE[\ltwo{(\mbE[A_j|P_i]-A)\theta^*+(\mbE[b_j|P_i]-b)}]\nonumber\\
	&\leq G\mbE[\ltwo{(\mbE[A_j|P_i]-A)\theta^*}+\ltwo{\mbE[b_j|P_i]-b}]\nonumber\\
	&\leq G\mbE[\ltwo{\mbE[A_j|P_i]-A}\ltwo{\theta^*}+\ltwo{\mbE[b_j|P_i]-b}]\nonumber\\
	&\leq \kappa G[(1+\gamma)R_\theta+r_{\max}]\rho^{j-i}. \label{eq: step3_3}
\end{flalign}
Substituting \cref{eq: step3_3} into \cref{eq: step3_2}, we obtain
\begin{flalign}
	\frac{1}{M^2}\sum_{i\ne j}\mbE[g^\top_{x_i}(\theta^*)g_{x_j}(\theta^*)]&\leq \frac{\kappa G[(1+\gamma)R_\theta+r_{\max}]}{M^2}\sum_{i\ne j}\rho^{|i-j|}\nonumber\\
	&\leq \frac{\kappa G[(1+\gamma)R_\theta+r_{\max}]}{M^2}(2M\sum_{k=1}^{\ceil{\frac{M}{2}}}\rho^k)\nonumber\\
	&\leq \frac{2\rho\kappa G[(1+\gamma)R_\theta+r_{\max}]}{(1-\rho)M}. \label{eq: step3_4}
\end{flalign}
Then substituting \cref{eq: step3_4} into \cref{eq: step3_1} yields
\begin{flalign}
	\mbE[\ltwo{g_{m-k}(\theta^*)}^2]&\leq \frac{1}{M}\Big(G^2 + \frac{2\rho\kappa G[(1+\gamma)R_\theta+r_{\max}]}{(1-\rho)}\Big)\leq \frac{C_4}{M},\label{eq: step3_6}
\end{flalign}
where $C_4=G^2 + \frac{2\rho\kappa G[(1+\gamma)R_\theta+r_{\max}]}{(1-\rho)}$. Finally, substituting \cref{eq: step3_6} into the accumulated residual variance term in \cref{eq: step2_2}, we have
\begin{flalign}
	C_3\sum_{k=0}^{m-1}C_1^k\mbE\left[\ltwo{g_{m-k}(\theta^*)}^2 \right]\leq \frac{C_3C_4}{M}\sum_{k=0}^{m-1}C_1^k\leq \frac{C_3C_4}{(1-C_1)M}. \label{eq: step3_5}
\end{flalign}

\textbf{Step 4: Combining all error terms}

Finally, substituting \cref{eq: step3_5} and substituting the values of $C_2$ and $C_3$ into \cref{eq: step2_2}, we have
\begin{flalign*}
&\mbE\left[ \ltwo{\tilde{\theta}_m-\theta^*}^2 \right] \leq C_1^m\ltwo{\tilde{\theta}_0-\theta^*}^2 + \frac{3C_4 \alpha + C_2/\lambda_A}{(1-C_1)[0.5\lambda_{A}-3\alpha(1+\gamma)^2]M},
\end{flalign*}
which yields the desired result.

\section{Sample Complexity of TD}\label{app:td_complexity}
The finite-time convergence rate of vanilla TD under i.i.d. and Markovian sampling has been characterized in \cite{bhandari2018finite,srikant2019finite}. However, these studies did not provide the overall computational complexity, i.e., the total number of pseudo-gradient computations to achieve an $\epsilon$-accuracy solution. This section provides such an analysis based on their convergence results for completeness.

\subsection{TD with i.i.d. samples} \label{complexityiid}

Consider the vanilla TD update in \cite{bhandari2018finite}. Following the steps similar to those in \cite{bhandari2018finite} for proving Theorem 2, and letting the constant stepsize $\alpha\leq \min \{ \frac{\lambda_A}{4(1+\gamma)^2}, \frac{2}{\lambda_A}\} $, we have
\begin{flalign*}
	\mbE\ltwo{\theta_t-\theta^*}^2&\leq (1-\frac{1}{2}\lambda_A\alpha)^t\ltwo{\theta_0-\theta^*}^2+\frac{2C_5\alpha}{\lambda_A}\\
	&\leq e^{-\frac{1}{2}\lambda_A\alpha t}\ltwo{\theta_0-\theta^*}^2+\frac{2C_5\alpha}{\lambda_A},
\end{flalign*}
%$E=\frac{4(1+\gamma)^2}{\lambda_A}+4r_{\max}$
where $0<C_5<\infty$ is a constant. Let $\alpha = \min\{\frac{\lambda_A}{4(1+\gamma)^2}, \frac{2}{\lambda_A},\frac{\epsilon \lambda_A}{4C_5}\}$. It can be checked easily that with the total number of iterations at most
\begin{flalign}
	t = \ceil{\frac{2}{\lambda_A\alpha}\log(\frac{2\ltwo{\theta_0-\theta^*}^2}{\epsilon})}&=\ceil{2\max\{ \frac{4(1+\gamma)^2}{\lambda_A^2}, 1,\frac{4C_5}{\epsilon\lambda^2_A} \}\log(\frac{2\ltwo{\theta_0-\theta^*}^2}{\epsilon})} \nonumber\\
	&=\mathcal{O}\left( \Big( \frac{1}{\epsilon\lambda^2_A}\Big)\log\big(\frac{1}{\epsilon}\big)  \right), \label{eq:tdcomp}
\end{flalign}
an $\epsilon$-accurate solution can be attained, i.e., $\mbE\ltwo{\theta_t-\theta^*}^2\leq \epsilon$. Since each iteration requires one pseudo-gradient computation, the total number of pseudo-gradient computations is also given by \cref{eq:tdcomp}.

%(which equals the total number of gradient computations) required by vanilla TD under i.i.d. sampling to attain an $\epsilon$-accuracy solution is at most
%%$e^{-0.5\lambda_A\alpha t}\ltwo{\theta_0-\theta^*}^2\leq \frac{\epsilon}{2}$ and choose $\alpha= \frac{\epsilon \lambda_A}{4E}$, we have
%\begin{flalign*}
%	t= \ceil{\frac{2}{\lambda_A\alpha}\log(\frac{2\ltwo{\theta_0-\theta^*}^2}{\epsilon})}&=\ceil{2\max\{ \frac{4(1+\gamma)^2}{\lambda_A^2}, 1,\frac{4E}{\epsilon\lambda^2_A} \}\log(\frac{2\ltwo{\theta_0-\theta^*}^2}{\epsilon})}\\
%	&=\mathcal{O}\left( \Big( \frac{1}{\epsilon\lambda^2_A}\Big)\log^2\big(\frac{1}{\epsilon}\big)  \right)
%\end{flalign*}

\subsection{TD with Markovian samples} \label{complexitynoniid}
Consider the vanilla TD update in \cite{bhandari2018finite}. Following the steps similar to those in \cite{bhandari2018finite} for proving Theorem 3, and letting the constant stepsize $\alpha\leq \frac{1}{\lambda_A}$, we have
\begin{flalign*}
\mbE\ltwo{\theta_t-\theta^*}^2&\leq (1-\lambda_A\alpha)^t\ltwo{\theta_0-\theta^*}^2+\frac{C_6\alpha}{\lambda_A} + \frac{C_7\alpha\log(\frac{1}{\alpha})}{\lambda_A} \\
&\leq e^{-\lambda_A\alpha t}\ltwo{\theta_0-\theta^*}^2+\frac{C_6\alpha}{\lambda_A} + \frac{C_7\alpha\log(\frac{1}{\alpha})}{\lambda_A}
\end{flalign*}
where $0<C_6<\infty$ and $0<C_7<\infty$ are constants. Now let $\alpha=\min\{\frac{C_8\epsilon}{\log(1/C_8\epsilon)}, \frac{1}{\lambda_A}  \}$ where $C_8 = \lambda_A\min\{\frac{1}{C_6}, \frac{1}{6C_7} \}$, it can be checked easily that with the total number of iterations at most
\begin{flalign}
t&= \ceil{\frac{2}{\lambda_A\alpha}\log(\frac{3\ltwo{\theta_0-\theta^*}^2}{\epsilon})} \nonumber \\
&=\ceil{ \max\{\frac{2}{\min\{\frac{1}{C_6}, \frac{1}{6C_7} \}\lambda_A^2\epsilon}, 2\} \log\left(\frac{1}{ C_8 \epsilon}\right)\log(\frac{3\ltwo{\theta_0-\theta^*}^2}{\epsilon})} \nonumber \\
& = \mathcal{O}\left( \Big( \frac{1}{\epsilon\lambda^2_A}\Big)\log^2\big(\frac{1}{\epsilon}\big)  \right), \label{eq:tdcompmarkov}
\end{flalign}
an $\epsilon$-accurate solution can be attained, i.e., $\mbE\ltwo{\theta_t-\theta^*}^2\leq \epsilon$. Since each iteration requires one pseudo-gradient computation, the total number of pseudo-gradient computations is also given by \cref{eq:tdcompmarkov}.

\end{document}